\newcommand\emptydottedbox{%
    \setlength{\dashlength}{0.5pt}%     % length of dash + gap
    \setlength{\dashdash}{0.5pt}%     % length of dash
    \setlength{\fboxrule}{0.5pt}%     % width of line
    \setlength{\fboxsep}{0pt}%        % separation between box and inner content
    \dbox{\ensuremath{\phantom{x}}}%
    }
\newcommand{\anote}[1]{\footnote{{\bf \color{green}Ali}: {#1}}}
\newcommand{\gnote}[1]{\footnote{{\bf \color{violet}Guy}: {#1}}}
\def\colorful{0}
\newcommand{\violet}[1]{{\color{violet}{#1}}}
\newcommand{\gray}[1]{{\color{gray}{#1}}}
\newcommand{\violet}[1]{{{#1}}}
\newcommand{\gray}[1]{{{#1}}}
\renewcommand{\hat}{\wh} 
\newcommand{\mcC}{\mathcal C}
\newcommand{\mcD}{\mathcal D}
\newcommand{\mcE}{\mathcal E}
\newcommand{\mcM}{\mathcal M}
\newcommand{\mcA}{\mathcal A}
\newcommand{\mcB}{\mathcal B}
\newcommand{\mcX}{\mathcal X}
\newcommand{\mcY}{\mathcal Y}
\newcommand{\mcU}{\mathcal U}
\newcommand{\mcF}{\mathcal F}
\newcommand{\TV}{\dist_{\mathrm{TV}}}
\newcommand{\dadd}{\mathrm{cost}_{\mathrm{add}}}
\newcommand{\cost}{\mathrm{cost}}
\newcommand{\round}{\mathrm{round}}
\newcommand{\Stat}{\textsc{Stat}} 
\newcommand{\citet}{\cite}
\DeclarePairedDelimiter\ceil{\lceil}{\rceil}
\DeclarePairedDelimiter\floor{\lfloor}{\rfloor}
\newcommand{\paren}[1]{\left({#1}\right)}
\newcommand{\doublehat}[1]{% 
\begingroup%
  \let\macc@kerna\z@%
  \let\macc@kernb\z@%
  \let\macc@nucleus\@empty%
  \hat{\raisebox{.35ex}{\vphantom{\ensuremath{#1}}}\smash{\hat{#1}}}%
\endgroup%
}
\title{
On the power of adaptivity in statistical adversaries
% On the adaptivity of statistical adversaries
% Neutralising the Adaptivity of Statistical Adversaries
% Can the adaptivity of statistical adversaries always be neutralized?
%\anote{Possible suggestion that sounds less passive: ``Neutralising the Adaptivity of Statistical Adversaries''}\lnote{I like Ali's suggestion more than the current title.  Regarding the word ``neutralizing", though, I just realized that neither~\Cref{question} nor the additive noise result has a neutralizing flavor to it. If we want to move away from using ``neutralizing" in the title, one possibility is ``On the adaptivity of statistical adversaries"}
\vspace{15pt}}
 \author{Guy Blanc \vspace{8pt} \\ \hspace{-5pt}{\sl Stanford}
 \and \hspace{0pt} Jane Lange \vspace{8pt} \\ \hspace{-4pt}  {\sl MIT}
 \and Ali Malik \vspace{8pt}\\ \hspace{-8pt} {\sl Stanford}
 \and Li-Yang Tan \vspace{8pt} \\ \hspace{-8pt} {\sl Stanford}
 }
\date{\vspace{15pt}\small{\today}}
\begin{document}

\maketitle

\begin{abstract} 
We study a fundamental question concerning adversarial noise models in statistical problems where the algorithm receives i.i.d.~draws from a distribution $\mathcal{D}$. The definitions of these adversaries specify the {\sl type} of allowable corruptions (noise model) as well as {\sl when} these corruptions can be made (adaptivity); the latter differentiates between oblivious adversaries that can only corrupt the distribution $\mathcal{D}$ and adaptive adversaries that can have their corruptions depend on the specific sample $S$ that is drawn from $\mathcal{D}$.

In this work, we investigate whether oblivious adversaries are effectively equivalent to adaptive adversaries, across all noise models studied in the literature.
Specifically, can the behavior of an algorithm~$\mathcal{A}$ in the presence of oblivious adversaries always be well-approximated by that of an algorithm $\mathcal{A}'$ in the presence of adaptive adversaries? Our first result shows that this is indeed the case for the broad class of  {\sl statistical query} algorithms, under all reasonable noise models. We then show that in the specific case of {\sl additive noise}, this equivalence holds for {\sl all} algorithms. Finally, we map out an approach towards proving this statement in its fullest generality, for all algorithms and under all reasonable noise models. 

% We study a basic question concerning  adversarial noise models in statistical problems where the algorithm receives i.i.d.~draws from a distribution $\mathcal{D}$.  The definitions of these  models come with an axis indicating the {\sl adaptivity} of the adversary: oblivious adversaries corrupt the  distribution $\mathcal{D}$, whereas adaptive ones can have their corruptions depend on the sample $S$ that is drawn from $\mathcal{D}$.  

% We investigate whether this axis is in fact unnecessary: can the behavior of an algorithm~$\mathcal{A}$ in the presence of oblivious adversaries always be well-approximated by that of an algorithm $\mathcal{A}'$ in the 
% presence of adaptive adversaries?  We prove that this is the case for all  {\sl statistical query} algorithms within all reasonable noise models, and for all algorithms within the {\sl additive noise} model.  We also map out an approach towards proving this statement in its fullest generality, for all algorithms within all reasonable noise models. 
\end{abstract} 

\thispagestyle{empty}
\newpage 
\setcounter{page}{1}

\section{Introduction}

The possibility of noise pervades most problems in statistical estimation and learning.  In this paper we will be concerned with {\sl adversarial} noise models, as opposed to the class of more benign random noise models.  Adversarial noise models are the subject of intensive study across statistics~\cite{Hub64,Ham71,Tuk75}, learning theory~\cite{Val85,Hau92,KL93,KSS94,BEK02}, and algorithms~\cite{DKKLMS19,LRV16,CSV17,DK19}.  The definition of each model specifies:
\begin{enumerate} 
\item The {\sl type} of corruptions allowed. For example, the adversary may be allowed to add arbitrary points (additive noise~\cite{Hub64,Val85}), or in the context of supervised learning, allowed to change the labels in the data (agnostic noise~\cite{Hau92,KSS94}).  
\item The {\sl adaptivity} of the adversary. 
\end{enumerate} 

The latter is the focus of our work. Consider any statistical problem where the algorithm is given i.i.d.~draws from a distribution~$\mathcal{D}$.  On one hand we have {\sl oblivious} adversaries: such an adversary corrupts $\mathcal{D}$ to a different distribution $\wh{\mathcal{D}}$, from which the algorithm then receives a sample.  On the other hand we have {\sl adaptive} adversaries: such an adversary first draws a sample $\bS$ from $\mathcal{D}$, and upon seeing the specific outcomes in~$\bS$, corrupts it to $\wh{\bS}$ which is then passed on to the algorithm.  One can further consider adversaries with intermediate adaptive power, but we think of this as a dichotomy for now. 
A coupling argument shows that adaptive adversaries are at least as powerful as oblivious ones~\cite{DKKLMS19,ZJS19}.  In this work we investigate whether they can be {\sl strictly} more powerful. % \gray{ Throughout this work, we encode each type of allowable corruption (e.g.~additive noise, additive and subtractive noise, label-only noise in learning, etc.)~as a distance function $\rho$ between distributions.}

\begin{question} 
\label{question} 
Fix the type of corruptions allowed.  Is it true that for any algorithm $\mathcal{A}$, there is an algorithm~$\mathcal{A}'$ whose behavior in the presence of adaptive adversaries well-approximates that of $\mathcal{A}$ in the presence of oblivious adversaries? 
\end{question} 

% \gray{ 
% \begin{question} 
% %\label{question} 
% Fix a distance function $\rho$.  Is it true that for all algorithms $\mathcal{A}$, there is an algorithm~$\mathcal{A}'$ whose behavior in the presence of adaptive $\rho$-adversaries well-approximates that of $\mathcal{A}$ in the presence of oblivious $\rho$-adversaries? 
% \end{question} 
% }

% \gray{ 
% \begin{question} 
% %\label{question} 
% Given an algorithm $\mathcal{A}$ that is resilient to oblivious adversaries, is there an equivalent algorithm $\mathcal{A}'$ that is resilient to adaptive adversaries? 
% \end{question} 
% }
 
The distinction between oblivious and adaptive adversaries is frequently touched upon in works concerning statistical problems.  Sometimes this distinction is brought up in service of emphasizing that the algorithms given in these works are robust  against adaptive adversaries; other times it is brought up when the algorithms are shown to be robust against oblivious adversaries, and the viewpoint of an adaptive corruption process is provided as intuition for the noise model.  However, the relative power of oblivious and adaptive adversaries in the statistical setting has not been systematically considered in the literature. 

\subsection{\violet{Our contributions}} 

\subsubsection{\violet{A unified framework for characterizing data adversaries.}}

To reason generally about~\Cref{question}, we associate every type of allowable corruptions with a cost function $\rho$ between distributions.  An oblivious $\rho$-adversary therefore corrupts $\mathcal{D}$ to some $\wh{\mathcal{D}}$ that is $\eta$-close with respect to $\rho$, meaning that  $\rho(\mathcal{D}, \wh{\mathcal{D}}) \le \eta$. An adaptive $\rho$-adversary corrupts a sample $S$ drawn from $\mathcal{D}$ to some $\wh{S}$ such that the uniform distribution over $\wh{S}$ is $\eta$-close with respect to $\rho$ to that over $S$.\violet{For example, when $\rho(\mathcal{D}, \hat{\mathcal{D}})$ is the total variation distance between $\mcD$ and $\wh{\mcD}$, the resulting adaptive adversary represents {\sl nasty noise} as defined in~\citet{BEK02}, where the adversary is allowed to change an arbitrary $\eta$-fraction of the points in $S$. We discuss this framework in more detail in \Cref{section:models}}.

\subsubsection{Yes to~\Cref{question} for all SQ algorithms} 

Our first result is an affirmative answer to~\Cref{question} for the broad class of {\sl statistical query} (SQ) algorithms.  Our proof  will require a mild assumption on this cost function, the precise statement of which we defer to the body of the paper.  For now, we simply refer to cost functions satisfying this assumption as ``reasonable", and mention that it is easily satisfied by all standard noise models, and can be seen to be necessary for our result to hold.      

\begin{theorem}[\violet{SQ algorithms are robust to adaptive adversaries}]\footnote{\violet{See \Cref{thm:sq-formal} for the formal version of this theorem.}}
\label{thm:SQ} 
For all reasonable cost functions $\rho$ and SQ algorithms $\mathcal{A}$, the behavior of $\mcA' \coloneqq \mathcal{A}$ in the presence of adaptive $\rho$-adversaries well-approximates that of $\mathcal{A}$ in the presence of oblivious adversaries. 
\end{theorem}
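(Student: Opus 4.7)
My plan starts from the observation that an SQ algorithm interacts with its sample only through empirical estimates of bounded queries $\phi$, so it suffices to couple, query by query, the answers returned in the two models. Fix any adaptive adversarial strategy $\sigma$, which maps a sample $\bS \sim \mathcal{D}^n$ to a corrupted sample $\sigma(\bS)$ with $\rho(\mathrm{unif}(\bS),\mathrm{unif}(\sigma(\bS)))\le\eta$. The central object in the reduction is the mixture
$$\wh{\mathcal{D}}_\sigma \;:=\; \mathbb{E}_{\bS\sim\mathcal{D}^n}\bigl[\mathrm{unif}(\sigma(\bS))\bigr],$$
i.e.\ the expected corrupted empirical distribution. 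Any ``reasonable'' cost function $\rho$ should be jointly convex in its arguments (as are TV, KL, $\chi^2$, and the other standard divergences used to model corruption), so by Jensen's inequality, together with $\mathbb{E}_\bS[\mathrm{unif}(\bS)]=\mathcal{D}$,
$$\rho(\mathcal{D},\wh{\mathcal{D}}_\sigma) \;\le\; \mathbb{E}_\bS\!\left[\rho(\mathrm{unif}(\bS),\mathrm{unif}(\sigma(\bS)))\right] \;\le\; \eta.$$
Thus $\wh{\mathcal{D}}_\sigma$ is itself a valid oblivious corruption of $\mathcal{D}$, and it is the candidate against which I will benchmark the adaptive strategy $\sigma$.

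Next I would promote this ``in expectation'' equivalence to a distributional coupling of SQ oracle answers. For a single bounded query $\phi$, the adaptive-case answer $\mathbb{E}_{\mathrm{unif}(\sigma(\bS))}[\phi]$ has mean exactly $\mathbb{E}_{\wh{\mathcal{D}}_\sigma}[\phi]$, so the task is to show concentration. The reasonable-cost hypothesis should include (or imply) a pointwise bound of the form
$$\left|\mathbb{E}_{\mathrm{unif}(\sigma(\bS))}[\phi]-\mathbb{E}_{\mathrm{unif}(\bS)}[\phi]\right| \;\le\; \eta,$$
which is immediate for TV and reflects data-processing for other divergences. Combined with a Hoeffding bound $|\mathbb{E}_{\mathrm{unif}(\bS)}[\phi]-\mathbb{E}_\mathcal{D}[\phi]|\le O(1/\sqrt{n})$ with probability $1-\delta$, this places the adaptive answer within $O(1/\sqrt{n})$ of $\mathbb{E}_{\wh{\mathcal{D}}_\sigma}[\phi]$ after cancelling the $\eta$ terms. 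The oblivious-case answer---an empirical mean of a fresh $n$-sample from $\wh{\mathcal{D}}_\sigma$---also concentrates at $\mathbb{E}_{\wh{\mathcal{D}}_\sigma}[\phi]$ with the same $O(1/\sqrt{n})$ slack, so the two oracle answers agree up to $O(1/\sqrt{n})$ with high probability.

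To finish I would chain this single-query coupling across all $q$ (possibly adaptively chosen) queries issued by $\mathcal{A}$: at each step the history of prior answers induces a fresh mixture $\wh{\mathcal{D}}_{\sigma,h}$ that remains a valid oblivious corruption by the same Jensen argument, and the new answers agree up to $O(1/\sqrt{n})$ with probability $1-\delta/q$. A union bound combined with $\mathcal{A}$'s intrinsic SQ tolerance then converts this per-query coupling into closeness of the two full output distributions of $\mathcal{A}$ in total variation. I expect the main obstacle to be isolating the precise analytic properties of $\rho$ that simultaneously justify the mixture inequality (joint convexity) and the concentration of the adversarially corrupted empirical mean (a stability-under-empirical-sampling bound). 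This should be the content of the ``reasonable'' hypothesis deferred to the body of the paper, and failures of these properties should correspond exactly to cost functions for which the theorem genuinely breaks.
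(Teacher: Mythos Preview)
Your mixture $\wh{\mcD}_\sigma = \mathbb{E}_{\bS}[\mcU(\sigma(\bS))]$ and the accompanying Jensen step are essentially the paper's \Cref{fact:sample-to-dist} (the paper's ``reasonable'' assumption is closure under mixtures, a consequence of your joint convexity). But the concentration step contains a genuine gap that no hypothesis on $\rho$ will repair. Take $\rho = \TV$, $\mcD$ uniform on $\{0,1\}$, $\phi$ the identity, and let $\sigma$ flip $\eta n$ zeros to ones whenever $\phi(\bS) \ge \tfrac12$ and $\eta n$ ones to zeros otherwise. By symmetry $\phi(\wh{\mcD}_\sigma)=\tfrac12$, yet $\phi(\sigma(\bS))$ equals $\tfrac12 \pm \eta + O(1/\sqrt{n})$, each with probability roughly $\tfrac12$: the $\eta$ terms do not cancel, because the adversary's shift $\phi(\sigma(\bS)) - \phi(\bS)$ is itself a nondegenerate random variable of scale $\eta$, and you have controlled only its mean, not its fluctuations. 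Both values $\tfrac12\pm\eta$ are of course realized by \emph{some} oblivious corruption, just not by the single $\wh{\mcD}_\sigma$ you committed to. Your multi-query chaining then compounds the problem by producing a different $\wh{\mcD}_{\sigma,h}$ per query, whereas the oblivious model demands one fixed $\wh{\mcD}$ consistent with all $k$ answers simultaneously.

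The paper resolves both issues with two ingredients your plan does not contain. First, rather than fixing a strategy $\sigma$, it applies McDiarmid directly to $f(S) = \sup_{\wh{S}\text{ $\eta$-close to }S} \Psi(\wh{S})$; the bounded-differences property of $f$ comes from a separate ``$\ell$-locality'' assumption on $\rho$ (\Cref{def:local}), and the bound $\mathbb{E}[f(\bS)] \le \sup_{\wh{\mcD}\text{ $\eta$-close}} \Psi(\wh{\mcD})$ is precisely where closure-under-mixtures enters. This shows that with high probability over $\bS$, \emph{no} adaptive corruption can push any single query more than $O(1/\sqrt{n})$ above its oblivious supremum. Second, for $k$ queries the paper invokes the separating hyperplane theorem (\Cref{lem:many-sqs-to-one}): if a given rounded answer vector $(v_1,\ldots,v_k)$ is unattainable by every oblivious $\wh{\mcD}$, the separating normal yields a single combined query $\Psi$ on which any adaptive corruption realizing those answers would have to exceed the oblivious supremum; the single-query bound plus a union bound over the $O(1/\tau)^k$ rounded answer vectors then finishes. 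The sup-based concentration and the hyperplane reduction are exactly the missing pieces.
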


% \gray{ 
% \begin{theorem} 
% \label{thm:SQ} 
% Every SQ algorithm that is resilient to oblivious adversaries is also resilient to adaptive ones. 
% \end{theorem} 
% } 
A key ingredient in our proof of \Cref{thm:SQ} is a novel reduction, using duality, from a $k$-query SQ algorithm to a single ``representative" SQ.

In other words, the SQ framework neutralizes adaptive adversaries into oblivious ones, and in the context of~\Cref{question}, we can take $\mathcal{A}'$ to be $\mathcal{A}$ itself. Looking ahead to our other results, we remark that such a statement cannot be true for all algorithms: there are trivial examples of (non-SQ) algorithms $\mathcal{A}$ for which $\mathcal{A}'$ has to be a modified version of $\mathcal{A}$.\footnote{One such example is $\mathcal{D} = \mathrm{Bernoulli}(\frac1{2})$ and $\mathcal{A} = \Ind[\text{number of $1$'s in the sample is $0$ mod 100}]$.  As the sample size grows, an oblivious adversary can barely change the acceptance probability of $\mathcal{A}$, whereas an adaptive one can change it completely.} 

\Cref{thm:SQ}  adds to the already-deep connections between the SQ framework and noise tolerance.  The SQ framework was originally introduced in learning theory, where it continues to be influential in the design of learning algorithms that are resilient to {\sl random classification noise}~\cite{Kea98}. %\gray{Recent work of Feldman, Grigorescu, Reyzin, Vempala, and Xiao~\cite{FGRVX17} generalizes the framework beyond learning theory to arbitrary computational problems over distributions, which is also the setting of our work. } \gray{ The SQ framework encompasses most algorithms in machine learning and optimization, with the only known exception being Gaussian elimination---we refer the reader to~\cite{FGRVX17,FGV17,Rey20} for further discussions of this point.}  
Most relevant to the topic of this paper, to our knowledge across all noise models, all existing statistical algorithms that have been shown to be robust to adversarial noise can be cast in the SQ framework.

\subsubsection{Yes to~\Cref{question} for additive  noise} 

Our second result is an affirmative answer to~\Cref{question} for one of the most natural  types of corruptions: 

\begin{theorem}\footnote{\violet{See \Cref{thm:add-formal} for the formal version}}
\label{thm:add} 
The answer to~\Cref{question} is ``yes" for additive noise. 
\end{theorem}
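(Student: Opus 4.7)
The plan is to take $\mathcal{A}'$ to be a randomized sub-sampling wrapper around $\mathcal{A}$. On an input $\wh{S}$ of size $n$ from the adaptive additive adversary -- which decomposes as $S \cup E$ with $S \sim \mathcal{D}^{(1-\eta)n}$ a clean i.i.d.\ sample and $E = \mathsf{Adv}(S)$ an adversarial corruption multiset of size $\eta n$ -- the algorithm $\mathcal{A}'$ draws a uniformly random size-$m$ subset $T \subseteq \wh{S}$ without replacement and returns $\mathcal{A}(T)$. Here $m$ is the sample size originally desired by $\mathcal{A}$, and the ratio $n/m$ will drive the sample-size blow-up in the formal statement.

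For the analysis, I would pair $\mathcal{A}'$ against the randomized oblivious adversary $\mathsf{Obl}^\star$ which independently draws a ghost sample $\widetilde{S} \sim \mathcal{D}^{(1-\eta)n}$, commits to $\wh{\mathcal{D}} = (1-\eta)\mathcal{D} + \eta \cdot \mathrm{Unif}(\mathsf{Adv}(\widetilde{S}))$, and delivers $m$ i.i.d.\ draws $T^\star$ from $\wh{\mathcal{D}}$. By the data-processing inequality it suffices to show $\mathrm{TV}(T, T^\star) = o(1)$. Splitting each multiset into its clean and corrupt parts, the sizes match up to $O(m^2/n)$ TV (Hypergeometric vs.\ Binomial), and by exchangeability of the i.i.d.\ sample $S$, the marginal of the clean portion of $T$ is exactly $H$ i.i.d.\ $\mathcal{D}$-draws, matching the clean portion of $T^\star$. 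The corrupt portions have the same marginal law in both processes: a size-$(m-H)$ empirical sub-sample of $\mathsf{Adv}$ applied to an i.i.d.\ $\mathcal{D}^{(1-\eta)n}$-sample.

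The main obstacle is controlling the \emph{joint} distribution of the clean and corrupt parts of $T$: under the adaptive process they are correlated through the shared $S$, whereas under $\mathsf{Obl}^\star$ they are conditionally independent given $\widetilde{E}$. I would close this gap via a coupling that identifies $S_{\mathrm{unseen}} = S \setminus T_S$ with the corresponding portion of $\widetilde{S}$ while letting $\widetilde{S}$'s ``seen'' $H$ coordinates be fresh i.i.d.\ $\mathcal{D}$-draws. Under this coupling, $E = \mathsf{Adv}(S)$ and $\widetilde{E} = \mathsf{Adv}(\widetilde{S})$ arise from inputs to $\mathsf{Adv}$ agreeing on all but $H$ of $(1-\eta)n$ coordinates, and the needed bound reduces to a stability claim: for any adversarial $\mathsf{Adv}$, the distribution of a uniform size-$(m-H)$ sub-sample of $\mathsf{Adv}(S)$ changes by $o(1)$ in TV when $O(m)$ uniformly chosen coordinates of $S$ are replaced by fresh i.i.d.\ draws. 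The technical heart is establishing this uniformly over worst-case $\mathsf{Adv}$; I expect it to follow from a leave-$m$-out / channel-capacity argument exploiting that the algorithm observes only $m - H = O(m)$ out of $\eta n$ corruption points, so the information about which coordinates of $S$ were replaced can be carried only through a ``narrow'' $O\bigl(\log \binom{\eta n}{m-H}\bigr)$-bit channel. This bound then sets the polynomial blow-up of $n$ relative to $m$ in the final statement.
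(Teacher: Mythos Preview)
Your high-level algorithm is exactly the paper's: take $\mathcal{A}' = \mathcal{A}\circ\Phi_{*\to n}$, i.e.\ draw a larger sample, let the adaptive adversary corrupt it, then subsample and run $\mathcal{A}$. The analysis, however, has a genuine gap.

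The stability claim you isolate---that for every $\mathsf{Adv}$, a size-$O(m)$ subsample of $\mathsf{Adv}(S)$ is $o(1)$-TV stable under replacing $O(m)$ random coordinates of $S$ by fresh draws---is false in the regime you need, and your channel-capacity heuristic mislocates the channel. You count only the $O\bigl(\log\binom{\eta n}{m-H}\bigr)$ bits coming from \emph{which} corruption indices are subsampled, but the adversary encodes information about $S$ into the \emph{values} of the corruption points themselves; the subsampled corruptions therefore carry $\Theta(m\log|\mathcal{X}|)$ bits, which is more than enough to correlate $T_E$ with $T_S$. Concretely, the paper's own lower bound (\Cref{thm:subsample}) exhibits such an adversary: with $\mathcal{X}=\{\pm1\}^d$ and corruptions equal to coordinatewise majorities of chunks of $S$, the adaptive adversary forces $\mathcal{A}_{\mathrm{sub}}$ to accept unless $n_{\text{your notation}} = \Omega_\eta(m\log|\mathcal{X}|/\log^2 m)$. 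So any analysis that yields a domain-independent polynomial blow-up $n=\mathrm{poly}(m)$, as your coupling/leave-$m$-out argument would, must be wrong.

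The paper's route is quite different from a TV coupling. It first shows (\Cref{lem:design-add-estimator}) that for additive noise the supremum over all oblivious contaminations $\mathcal{E}$ is approximated to $\pm\eps$ by a maximum over a \emph{finite} family $\mathcal{F}$ of stochastic maps $\boldf^{(T)}:x\mapsto(1-\eta)\delta_x+\eta\,\mathcal{U}(T)$ with $T\in\mathcal{X}^{n^2/\eps}$, so $|\mathcal{F}|=|\mathcal{X}|^{n^2/\eps}$. It then proves (\Cref{lem:special-estimator-to-equiv}) via McDiarmid that for each fixed $\boldf\in\mathcal{F}$ the functional $S\mapsto\Ex_{\bS'\sim\mathcal{U}(S)^n}[\mathcal{A}(\boldf(\bS'))]$ concentrates, and union-bounds over $\mathcal{F}$. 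The $\log|\mathcal{F}|=\Theta((n^2/\eps)\log|\mathcal{X}|)$ in that union bound is precisely where the necessary $\log|\mathcal{X}|$ dependence enters, giving $M=O(n^4\log|\mathcal{X}|/\eps^2)$. There is no attempt to couple the adaptive and oblivious outputs directly; the argument instead sandwiches both between the finite max over $\mathcal{F}$.
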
 

In additive noise, an oblivious adversary corrupts $\mathcal{D}$ to $\wh{\mathcal{D}} = (1-\eta)\mathcal{D} + \eta\mathcal{E}$ for an arbitrary distribution $\mathcal{E}$ of their choosing. An adaptive adversary, on the other hand, gets to inspect the sample $S$ drawn from $\mathcal{D}$, and adds $\frac{\eta}{1-\eta}|S|$ many arbitrary points of their choosing to $S$.  The oblivious version of additive noise was introduced by Huber~\cite{Hub64} and has become known as Huber's contamination model; the adaptive version is commonly called ``data poisoning" in the security and machine learning literature.  

Additive noise also captures the well-studied {\sl malicious noise} (\Cref{def:mal-noise}) from learning theory~\cite{Val85} (see also~\cite{KL93}). 
\Cref{thm:add} therefore shows that Huber's contamination model, the malicious noise model, and the adaptive version of additive noise, are in fact all equivalent.  Our proof of~\Cref{thm:add} is constructive: we give an explicit description of how $\mathcal{A}'$ can be obtained from $\mathcal{A}$, and $\mathcal{A}'$ preserves the computational and sample efficiency of $\mathcal{A}$ up to polynomial factors.

\subsubsection{Yes to~\Cref{question} in its fullest generality?  An approach via subsampling} 

Our proof of~\Cref{thm:add} is actually an instantiation of a broader approach towards answering~\Cref{question} affirmatively in its fullest generality: showing that the answer is ``yes" for all (reasonable) types of allowable corruptions and all algorithms.  We introduce the following definition:

\begin{definition}[Neutralizing filter]
Let $\rho$ be a cost function and $\mathcal{A}$ be an algorithm for a statistical problem over a domain $\mcX$.  We say that a randomized function $\Phi : \mcX^*\to \mcX^*$ is a {\sl neutralizing filter for $\mathcal{A}$ with respect to $\rho$} if the following holds.  For all distributions $\mathcal{D}$, with high probability over the draw of  $\bS$ from $\mathcal{D}$, the behavior of \begin{center} $\mathcal{A}$ on $\Phi(\hat{\bS})$, where $\hat{\bS}$ is a corruption of $\bS$ by an adaptive $\rho$-adversary,\end{center} well-approximates the behavior of \begin{center}$\mathcal{A}$ on a sample from $\hat{\mathcal{D}}$, where $\hat{\mathcal{D}}$ is a corruption of $\mathcal{D}$ by an oblivious $\rho$-adversary.\end{center} 
\end{definition}

Perhaps the most natural filter in this context is the {\sl subsampling} filter.  For an $n$-sample algorithm $\mathcal{A}$, we request for a larger sample of size $m\ge n$, allow the adaptive adversary to corrupt it, and then run $\mathcal{A}$ on a size-$n$ subsample of the corrupted size-$m$ sample.  We call this the ``$m \to n$ subsampling filter" and denote it as $\Phi_{m\to n}$.  The hope here is for the randomness of the subsampling step to neutralize the adaptivity of the adversary.

The efficiency of the subsampling filter is measured by the overhead in sample complexity that it incurs, i.e.~how much larger $m$ is relative to $n$.  Subsampling from a  sample that is roughly the size of the domain of course makes adaptive adversaries equivalent to oblivious ones, but this renders a sample-efficient algorithm inefficient.  We are interested whether the subsampling filter can be effective while only incurring a mild overhead in sample complexity.  %\gray{ Indeed, some prior work has shown this to be that case, i.e.~that the subsampling filter is both effective and efficient, for {\sl specific} algorithms~\cite{BEK02}.}

We propose the following conjecture as a general approach towards answering~\Cref{question}:

\begin{conjecture}
\label{conj:bold} 
For all reasonable cost functions $\rho$ and $n$-sample algorithms $\mcA$, the subsampling filter $\Phi_{m\to n}$ is a neutralizing filter for $\mcA$ with respect to $\rho$ with $m = \poly(n,\log(|\mcX|))$. 
\end{conjecture}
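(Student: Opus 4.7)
My plan is to exhibit, for every adaptive $\rho$-adversary $\mathsf{Adv}$, a randomized oblivious $\rho$-adversary inducing essentially the same distribution on $\mcA$'s input via an internal simulation. The first step is a subsampling-to-i.i.d.\ coupling: if $\bS \sim \mcD^m$ and $\wh{\bS} = \mathsf{Adv}(\bS)$, then a uniformly random size-$n$ subset of $\wh{\bS}$ is $O(n^2/m)$-close in total variation to $n$ i.i.d.\ draws from the (random) empirical distribution $\widetilde{\mcD} := \mathrm{Unif}(\wh{\bS})$, by the standard birthday-paradox argument. Taking $m \gg n^2$ renders this step lossless regardless of the cost function, so it suffices to reason about $n$ i.i.d.\ draws from $\widetilde{\mcD}$.

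I would then propose the following oblivious adversary: privately draw a phantom sample $\bS' \sim \mcD^m$, run $\mathsf{Adv}$ on it to obtain $\wh{\bS}'$, and declare the corrupted distribution to be $\mathrm{Unif}(\wh{\bS}')$. By design, the distribution of $\mcA$'s input in this oblivious game is identical to its distribution in the adaptive-plus-subsampling game after the coupling above. The entire conjecture therefore reduces to a single question: with probability $1 - o(1)$ over $\bS \sim \mcD^m$, is $\widetilde{\mcD}$ a valid oblivious corruption, i.e., $\rho(\mcD, \widetilde{\mcD}) \le \eta + o(1)$?

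This final verification is where I expect the main obstacle. The adaptive guarantee supplies $\rho(\mathrm{Unif}(\bS), \widetilde{\mcD}) \le \eta$, so combined with a triangle-type property of $\rho$ it would suffice to establish an empirical concentration bound of the form $\rho(\mcD, \mathrm{Unif}(\bS)) = o(1)$ using $m = \poly(n, \log|\mcX|)$ draws. Such concentration is well behaved for smoothed or divergence-style cost functions but fails exactly where the triangle route is most tempting: for additive noise, $\dadd(\mcD, \mathrm{Unif}(\bS)) = 1$ whenever $\mcD$ is atomless. The proof of \Cref{thm:add} sidesteps this by exploiting a mixture decomposition specific to $\dadd$, whereby a random subsample of $\wh{\bS}$ is genuinely a mixture of ``real'' draws from $\mcD$ and ``injected'' points, and therefore coincides with a Huber contamination $(1-\eta)\mcD + \eta\mcE$ for $\mcE$ the marginal distribution of the injected points. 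Carrying out \Cref{conj:bold} in full generality thus seems to require pinning down a structural property of ``reasonable'' $\rho$ that either supports the empirical concentration bound above or admits a mixture-style decomposition analogous to the additive case, and then verifying that every standard noise model falls into one of the two regimes at sample complexity $\poly(n, \log|\mcX|)$. Formulating this dichotomy cleanly---rather than the individual coupling or concentration steps---is where I anticipate the real difficulty.
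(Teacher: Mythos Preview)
The statement you are attempting is \Cref{conj:bold}, which the paper explicitly poses as an \emph{open conjecture}; there is no proof in the paper to compare against. Your writeup is a plan of attack on an open problem, and you yourself correctly flag where it stalls.

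Two remarks on the obstacles. First, your triangle-inequality route needs both $\rho(\mcD,\mcU(\bS)) = o(1)$ with $\poly(n,\log|\mcX|)$ samples and some approximate triangle inequality for $\rho$. Neither is part of the paper's notion of ``reasonable'' (closed under mixtures, \Cref{def:closed-under-mixtures}, plus $\ell$-locality, \Cref{def:local}); additive and subtractive noise violate both simultaneously, so this branch of your proposed dichotomy is narrower than it first appears, and the structural property you say you would need to ``pin down'' would in fact be a genuine strengthening of the hypotheses, not a consequence of them.

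Second, your account of how the paper handles additive noise is not quite what happens in \Cref{thm:add-formal}. The paper does not exhibit an oblivious $\wh{\mcD}$ matching the adaptive output. Instead it discretizes the range of oblivious additive adversaries into a finite family $\mcF$ of stochastic corruption maps (\Cref{lem:design-add-estimator}) and then argues, via McDiarmid concentration over this finite family (\Cref{lem:special-estimator-to-equiv}) together with \Cref{cor:amax-sufficient}, that $\Obliviousmax_{\rho,\eta}(\mcA,\mcU(\bS),n)$ is close to $\Obliviousmax_{\rho,\eta}(\mcA,\mcD,n)$ with high probability over $\bS$. The comparison is between \emph{worst-case acceptance probabilities} rather than between specific corrupted distributions, and no mixture decomposition of a subsample into ``real'' and ``injected'' points is invoked. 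Whether an analogous finite discretization of the oblivious adversary's range exists for general reasonable $\rho$ is precisely what the paper leaves open.
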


We obtain~\Cref{thm:add} by proving~\Cref{conj:bold} in the case of $\rho$ being additive noise. %(There are also analogues of~\Cref{conj:bold} and our results for infinite domains\gnote{Don't think there is an analogue of this conjecture for infinite domains (as then even $\log(|\mcX|)$ is infinite). Just our SQ stuff}; see~\Cref{sec:technical remarks}. 
While we have not been able to prove~\Cref{conj:bold} for all $\rho$'s and all algorithms, we can show the following:

\begin{theorem}[\violet{If it is possible, subsampling neutralizes adaptivity}]\footnote{\violet{See \Cref{thm:if-can-be-done-formal} for the formal version}}
\label{thm:if-can-be-done} 
Let $\rho$ be a cost function and $\mathcal{A}$ be an $n$-sample algorithm.  Suppose there is an $m$-sample algorithm $\mathcal{A}'$ whose behavior in the  presence of adaptive $\rho$-adversaries well-approximates that of $\mathcal{A}$ in the presence of oblivious $\rho$-adversaries.  Then $\Phi_{M\to n}$ is a neutralizing filter for $\mathcal{A}$ with respect to $\rho$ with $M = O(m^2)$. 
\end{theorem}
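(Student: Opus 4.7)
The plan is to reduce the $M$-sample adaptive corruption to an $m$-sample adaptive corruption via uniform subsampling, then apply the hypothesis on $\mcA'$ to produce a matching oblivious corruption, and finally argue that the residual $\Phi_{m\to n}$ step does not destroy this match.

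First I would exploit the distributional identity $\Phi_{M\to n} \stackrel{d}{=} \Phi_{m\to n} \circ \Phi_{M\to m}$, which holds because composing two uniform subsamplings is itself a uniform subsampling. For any adaptive $\rho$-adversary $\mcT$ on $M$-samples, this rewrites $\mcA(\Phi_{M\to n}(\mcT(\bS_M)))$ as $\mcA(\Phi_{m\to n}(\Phi_{M\to m}(\mcT(\bS_M))))$, splitting the filter into a coarse step down to $m$ samples followed by a fine step to $n$ samples.

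Second I would introduce an induced adaptive $m$-sample adversary $\mcT^\star$, defined as follows: on input $\bS_m \sim \mcD^m$, internally draw $M-m$ fresh i.i.d.\ samples from $\mcD$, concatenate with $\bS_m$ to form $\bS_M$, apply $\mcT$, and return a uniform $m$-subsample. Since concatenating an i.i.d.\ $m$-sample with an i.i.d.\ $(M-m)$-sample yields an i.i.d.\ $M$-sample, the joint distribution of $\mcT^\star(\bS_m)$ matches that of $\Phi_{M\to m}(\mcT(\bS_M))$. The choice $M = O(m^2)$ enters through a birthday-bound argument: the locations of the original $\bS_m$-samples inside the uniform $m$-subsample are typically distinct from those of the internally drawn samples, ensuring that on a typical $\bS_m \sim \mcD^m$ the empirical distribution of $\mcT^\star(\bS_m)$ is $\rho$-close to that of $\bS_m$, with $o(1)$ slack arising from empirical concentration in $\rho$ (available because $\rho$ is ``reasonable''). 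This certifies $\mcT^\star$ as a valid adaptive $\rho$-adversary with high probability.

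Third I would apply the theorem's hypothesis to $\mcT^\star$ to obtain an oblivious corruption $\hat{\mcD}$ of $\mcD$ satisfying $\mcA'(\mcT^\star(\bS_m)) \approx \mcA(\hat{\mcD}^n)$ in output distribution. Together with the first two steps this reduces the proof to showing $\mcA(\Phi_{m\to n}(\mcT^\star(\bS_m))) \approx \mcA'(\mcT^\star(\bS_m))$ in output distribution.

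The main obstacle is this last approximation, which bridges $\mcA$ composed with a random $n$-subsample and the a priori unrelated algorithm $\mcA'$ on the full $m$-sample. I expect to close it by a second invocation of the hypothesis, applied to a ``padded'' adversary that lifts the $n$-subsample back up to size $m$; this should pin down $\mcA'$'s behavior on $\mcT^\star$-like inputs as agreeing with $\mcA \circ \Phi_{m\to n}$ up to the ``well-approximates'' tolerance. Carefully accounting for the slack from the birthday argument, the empirical concentration in $\rho$, and the two uses of the hypothesis (leaning on the reasonableness of $\rho$) should close the argument and deliver the claimed $M = O(m^2)$ bound.
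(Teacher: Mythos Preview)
Your reduction has a genuine gap in the second step. You need $\mcT^\star$ to be a valid adaptive $\rho$-adversary on $\bS_m$, i.e., $\rho(\mcU(\bS_m), \mcU(\mcT^\star(\bS_m))) \le \eta$ with high probability. But when $M = \Theta(m^2)$, a uniform $m$-subsample of $\bS_M$ (or of its corruption) hits each of the original $\bS_m$-points with probability only $O(1/m)$, so $\mcT^\star(\bS_m)$ is almost entirely composed of points coming from the internally drawn fresh block, not from $\bS_m$. For essentially any $\rho$ of interest this forces $\rho(\mcU(\bS_m), \mcU(\mcT^\star(\bS_m)))$ to be large: with $\rho = \TV$ (nasty noise) and $\mcD$ spread over more than $m^2$ atoms, two nearly-disjoint size-$m$ multisets have TV distance close to $1$; with additive noise, $\mcU(\hat S)$ must contain $\mcU(S)$ as a $(1-\eta)$-component, which fails when the multisets barely overlap. ``Empirical concentration in $\rho$'' does not rescue this, because the adaptive-adversary constraint is closeness of two \emph{specific} empirical measures, not closeness of each to the population $\mcD$. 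Your fifth step has a related problem: the hypothesis only equates the \emph{range} of acceptance probabilities of $\mcA'$ under adaptive corruption with that of $\mcA$ under oblivious corruption; it gives no pointwise relationship between $\mcA'$ and $\mcA\circ\Phi_{m\to n}$ on a common input, and the ``padded adversary'' you sketch would face the same disjointness obstruction.

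The paper's argument is structurally different and never constructs an induced $m$-sample adversary. It first rewrites
\[
\Adaptivemax_{\rho,\eta}(\mcA\circ\Phi_{*\to n},\mcD,M)=\Ex_{\bS\sim\mcD^M}\big[\Obliviousmax_{\rho,\eta}(\mcA,\mcU(\bS),n)\big],
\]
reducing the task to showing that $\Obliviousmax_{\rho,\eta}(\mcA,\cdot,n)$ takes nearly the same value on $\mcD$ as on a typical empirical $\mcU(\bS)$. The existence of $\mcA'$ is used only to build a sample-efficient \emph{estimator} of this quantity from $m'=O(m\log(1/\eps)/\eps^2)$ draws (by averaging the adaptive sup over disjoint size-$m$ blocks). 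The proof then proceeds by contradiction: if the expectation over $\bS\sim\mcD^M$ were far from the value at $\mcD$, repeatedly running the estimator would distinguish i.i.d.\ draws from $\mcD^{m'}$ versus draws from $\Phi_{M\to m'}\circ\mcD^M$, contradicting the birthday bound $\TV(\mcD^{m'},\Phi_{M\to m'}\circ\mcD^M)\le \binom{m'}{2}/M$. The choice $M=O((m')^2/\eps)$ enters precisely here.
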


We note that in the context of~\Cref{thm:if-can-be-done}, we do not require $\mathcal{A}'$ to be computationally efficient: as long as $\mathcal{A}'$ is sample efficient, then our resulting algorithm,  the subsampling filter applied to $\mathcal{A}$, inherits the computational efficiency of $\mathcal{A}$. 

Finally, we show that the bound on $m$ in~\Cref{conj:bold} cannot be further strengthened to be independent of $|\mathcal{X}|$, the size of the domain: 

% :\anote{Can we reword this to just say, ``we show a lower bound for $m$ that depends on the size of the domain'' and then formalise the theorem here a little more with the exact bound. Basically make \Cref{thm:greg} the same as \Cref{thm:subsample}}\lnote{Excellent suggestion let's do that.  State the theorem formally, and explain that (1) it shows $m$ cannot depend only on $n$, and (2) our Theorem 2 has the optimal dependence on $|\mathcal{X}|$.  Ali will do this. }: 

\begin{theorem}[\violet{Subsampling lower bound}]\footnote{\violet{See \Cref{thm:subsample} for the formal version}}
\label{thm:greg} 
Let $\rho$ be the cost function for additive noise and $\eta$ be the corruption budget.  There is an $n$-sample algorithm $\mathcal{A}$ such that for  $m \le O_\eta(n \log(|\mcX|)/ \log^2 n)$,  $\Phi_{m\to n}$ is {\sl not} a neutralizing filter for $\mathcal{A}$ with respect to $\rho$.

% There exists a distribution $\mcD$ and algorithm $\mcA : \mcX^n \to \{0, 1\}$ that, with high probability, can distinguish between an adaptive and oblivious $\eta$-additive corruption adversary, even under the subsampling filter $\Phi_{m \to n}$, unless $m \ge \Omega_\eta(n \log |\mcX| / \log^2 n)$. Specifically, let $\mcD$ be the uniform distribution over the $d$-dimensional boolean hypercube $\mcX = \{-1, 1\}^d$. For a size $n$ sample $\bS = (\bx_1, \ldots, \bx_n)$ and some threshold $t$, consider the following test function $\mcA: \mcX^n \to \{0, 1\}$\gnote{Should we call this $\mcA_t$?}:

\end{theorem}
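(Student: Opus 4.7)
The plan is to exhibit an explicit algorithm $\mathcal{A}$ for which an adaptive adversary, even after subsampling from $\bar m \approx m$ points down to $n$, can force behavior that no oblivious adversary can approximate. The key principle is that adaptive adversaries introduce \emph{correlations} between their planted points and the original sample points, whereas oblivious adversaries, who commit to $\hat{\mathcal{D}}$ before seeing the sample, always produce i.i.d.\ data in which the original and corruption points are independent. Any successful separator must therefore be a test that distinguishes adversarially-correlated samples from i.i.d.\ ones.

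I would take $\mathcal{D}$ to be uniform on $\mathcal{X}$ and design $\mathcal{A}$ as a detector for an algebraic relation among $k$-tuples of sample elements, where $k = k(n,|\mathcal{X}|)$ is to be optimized. Concretely, fix a relation $R \subseteq \mathcal{X}^k$ (for instance, $k$-tuples of distinct elements summing to a fixed element of $\mathbb{Z}_{|\mathcal{X}|}$, or the codewords of a randomly chosen code), and let $\mathcal{A}$ output $\textsf{YES}$ iff the number of $R$-tuples formed by distinct elements of the $n$-sample exceeds a threshold $\tau$. The target is a choice of $(k,\tau,R)$ so that under clean $\mathcal{D}^n$ the expected count is $O(1)$, and so that no oblivious corruption of $\mathcal{D}$ can inflate the expected count beyond $\mathrm{poly}_\eta(n)$, independent of $m$.

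The adaptive attack is then natural: given $\bS = (s_1,\ldots,s_m)$, iteratively pick disjoint $(k-1)$-tuples from $\bS$ and, for each one, add the unique element completing it to a valid $R$-tuple. Exhausting the budget $\eta m/(1-\eta)$ engineers $\Omega_\eta(m)$ $R$-tuples spanning the combined $\bar m$-sample. After uniform subsampling $n$ out of $\bar m$, each engineered tuple survives with probability $\Theta((n/m)^k)$, yielding an expected surviving count of $\Omega_\eta(n^k/m^{k-1})$. Setting $k \approx \log|\mathcal{X}|/\log n$ (so that the baseline count under $\mathcal{D}$ stays $O(1)$), one finds that for $m \le O_\eta(n\log|\mathcal{X}|/\log^2 n)$ the surviving adaptive count strictly dominates the oblivious ceiling---precisely the regime in which $\Phi_{m\to n}$ fails to be neutralizing.

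The main obstacle is establishing the matching oblivious upper bound tightly. Any $\hat{\mathcal{D}}$ satisfying the additive-noise cost constraint has $\ell_\infty$-distance at most $\eta$ from $\mathcal{D}$, but bounding the expected number of $R$-tuples under i.i.d.\ sampling from $\hat{\mathcal{D}}$ requires controlling higher-order moments, since the oblivious adversary's best deterministic strategy is to concentrate mass on elements pre-aligned with $R$. Killing this strategy requires either randomizing $R$ (so that no deterministic alignment beats the average) together with a union bound, or a direct moment computation exploiting the product structure of i.i.d.\ samples. The careful optimization of $k$ around $\log|\mathcal{X}|/\log n$---which equates $n^k/m^{k-1}$ with the oblivious bound at exactly the claimed threshold---is what drives the $\log^2 n$ denominator in the overhead, and this interplay between $k$, $m$, and $\eta$ is where the bulk of the technical work will lie.
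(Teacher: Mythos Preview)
Your proposal identifies the right high-level principle---an adaptive adversary can plant correlations between corrupted and clean points that an oblivious one cannot---but the specific construction cannot deliver the bound $m = O_\eta(n\log|\mcX|/\log^2 n)$. The gap is in requiring an entire $k$-tuple, with $k \approx \log|\mcX|/\log n$, to survive the subsampling filter. Granting you $\Theta_\eta(m)$ engineered tuples, your own expected surviving count is $n^k/m^{k-1} = n\cdot(n/m)^{k-1}$; at the target $m = \Theta_\eta(n\log|\mcX|/\log^2 n)$ this equals $n\cdot(\log^2 n/\log|\mcX|)^{k-1}$, which vanishes exponentially in $k$ as soon as $\log|\mcX| \gg \log^2 n$. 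Solving $n^k/m^{k-1} = \Theta(1)$ for $m$ instead yields $m = n\cdot n^{1/(k-1)} = n\exp(\log^2 n/\log|\mcX|)$, which is $n(1+o(1))$ for large $|\mcX|$ and therefore cannot witness any growing dependence on $\log|\mcX|$---the whole point of the theorem.

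The paper sidesteps this by making the detection condition \emph{pairwise} rather than $k$-wise. Taking $\mcX = \{\pm 1\}^d$ and $\mcD$ uniform, the algorithm $\mcA$ accepts iff every sample point has some other sample point with inner product at least a threshold $t$. The adaptive adversary groups the $m$ clean points into overlapping blocks of size $k$ and adds each block's coordinatewise majority; such a ``center'' has inner product $\Theta(d/\sqrt{k})$ with every point in its block. Now each clean point has $\Theta_\eta(k)$ correlated partners among the added points, so after subsampling it is isolated only if none of them are chosen, which happens with probability roughly $(1-\Theta_\eta(k/m))^n$. Making this $o(1/n)$ forces $k = \Omega(m\log n/n)$; the inner-product concentration (so that centers really are $t$-correlated with their block) forces $d = \Omega(k\log n)$; combining gives $m = O(nd/\log^2 n)$. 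The oblivious side is immediate: with high probability at least one clean point is sampled, and a uniformly random point is w.h.p.\ uncorrelated with every other sample point, clean or corrupted. The conceptual difference from your plan is that here $k$ controls how many partners each point has (larger $k$ helps survival under subsampling, at the cost of weaker correlation) rather than the size of a rigid structure that must be preserved intact (where larger $k$ only hurts).
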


While~\Cref{thm:greg} shows that some dependence on $|\mcX|$ is necessary, we remark that the $\log(|\mcX|)$ dependence in~\Cref{conj:bold} is fairly mild---this is the description length of a sample point $x\in \mcX$.  \Cref{thm:greg} also shows that the quantiative bounds that we achieve for~\Cref{thm:add} has an optimal dependence on $|\mathcal{X}|$.

\subsection{Other related work}

\paragraph{A separation result} Recently, Deng et al.\ gave a separation between the adaptive adversary, which they call ``data-aware", and the oblivious adversary \cite{DGJMMG21}. Specifically they showed that there are settings in which a natural Lasso-based algorithm for feature selection will often fail to select the correct features in the presence of an adaptive additive adversary, but would succeed in the presence of an oblivious additive adversary. Our \Cref{thm:add} implies that this Lasso-based algorithm would succeed if it used the subsampling filter to preprocess its sample.

\paragraph{The online and dynamic setting}
While the focus of our work is on the statistical setting, the distinction between adaptive and oblivious adversaries has also been the subject of recent study in the {\sl online} \cite{HRS21,ABDMNY21} and {\sl dynamic} \cite{BKMKSS21} setting, albeit with a notably different notion of adaptivity. In these settings, the adaptive adversaries can change the input distribution in response to the previous behavior of the algorithm, while oblivious adversaries must choose a fixed input distribution before the algorithms run.

\paragraph{Adaptive data analysis} We emphasize the distinction between the focus of our work and the recent fruitful line of work on adaptive data analysis~(\citet{DFHPRR15,HU14,SU15,BNSSSU21}). The focus of our work is on the adaptivity of the {\sl adversary}, whereas the focus of this line of work is on the adaptivity of the {\sl SQ algorithm}. Throughout this work, we reserve the use of ``adaptive" to refer to the adversary, and all SQ algorithms will inherently be adaptive. %In the language of \Cref{sec:SQ}, that line of work gives a mechanism (\Cref{def:accurate-mech}), that requires fewer samples than that in \Cref{fact:tau-rounding-accurate}.

\subsection{Discussion and future work}

\paragraph{Implications of our results} \Cref{thm:SQ} says that for all reasonable cost functions, all SQ algorithms (existing and future ones) that are resilient to oblivious adversaries are ``automatically" also resilient to adaptive adversaries.  Likewise, lower bounds against adaptive adversaries immediately yield lower bounds against oblivious ones.  The same remark further applies for all algorithms in the case of additive noise and malicious noise, by~\Cref{thm:add}.  

As a concrete example, we recall that the agnostic learning framework was originally defined with respect to oblivious adversaries~\cite{Hau92,KSS94}.  As in the PAC model there a concept class $\mathcal{C}$, but the target function $f$ is no longer assumed to lie within $\mathcal{C}$---hence the name of the model.  The learning algorithm is expected to achieve error close to $\opt$, the distance from $f$ to $\mathcal{C}$.  However, many papers on agnostic learning  provide the viewpoint of an adaptive corruption process as intuition for the model: the data {\sl is} assumed to be a labeled according to a function $f\in \mathcal{C}$, but an adversary corrupts an $\opt$ fraction of the labels given to the learning algorithm.  This adaptive version was subsequently defined as a separate model called {\sl nasty classification noise}~\cite{BEK02} (as a special case of the nasty sample noise model introduced in that paper).  \Cref{thm:SQ} therefore shows that the agnostic learning model and the nasty classification noise model are in fact equivalent when it comes to SQ algorithms.

\paragraph{When can quantifiers be swapped?} The distinction between oblivious and adaptive adversaries can be viewed as a difference in the order of ``for all" and ``with high probability" quantifiers in the performance guarantees of statistical algorithms.   An algorithm succeeds in the presence of oblivious adversaries if {\sl for all} distributions $\wh{\mathcal{D}}$ that are close to $\mathcal{D}$, the algorithm succeeds {\sl with high probability} over a sample $\bS$ drawn from $\wh{\mathcal{D}}$.  On the other hand, an algorithm  succeeds in the presence of adaptive adversaries if {\sl with high probability} over a sample $\bS$ drawn from $\mathcal{D}$, the algorithm succeeds {\sl for all} corruptions $\wh{\bS}$ that are close to $\bS$.  
Our work formalizes the question of when these quantifiers can be swapped, and our results provide several answers.   

\paragraph{Future work}

In this work we initiate the systematic study of the power of adaptivity in statistical adversaries.  A concrete direction for future work is to answer~\Cref{question} for other broad classes of algorithms and natural noise models, either via the subsampling filter (\Cref{conj:bold}) or otherwise.  Here we highlight the specific case of subtractive noise: having resolved the case of additive noise in this work, doing so for subtractive noise as well would be a significant step towards resolving~\Cref{question} for all three generic noise models described in~\Cref{section:models}.

\section{Preliminaries}
 %\subsection{Notation}
We use {\bf boldface} (e.g.~$\bx \sim \mcD$) to denote random variables.  Throughout this paper $\mcX$ denotes an arbitrary finite domain\footnote{See \Cref{sec:technical remarks} for remarks regarding infinite domains.}, and we write $S \in \mcX^*$ to represent a multiset of elements in $\mcX$, %$S$ being an arbitrary sized sample from $\mcX$, 
meaning $S \in \mcX^{0} \cup \mcX^{1} \cup \mcX^{2} \ldots$. \violet{We use the notation $a = b \pm \eps$ to indicate that $|a - b| < \eps$. For any $m \in \N$, the notation $[m]$ indicates the set $\{1,2,\ldots, m\}$}.

%\lnote{Final check: we are writing $X^*$ instead of $X^\star$.  Or just redefine $\star$.} 
\paragraph{Distributions.} For any $S \in \mcX^*$, we use $\mathcal{U}(S)$ to refer to the uniform distribution over $S$. For simplicity, we  enforce that all distributions only have rational probabilities, meaning $\Prx_{\bx \sim \mcD}[\bx = x]$ is rational for any distribution $\mcD$ and element $x \in \mcX$.\footnote{Alternatively, one could enforce that the cost function smoothly interpolate to irrational probabilities, which is the case for all standard noise models.}  For any distributions $\mcD_1, \mcD_2$ and parameter $\theta \in [0,1]$, we use $\theta \mcD_1 + (1-\theta)\mcD_2$ to refer to the mixture distribution which samples from $\mcD_1$ with probability $\theta$ and from $\mcD_2$ with probability $(1 - \theta)$

%$\TV$ refers to TV distance.
\begin{definition}[Total variation distance]
    \label{def:tv-distance}
    Let $\mcD$ and $\mcD'$ be any two distributions over the same domain, $\mcX$. It is well known that the following are equivalent definitions for the \emph{total variation distance} between $\mcD$ and $\mcD'$, denoted $\TV(\mcD, \mcD')$:
    \begin{enumerate}
        \item It is characterized by the best test distinguishing the two distributions:
        \begin{equation*}
            \TV(\mcD, \mcD') \coloneqq \sup_{T: \mcX \to [0,1]} \left\{ \Ex_{\bx \sim \mcD}[T(\bx)] -\Ex_{\bx \sim \mcD'}[T(\bx')]   \right\}.
        \end{equation*}
        \item It is characterized by the coupling which makes the two random variables different with the smallest probability.
        \begin{equation*}
            \TV(\mcD, \mcD') \coloneqq \inf_{(\bx, \bx') \text{ a coupling of } \mcD, \mcD'} \left\{ \Prx[\bx \neq \bx']   \right\}. 
        \end{equation*}
    \end{enumerate}
\end{definition}

\section{Adversarial noise models}
\label{section:models}

%\lnote{Update $d\to\rho$ for distance}
To reason generally about various noise adversarial models, we represent the types of allowable corruptions by a budget $\eta$ and cost function $\rho$, which maps each ordered pair of distributions to some non-negative cost (or infinity). The cost need not be symmetrical. %We give example cost functions corresponding to common adversary models in \Cref{section:models}.

%The types of changes the adversaries are able to make is dependent on a budget $\eta$, and ``cost" function, $\rho$, mapping two distributions on $\mcX$ to a nonnegative distance (or infinity).

\begin{definition}[$(\rho,\eta)$-oblivious adversary]
    \label{def:oblivious}
    Given some cost function $\rho$ and budget $\eta$, an algorithm operating in the oblivious adversary model will receive the following input: If the true data distribution is $\mcD$, then the algorithm will receive iid samples from an adversarial chosen $\wh{\mcD}$ satisfying $\rho(\mcD, \wh{\mcD}) \leq \eta$.
\end{definition}

\begin{definition}[$(\rho,\eta)$-adaptive adversary]
    \label{def:adaptive}
    Given some cost function $\rho$ and budget $\eta$, an $n$-sample algorithm operating in adaptive adversary model will receive the following input: If the true data distribution is $\mcD$, first a clean sample $\bS \sim \mcD^n$ is generated, and then the algorithm will get an adversarially chosen $\wh{\bS}$ satisfying $\rho(\mcU(\bS), \mcU(\wh{\bS})) \leq \eta$.
\end{definition}

% (Throughout this paper, we reserve $\hat{\alpha}$ to mean ``a corruption of $\alpha$".
Throughout this paper, we use $\wh{\emptydottedbox}$ to denote the corrupted version of a set or distribution.

We remark that the adaptive adversary as defined in~\Cref{def:adaptive} is slightly stronger than the definitions usually considered, in the sense that  there is usually a bound on the size of $\wh{S}$ the adversary is allowed to produce. For example, in the nasty noise model (\Cref{def:strong-cont}), the adversary is only allowed to change points in the sample, and so $|\wh{S}| = |S|$. All of our results apply regardless of the size of $\wh{S}$.

% Our results avoid the need to specify the size of the corrupted sample as they all apply to the \emph{strong-adaptive adversary}.

% \begin{definition}[strong adaptive adversary]
%     \label{def:strong-adaptive}\lnote{We are  removing this right?} 
%     Given a distance function $\dist$, adversary budget $\eta$, and initial sample size $n$, an algorithm operating in the \textbf{strong-adaptive} adversary model will receive the following input: If the true data distribution is $\mcD$, then first, a clean sample $\bS \sim \mcD^n$ is generated, and then algorithm will receive the description of some $\wh{\mcD}_{\bS}$ satisfying $\dist(\mcU(\bS), \wh{\mcD}_{\bS}) \leq \eta$.
% \end{definition}

% All of our results apply to the strong adaptive adversary. Furthermore, they only need (a moderate number of) random samples from $\wh{\mcD}_{\bS}$, rather than the full distribution.

% \gray{We now state the formal version of \Cref{thm:main}.

% \begin{theorem}[Formalization and generalization of \Cref{thm:main}] 
% \label{thm:main-formal} 
%     Let $\mathcal{D}$ be a distribution, $\mathcal{A}$ be a $k$-query $\tau$-accuracy SQ algorithm, and $n \in N$ be a sample size. If the corruption model is $\ell$-local then $\bS$ for $\mcD$ with respect to $\mathcal{A}$ with probability at least
%     \begin{align*}
%         1 - \exp\left(-\frac{\tau^2n}{2\ell^2s^2} + (k+1) \log(sn + 1)\right)
%     \end{align*}
%     over the randomness of $\bS$.
% \end{theorem} 
% When $\ell$ is a constant, setting $n = O(k/\tau^2\log(k/\tau))$ makes $\bS$ robustly representative for $\mcD$ with high probability.}

\subsection{Standard noise models from the literature}

In this subsection, we present three generic adversary models and show how they are special cases of our framework with an appropriate choice of cost function. Other standard models, and how they fit within our framework, are given in \Cref{sec:other-noise-models}.

% \subsubsection{Three generic models} 

%\label{sec:generic} 

\begin{definition}[Additive noise]
    \label{def:additive-cont}
    Given a size-$n$ sample $S \in \mcX^n$ and a corruption budget $\eta$, the adaptive additive noise adversary is allowed to add $\floor{n \cdot \eta/(1-\eta)}$ points to $S$ arbitrarily. 
\end{definition}
The additive noise model is captured by the cost function: 
\begin{align*}
     \mathrm{cost}_{\mathrm{add}}(\mcD, \wh{\mcD}) \coloneqq \inf_{\eta \in \R_{\geq 0}} \left \{ \wh{\mcD} = (1 - \eta)\mcD + \eta \mcE \right \} \quad \text{ for some distribution }\mcE.
\end{align*}
The oblivious version of additive noise is the well-known Huber contamination model~\cite{Hub64}. 

\begin{definition}[Subtractive noise]
    Given a size-$n$ sample $S \in \mcX^n$ and a corruption budget $\eta$, the adaptive subtractive noise adversary is allowed to remove $\floor{\eta n}$ points from $S$ arbitrarily. 
\end{definition}

The subtractive noise adversary is captured by the cost function: 
\begin{align*}
         \mathrm{cost}_{\mathrm{sub}}(\mcD, \wh{\mcD}) \coloneqq \dadd(\wh{\mcD}, \mcD).
    \end{align*}

% \gray{
% \begin{definition}[Subtractive noise]\lnote{Rewrite to sync up with additive contamination. (This will retire the $p_{\mathcal{D}}(x)$ notation.)  Also $\dist$ instead of $d$} 
%     Given a sample $S \in \mcX^n$, in the $\eta$-subtractive contamination model, the adaptive adversary is allowed to remove $\floor{\eta n}$ points from $S$ arbitrary. This can be done by the $(d_{\mathrm{sub}}, \eta)$-adaptive adversary where
%     \begin{align*}
%          d_{\mathrm{sub}}(\mcD, \wh{\mcD}) \coloneqq 1 - \min_{x \in \mcX} \left(\frac{p_{\mcD}(x)}{p_{\wh{\mcD}}(x)}\right).
%     \end{align*}
% \end{definition}
% Any $\eta$-subtractive contamination model can be done by the $(d_{\mathrm{sub}}, \eta)$-adaptive adversary for
% \begin{align*}
%      d_{\mathrm{sub}}(\mcD, \wh{\mcD}) \coloneqq 1 - \min_{x \in \mcX} \left(\frac{p_{\mcD}(x)}{p_{\wh{\mcD}}(x)}\right)
% \end{align*}
% where, once again, we use the convention that $0/0 = 1$.
% }
\begin{definition}[Nasty noise]
    \label{def:strong-cont}
    Given a size-$n$ sample $S \in \mcX^n$ and a corruption budget $\eta$, the adaptive nasty noise adversary is allowed to change up to $\floor{\eta n}$ of the points in $S$ arbitrarily.
\end{definition}
This noise model is also known as {\sl strong contamination}, and as {\sl nasty sample noise} (or simply nasty noise) in the context of supervised learning~\cite{BEK02}.  It is captured by the cost function $\rho = \TV$.

\section{Proof of \Cref{thm:SQ}: The SQ framework neutralizes adaptive adversaries}\label{sec:SQ}
% \section{ \violet{SQ basics and formal statement of~\Cref{thm:SQ}}}

In this section we formally state and prove~\Cref{thm:SQ}---namely that the behaviour of a Statistical Query (SQ) algorithm in the presence of adaptive adversaries is equivalent to its behaviour in the presence of oblivious adversaries. 

% \subsection{Preliminaries}
\paragraph{Basics of the SQ framework.} Let $\mcD$ be a distribution over some domain $\mcX$. A statistical query is a pair $(\phi,\tau)$ where $\phi : \mcX \to [-1,1]$ is the query and $\tau > 0$ is a tolerance parameter. These queries can be answered by a statistical query oracle $\Stat_\mcD$ which, given an SQ $(\phi, \tau)$, returns a value $v$ equal to $\Ex_{\bx \sim \mcD}[\phi(\bx)]$ up to an additive error of $\tau$ i.e. $v \in \Ex_{\bx \sim \mcD}[\phi(\bx)] \pm \tau$.
    
Throughout this section, we will use some convenient shorthand.  For any distribution $\mcD$ and multiset $S \in \mcX^*$, we write
\begin{equation*}
    \phi(\mcD) \coloneqq \Ex_{\bx\sim\mcD}[\phi(\bx)] \quad \quad \text{and} \quad\quad \phi(S) \coloneqq \phi(\mcU(S)) = \frac1{|S|} \sum_{x \in S} \phi(x).
\end{equation*}
% For a multiset $S \in \mathcal{X}^*$, we write \[ \phi(S) \coloneqq \frac1{|S|} \sum_{x \in S} \phi(x)\]  and for a distribution $\mathcal{D}$, we write \[ \phi(\mcD) \coloneqq \Ex_{\bx\sim\mcD}[\phi(\bx)].\]  
% We will also use $\round(v, \tau)$ to refer to $v$ rounded to the nearest integer multiple of $\tau$.
A $k$-query statistical query algorithm, $\mcA$, is an algorithm that makes a sequence of statistical queries to the oracle $\Stat_\mcD$ one by one, using the result of the previous queries to decide which statistical query to make next.

\begin{definition}[$k$-query SQ Algorithm]
A $k$-query SQ algorithm $\mcA$ is a sequence of $k$ SQs
\[ (\phi^{(1)},\tau^{(1)}), (\phi^{(2)}_{v_1},\tau^{(2)}), (\phi^{(3)}_{ v_1, v_2},\tau^{(3)}),\ldots, (\phi^{(k)}_{v_1, \ldots, v_{k-1}},\tau^{(k)})\] 
to $\Stat_\mathcal{D}$, where $\mathcal{A}$'s choice of the $(i+1)$-st SQ can depend on $v_1,\ldots,v_{i}$ which are the answers of $\Stat_\mathcal{D}$ to the previous $i$ SQs.  For notational simplicity, we make the standard assumption that all the $\tau$'s are the same.

\end{definition}

\paragraph{Using mechanisms to implement $\Stat_{\mathcal{D}}$.} The SQ framework is a stylized model that cleanly facilitates theoretical analyses; It allows the algorithm designer to abstract away an algorithm's interaction with a random sample and instead \emph{assume} $\phi(\mcD)$ can be accessed up to $\pm \tau$ accuracy.

For an SQ algorithm to be useful, the $\Stat_{\mcD}$ oracle must be implemented. This is done by a \emph{mechanism} which uses a random sample $\bS \sim \mcD^n$ to simulate $\Stat_{\mcD}$ with high probability. The interaction between a mechanism and SQ algorithm is depicted in \Cref{fig:mechanism}.

\begin{figure}[H]
  \captionsetup{width=.9\linewidth}
\begin{tcolorbox}[colback = white,arc=1mm, boxrule=0.25mm]
%\vspace{3pt} 

\begin{enumerate}[nolistsep,itemsep=2pt]
    \item Fix some $k$-query SQ algorithm $\mcA$ that is unknown to the mechanism $\mcM$ and a distribution $\mcD$ that is unknown to both $\mcA$ and $\mcM$.
    \item Draw a sample $\bS \sim \mcD^n$ that is revealed to $\mcM$ but not $\mcA$.
    \item For $i = 1, \ldots, k$,
    \begin{enumerate}[nolistsep,itemsep=2pt]
        \item $\mcA$ chooses a query $\phi^{(i)}$ (as a function of responses to previous queries).
        \item $\mcM$ chooses a response $v_i$ for the query which is revealed to $\mcA$.
    \end{enumerate}
\end{enumerate}

\end{tcolorbox}
\caption{The interaction between a mechanism $\mcM$ and SQ algorithm $\mcA$.}
\label{fig:mechanism}
\end{figure}

\begin{definition}[$(\tau, \delta)$-accurate mechanisms]
    \label{def:accurate-mech}
    A mechanism $\mcM$ is $(\tau, \delta)$-accurate for $k$-query SQ algorithms, if for any distribution $\mcD$ and SQ algorithm $\mcA$, with probability at least $(1-\delta)$ over the randomness of $\bS$ and $\mcM$,
    \begin{equation*}
        v_i = \phi^{(i)}(\mcD) \pm \tau \quad \text{for all }i = 1,\ldots, k
    \end{equation*}
    where $v_i$ and $\phi^{(i)}$ are defined as in \Cref{fig:mechanism}.
\end{definition}

In this work, we focus on the $\tau$-rounding mechanism.
\begin{definition}[$\tau$-rounding mechanism]
\label{def:tau-rounding-mech}
    Given a sample $S \in \mcX^n$ and query $\phi$, the $\tau$-rounding mechanism, denoted $\mcM_{\tau}$, returns the answer $v = \round(\phi(S), \tau)$ where $\round(x, \tau)$ refers to $x$ rounded to the nearest integer multiple of $\tau$.
\end{definition}

\begin{fact}[The $\tau$-rounding mechanism is accurate]
    \label{fact:tau-rounding-accurate}
    For any $k \in \N$ and $\delta, \tau > 0$, the $\tau$-rounding mechanism with a sample size of 
    \begin{equation*}
        n = O\left(\frac{k\log(1/\tau) +  \log(1/\delta)}{\tau^2}\right)
    \end{equation*}
    is $(\tau,\delta)$ accurate for $k$-query SQ algorithms.
\end{fact}
\begin{proof}
    For each query, the $\mcM_\tau$ can return one of only $O(1/\tau)$ possible values (after rounding). The $i^{\mathrm{th}}$ query is chosen as a function of $v_1, \ldots, v_{i-1}$, so there are at most $O(1/\tau)^{i-1}$ possible choices for the $i^{\mathrm{th}}$ query, and only $O(1/\tau)^k$ total unique queries $\mcA$ could choose. Using a Chernoff bound and union bound over all possible queries, the probability $\mcA$ asks a query, $\phi$, where $|\phi(\bS)- \phi(\mcD)| \geq \tau/2$ is at most 
    \begin{equation*}
        \exp\left(-\Omega(\tau^2 n) + O(k \log(1/\tau))\right)
    \end{equation*}
    For the $n$ given in \Cref{fact:tau-rounding-accurate}, that probability is at most $\delta$. The desired result follows from triangle inequality and $|\round(\phi(\bS), \tau) - \phi(\bS)| \leq \tau/2$.
\end{proof}

% We will discuss some possible mechanisms in \Cref{todo}, but for now, just mention that it is easy to design $(\tau, \delta)$-accurate mechanisms using a sample size $n = \tilde{O}(k\log(1/\delta)/\tau^2)$. 
The existence of accurate mechanisms (as in \Cref{fact:tau-rounding-accurate}) is the key to the SQ framework: SQ algorithms can assume that they have access to a $\Stat_{\mcD}$ oracle, because for modest sample sizes and tiny failure probabilities, mechanisms are a $\Stat_{\mcD}$ oracle.

\subsection{The SQ framework in the presence of adversarial noise.}

The SQ framework naturally extends to oblivious adversaries.

\begin{definition}[$k$-query SQ Algorithm with an oblivious adversary]
Fix a cost function $\rho$ and budget $\eta$. An $k$-query SQ algorithm, $\mcA$, in the presence of a $(\rho, \eta)$-oblivious adversary is a sequence of $k$ SQs
\[ (\phi^{(1)},\tau), (\phi^{(2)}_{v_1},\tau), (\phi^{(3)}_{ v_1, v_2},\tau),\ldots, (\phi^{(k)}_{v_1, \ldots, v_{k-1}},\tau)\] 
each of which are answered according to $\Stat_{\wh{\mcD}}$ for some $\wh{\mcD}$ satisfying $\rho(\mcD, \wh{\mcD}) \leq \eta$. This $\wh{\mcD}$ is the same for all queries, but adversarially chosen. $\mathcal{A}$'s choice of the $(i+1)$-st SQ can depend on $v_1,\ldots,v_{i}$ which are the answers of $\Stat_{\wh{\mathcal{D}}}$ to the previous $i$ SQs. 

\end{definition}

Once again, to run an SQ algorithm, the $\Stat_{\wh{\mcD}}$ oracle is implemented by a mechanism. Given a sample $\bS \sim \wh{\mcD}$, any mechanism satisfying \Cref{def:accurate-mech} will be able to simulate the $\Stat_{\wh{\mcD}}$ oracle.

In the presence of $(\rho, \eta)$-\emph{adaptive} adversaries, first a clean sample $\bS \sim \mcD^n$ is drawn, and then the adversary chooses an $\eta$-corruption $\wh{\bS}$ that is passed to the mechanism, as shown in \Cref{fig:mechanism-adaptive}.
\begin{figure}[H]
  \captionsetup{width=.9\linewidth}
\begin{tcolorbox}[colback = white,arc=1mm, boxrule=0.25mm]
%\vspace{3pt} 

\begin{enumerate}[nolistsep,itemsep=2pt]
    \item Fix some $k$-query SQ algorithm $\mcA$ that is unknown to the mechanism $\mcM$ and a distribution $\mcD$ that is unknown to both $\mcA$ and $\mcM$.
    \item Draw a sample $\bS \sim \mcD^n$ that is revealed to neither $\mcA$ or $\mcM$.
    \item An adversary chooses an $\wh{\bS}$ that is $\eta$-close to $\bS$ which is revealed to $\mcM$ but not $\mcA$.
    \item For $i = 1, \ldots, k$,
    \begin{enumerate}[nolistsep,itemsep=2pt]
        \item $\mcA$ chooses a query $\phi^{(i)}$ (as a function of responses to previous queries).
        \item $\mcM$ chooses a response $v_i$ for the query which is revealed to $\mcA$.
    \end{enumerate}
\end{enumerate}

\end{tcolorbox}
\caption{The interaction between a mechanism $\mcM$ and SQ algorithm $\mcA$ in the presence of an adaptive adversary.}
\label{fig:mechanism-adaptive}
\end{figure}

Our goal is to show that there are mechanisms that can simulate $\Stat_{\wh{\mcD}}$ for some $\wh{\mcD}$ $\eta$-close to $\mcD$ given just the corrupted sample $\wh{\bS}$.

\begin{definition}[$(\tau, \delta)$-accurate in the presence of adaptive noise]
    \label{def:accurate-mech-noise}
    Fix a cost function $\rho$, budget $\eta$. A mechanism $\mcM$ is said to be $(\tau, \delta)$-accurate for $k$-query SQ algorithms in the presence of adaptive noise, if for any distribution $\mcD$ and SQ algorithm $\mcA$, the following holds. With probability at least $1 - \delta$ over the randomness of $\bS \sim \mcD^n$, 
    \begin{equation*}
        v_i = \phi^{(i)}(\wh{\mcD}) \pm \tau \quad \text{for all }i = 1,\ldots, k
    \end{equation*}
      for some $\wh{\mcD}$ $\eta$-close to $\mcD$, where $v_i$ and $\phi^{(i)}$ are defined as in \Cref{fig:mechanism-adaptive}. In particular, this holds regardless of how the adversary chooses $\wh{\bS}$.
\end{definition}

If $\mcA$ succeeds given $\Stat_{\wh{\mcD}}$ for every distribution $\wh{\mcD}$ that is $\eta$-close to $\mcD$ (i.e. $\mcA$ is resilient to oblivious adversaries), then, with high probability, $\mcA$ also succeeds in the presence of an adaptive adversary when using a mechanism satisfying \Cref{def:accurate-mech-noise}. We will show that the rounding mechanism meets \Cref{def:accurate-mech-noise} whenever $\rho$ is ``reasonable" in the sense of the following definition; this property is easily satisfied by all standard cost functions.

\begin{definition}[Closed under mixtures]
    \label{def:closed-under-mixtures}
    We say that $\rho$ is \emph{closed under mixtures} if for any distributions $\mcD_1, \mcD_2, \wh{\mcD}_1, \wh{\mcD}_2$ and $\theta \in (0,1)$,
    \begin{align*}
        \rho(\theta \mcD_1 + (1 - \theta)\mcD_2,\theta \wh{\mcD}_1 + (1 - \theta)\wh{\mcD}_2) \leq \max(\rho(\mcD_1, \wh{\mcD}_1), \rho(\mcD_2, \wh{\mcD}_2)).
    \end{align*}
\end{definition}
 
Requiring that $\rho$ is closed under mixtures enforces that the adaptive and oblivious adversaries ``match up" in the sense of making the same types of changes. This is formalized in the following fact, for which we provide a short proof in \Cref{appendix:sample-to-dist}.

\begin{fact}
    \label{fact:sample-to-dist}
    Let $\rho$ be closed under mixtures, $\mcD$ be a distribution over $\mcX$,   $\eta$ be a corruption budget, and $n\in \N$.  Suppose that for all $S \in \mathcal{X}^n$, there is a corresponding $\wh{S}$ satisfying $\rho(\mcU(S), \mcU(\wh{S})) \leq \eta$.  Let $\wh{\mcD}$ be the distribution where $\bx \sim \wh{\mcD}$ is generated by: 1) Drawing $\bS \sim \mcD^n$ and 2), drawing $\bx \sim \mcU(\wh{\bS})$. Then, $\rho(\mcD, \wh{\mcD}) \leq \eta$.
\end{fact}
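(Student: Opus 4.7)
The plan is to observe that both $\mcD$ and $\wh{\mcD}$ can be expressed as mixtures indexed by $S \in \mcX^n$ with the \emph{same} mixing weights, namely $\Pr_{\bS \sim \mcD^n}[\bS = S]$, and then apply the closed-under-mixtures hypothesis to each matched pair of components.

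First I would unpack the equivalent description of $\mcD$: since $\bS \sim \mcD^n$ is i.i.d.~from $\mcD$, a single coordinate picked uniformly at random from $\bS$ is distributed exactly as $\mcD$. In other words,
\begin{align*}
    \mcD \;=\; \sum_{S \in \mcX^n} \Prx_{\bS \sim \mcD^n}[\bS = S] \cdot \mcU(S).
\end{align*}
By the very definition of $\wh{\mcD}$ in the statement,
\begin{align*}
    \wh{\mcD} \;=\; \sum_{S \in \mcX^n} \Prx_{\bS \sim \mcD^n}[\bS = S] \cdot \mcU(\wh{S}).
\end{align*}
So $\mcD$ and $\wh{\mcD}$ are mixtures over the same finite index set (finite because $\mcX$ is finite) with identical weights, and by hypothesis every component pair $(\mcU(S), \mcU(\wh{S}))$ satisfies $\rho(\mcU(S), \mcU(\wh{S})) \leq \eta$.

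Next I would apply the closed-under-mixtures property (\Cref{def:closed-under-mixtures}) iteratively. The definition handles only two-component mixtures, but a straightforward induction on the number of nonzero components extends it to any finite mixture: write a $k$-component mixture as $\theta\cdot(\text{single component}) + (1-\theta)\cdot(\text{$(k-1)$-component mixture})$, apply the two-component bound, and invoke the induction hypothesis on the smaller mixture. The maximum of $\rho(\mcU(S),\mcU(\wh{S}))$ over all $S$ appearing with positive weight is at most $\eta$, so the mixtures satisfy $\rho(\mcD, \wh{\mcD}) \leq \eta$.

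There is essentially no obstacle here; the only subtle point is the reduction from arbitrary finite mixtures to binary mixtures, which is purely bookkeeping. For the infinite-domain version, one would instead invoke \Cref{def:closed-under-mixtures-infinite} with the latent variable $\bz := \bS$ and the coupling in which $\bx$ is drawn from $\mcU(\bS)$ and $\bx'$ from $\mcU(\wh{\bS})$ conditioned on $\bz = \bS$; the supremum over $\bz$ of $\rho((\bx \mid \bz), (\bx' \mid \bz))$ is precisely $\sup_S \rho(\mcU(S), \mcU(\wh{S})) \leq \eta$, giving the same conclusion without any finiteness assumption.
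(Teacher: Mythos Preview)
Your proposal is correct and essentially identical to the paper's own proof: both express $\mcD$ and $\wh{\mcD}$ as mixtures over $S \in \mcX^n$ with common weights $\Pr_{\bS \sim \mcD^n}[\bS = S]$, extend \Cref{def:closed-under-mixtures} to finite mixtures by induction, and conclude. Your added remark about handling infinite domains via \Cref{def:closed-under-mixtures-infinite} with latent variable $\bz = \bS$ is also exactly what the paper indicates in its technical remarks.
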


Our quantitative bounds will depend on a parameter that is related to the types of corruption the adversaries can make. Suppose that the adaptive adversary is required to keep the size of the corrupted sample the same as the clean sample ($|\hat{S}| = |S|$). In this case, we require that if two samples $S_1$ and $S_2$ differ in only one point, then for any $\hat{S}_1$ that is $\eta$-close to $S_1$, there is some $\hat{S}_2$ that is $\eta$-close to $S_2$ where $\hat{S}_1$ and $\hat{S}_2$ differ in only a small number of points. The following definition generalizes that notion to the case where the adversary can also change the number of points in the sample.

\begin{definition}[$\ell$-local] 
    \label{def:local}
    For any $\ell > 0$, a cost function $\rho$ with budget $\eta$ is $\ell$-local if for any distributions $\mcD_1, \mcD_2$ and $\eta$-corruption $\wh{\mcD}_1$ of $\mcD_1$, there is some $\eta$-corruption $\wh{\mcD}_2$ of $\mcD_2$ satisfying $ \TV(\wh{\mcD}_1, \wh{\mcD}_2) \leq \ell \cdot \TV(\mcD_1, \mcD_2)$
\end{definition}

 All of the adversary models in \Cref{section:models} are $1$-local with the exception of the $\eta$-subtractive noise, which is $\frac{1}{1 - \eta}$-local. We encourage the reader to think of $\ell$ as a constant. 
 
 We are now ready to state the formal version of~\Cref{thm:SQ}, which generalizes~\Cref{fact:tau-rounding-accurate} to the setting of adversarial noise.

\begin{theorem}[Formal version of \Cref{thm:SQ}] 
\label{thm:sq-formal} 
For any $\ell$-local cost function, adversary budget $\eta$, $\delta, \tau > 0$ , and $k \in \N$, the $\tau$-rounding mechanism with a sample size of
\begin{equation*}
        n = O\left(\frac{\ell^2(k\log(1/\tau) +  \log(1/\delta))}{\tau^2}\right)
\end{equation*}
is $(\tau,\delta)$ accurate for $k$-query SQ algorithm in the presence of adaptive noise.
\end{theorem} 

\paragraph{Proof sketch.}
Here, we prove \Cref{thm:sq-formal} contingent on \Cref{lem:one-query,lem:many-sqs-to-one}, which we prove in \Cref{sec:SQ-proofs}.

First, we will prove the special case where $\mcA$ makes only a single query.\footnote{To prove \Cref{thm:sq-formal} in the case of a single statistical query, we could apply \Cref{lem:one-query} twice: Once to bound how large the adversary can make $\Psi(\hat{\bS})$ and once to bound how small it can make $\Psi(\hat{\bS})$. Instead, we will directly apply \Cref{lem:one-query} to prove the multi-query case of \Cref{thm:sq-formal}.}

\begin{restatable}{lemma}{onequery}\textbf{\emph{(\Cref{thm:sq-formal} in the case of a single SQ)}}
    \label{lem:one-query}
Let $\Psi:\mcX \to [-1,1]$ be a statistical query, $T \in [-1,1]$, and suppose: 
    \[
        \Psi(\wh{\mcD}) \leq T \quad\quad \text{for all $\wh{\mcD}$ that are $\eta$-close to $\mcD$}.
   \]
    Then, for any $\tau > 0$ and sample size $n \in \N$, the probability over $\bS \sim \mcD^n$ that there is some $\wh{\bS}$ that is $\eta$-close to $\bS$ satisfying
    \[
        \Psi(\wh{\bS}) \geq T + \frac{\tau}{2}
    \]
    is at most $\exp\left(-\frac{\tau^2n}{8\ell^2} \right)$. 
\end{restatable}
In order to prove \Cref{thm:sq-formal}, we want to bound the probability that for a random $\bS \sim \mcD^n$, there is some corruption $\wh{\bS}$ for which $\mcM_{\tau}$ is not $\tau$-accurate. In detail, that means, for
\begin{equation}
\label{eq:set-v}
\begin{split}
    v_1 &\coloneqq \round(\phi^{(1)}(\wh{\bS}),\tau) \\
    v_{i+1} &\coloneqq \round(\phi^{(i+1)}_{v_1, \ldots, v_i}(\wh{\bS}),\tau) \quad \text{for }i\in \{0,1,\ldots,k-1\},
\end{split}
\end{equation}
and, for every $\hat{\mcD}$ that is $\eta$-close to $\mcD$, there is some $i$ for which $|v_i - \phi^{(i)}_{v_1, \ldots, v_{i-1}}(\wh{\mcD})| > \tau$. Consider a single possible choice for $v_1, \ldots, v_k$ (which also fixes the $k$-statistical queries, $\phi^{(1)}, \ldots, \phi^{(k)}$). We use the separating hyperplane theorem to reduce to the case of a single statistical query.

\begin{restatable}{lemma}{duality}\textbf{\emph{($k$-query SQ algorithm to a single SQ)}}
    \label{lem:many-sqs-to-one}
    Fix any $k$ statistical queries $\phi^{(1)}, \ldots, \phi^{(k)}:\mcX \to [-1,1]$ and $k$ values $v_1, \ldots, v_k$. Suppose that there is no $\wh{\mcD}$ that is $\eta$-close to $\mcD$ satisfying
    \begin{align*}
        \phi^{(i)}(\wh{\mcD}) \in v_i \pm \tau \quad \quad \text{for every }i \in [k].
    \end{align*}
    Then there exists a single statistical query $\Psi: \mcX \to [-1,1]$ and threshold $T$ with the following properties.
    \begin{enumerate}
        \item \label{criteria:oblivious-small}  $\Psi(\wh{\mcD}) \leq T$ for every $\wh{\mathcal{D}}$ that is $\eta$-close to $\mathcal{D}$. 
        \item \label{criteria:adaptive-large} For any sample $\wh{S}$ satisfying $\phi^{(i)}(\wh{S}) \in v_i \pm \frac{\tau}{2}$ for each $i \in [k]$, it is also true that $\Psi(\wh{S}) \geq T + \frac{\tau}{2}$.
    \end{enumerate}
\end{restatable}
Applying~\Cref{lem:one-query,lem:many-sqs-to-one}, for any fixed choice of $v_1, \ldots, v_k$, the probability there is some $\wh{\bS}$ $\eta$-close to a sample $\bS \sim \mcD^n$ satisfying \Cref{eq:set-v}, and for every $\hat{\mcD}$ that is $\eta$-close to $\mcD$, there is some $i$ for which $|v_i - \phi^{(i)}_{v_1, \ldots, v_{i-1}}(\wh{\mcD})| > \tau$ is only $\exp\left(-\frac{\tau^2n}{8\ell^2} \right)$. Since each $v_i \in [-1,1]$ is an integer multiple of $\tau$, there are at most $(\frac{2}{\tau}+1)^k$ many choices for $(v_1,\ldots,v_k)$.  A union bound over all these choices completes the proof of~\Cref{thm:sq-formal}.

\subsection{Proofs of \Cref{lem:one-query,lem:many-sqs-to-one}}
\label{sec:SQ-proofs}

We will use a few standard technical tools:

    \begin{fact}[Separating hyperplane theorem]
        Let $A, B \in \R^k$ be disjoint, nonempty, and convex. There exists a nonzero vector $w \in \R^k$ and $T \in \R$ such that $a \cdot w \leq T$ and $b \cdot w \geq T$ for all $a \in A$ and $b \in B$.
    \end{fact}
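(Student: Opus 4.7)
The plan is to reduce the two-set separation problem to the problem of separating a single convex set from the origin, and then to handle the two cases of whether the origin lies in the closure of that set or only on its boundary.

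First, I would define the Minkowski difference $C \coloneqq A - B = \{a - b : a \in A,\, b \in B\}$. A quick check shows that $C$ is convex (being an affine image of the convex product $A \times B$) and nonempty, and moreover $0 \notin C$ because $A$ and $B$ are disjoint. The desired conclusion, that some nonzero $w \in \mathbb{R}^k$ and $T \in \mathbb{R}$ satisfy $a \cdot w \leq T \leq b \cdot w$ for all $a \in A$, $b \in B$, is equivalent to producing a nonzero $w$ and a threshold $T' \leq 0$ with $c \cdot w \leq T'$ for every $c \in C$; the original $T$ is then recovered as $T = \sup_{a \in A} a \cdot w$.

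Next I would split into two cases based on the closure $\overline{C}$. In the first case, $0 \notin \overline{C}$: since $\overline{C}$ is closed and convex, the projection $p$ of $0$ onto $\overline{C}$ is well defined and nonzero, and the standard variational inequality $(c - p) \cdot (0 - p) \leq 0$ for all $c \in \overline{C}$ yields $c \cdot p \geq \|p\|^2 > 0$ for all $c \in \overline{C}$; taking $w = -p$ and $T' = -\|p\|^2 < 0$ gives the strict separation. In the second case, $0 \in \overline{C}$, so $0$ is a boundary point of $\overline{C}$ (since $0 \notin C \subseteq \overline{C}$, and interior points of $\overline{C}$ lie in $C$ when we first intersect with the affine hull). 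Here I would apply the supporting hyperplane theorem at the boundary point $0$: there is a nonzero $w$ such that $c \cdot w \leq 0$ for all $c \in \overline{C}$, giving weak separation with $T' = 0$.

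The main technical obstacle is the second case, because one must produce a supporting hyperplane at a boundary point of a convex set that need not be closed, and the convex set may be lower-dimensional so interior points in the ambient sense may not exist. I would handle this via a limit argument: pick a sequence of points $y_n \to 0$ with $y_n \notin \overline{C}$ (which exist since $0$ is a boundary point after possibly restricting to the affine hull of $C$), apply the first-case projection argument to each $y_n$ to obtain unit vectors $w_n$ with $c \cdot w_n \geq y_n \cdot w_n$ for all $c \in \overline{C}$, and then extract a convergent subsequence $w_n \to w$ using compactness of the unit sphere; passing to the limit yields the desired $w$ with $c \cdot w \geq 0$ for all $c \in \overline{C}$, and in particular for all $c \in C$. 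Combining the two cases gives the theorem.
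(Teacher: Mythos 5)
Your proof is correct: it is the standard argument for the finite-dimensional separating hyperplane theorem (Minkowski difference $C = A - B$, projection of the origin onto $\overline{C}$ when $0 \notin \overline{C}$, and a limit of projection directions to get a supporting hyperplane when $0$ lies on the boundary of $\overline{C}$), and it correctly proves only weak separation, which is all the statement claims since neither set is assumed closed or compact. The paper itself offers no proof to compare against---it invokes this as a standard fact en route to \Cref{lem:many-sqs-to-one}---so your write-up simply supplies the omitted textbook argument. Two cosmetic points: in the second case you obtain $c \cdot w \geq 0$ on $\overline{C}$, whereas your reduction asked for $c \cdot w \leq T' \leq 0$, so you should replace $w$ by $-w$ (or swap the roles of $A$ and $B$); and the claim that near-$0$ points outside $\overline{C}$ exist rests on the standard convexity fact that the (relative) interior of $\overline{C}$ coincides with that of $C$, which is worth citing explicitly since $0 \notin C$ is all you directly know.
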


    \begin{fact}[McDiarmid's inequality]
        \label{fact:mcdiarmid}
        Suppose that $f: \mcX^n \to \R$ satisfies the $c$-bounded difference property: for any $(x_1, \ldots, x_n), (x_1', \ldots, x_n') \in \mcX^n$ that differ on only on a single coordinate, $f$ satisfies
        \[ 
            \left|f(x_1, \ldots, x_n) - f(x_1', \ldots x_n')\right| \leq c.
        \]
        Then, for any $\tau > 0$ and any distribution $\mcD$ over $\mcX$,
        \begin{align*}
            \Prx_{\bS \sim \mcD^n}[f(\bS) - \mu \geq \tau] \leq \exp\left(-\frac{2 \tau^2}{c^2n} \right) \quad\quad\quad\quad \text{where} \quad \mu \coloneqq \Ex_{\bS \sim \mcD^n}[f(\bS)].
        \end{align*}
    \end{fact}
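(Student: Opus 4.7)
The plan is to prove McDiarmid's inequality via the standard martingale route: build the Doob martingale associated with $f(\bS)$, verify that its increments are bounded (using the $c$-bounded difference hypothesis), and then invoke the Azuma--Hoeffding inequality.

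First, I would fix an ordering of the coordinates and define, for $\bS = (\bx_1, \ldots, \bx_n) \sim \mcD^n$, the Doob martingale
\[
  M_i \coloneqq \Ex\!\left[f(\bx_1, \ldots, \bx_n) \,\middle|\, \bx_1, \ldots, \bx_i \right], \qquad i = 0, 1, \ldots, n.
\]
By construction $M_0 = \mu$, $M_n = f(\bS)$, and $(M_i)$ is a martingale with respect to the filtration generated by the $\bx_i$'s. Writing $f(\bS) - \mu = \sum_{i=1}^n (M_i - M_{i-1})$ reduces the task to controlling a sum of martingale differences.

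Second, I would argue that each difference $M_i - M_{i-1}$ lies in an interval of length at most $c$ conditional on the past. Concretely, for any fixed values $x_1, \ldots, x_{i-1}$, define
\[
  g(y) \coloneqq \Ex\!\left[f(x_1, \ldots, x_{i-1}, y, \bx_{i+1}, \ldots, \bx_n)\right].
\]
For any $y, y' \in \mcX$ one can couple the remaining coordinates $\bx_{i+1}, \ldots, \bx_n$ identically on both sides and apply the $c$-bounded difference property pointwise to get $|g(y) - g(y')| \leq c$. Since $M_i = g(\bx_i)$ and $M_{i-1} = \Ex[g(\bx_i)]$ (both conditional on $\bx_1, \ldots, \bx_{i-1}$), the conditional range of $M_i - M_{i-1}$ has length at most $c$. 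The main technical subtlety is keeping the coupling argument honest: one needs the product structure of $\mcD^n$ so that conditioning on $\bx_1, \ldots, \bx_{i-1}$ does not alter the distribution of $(\bx_{i+1}, \ldots, \bx_n)$; without this, the bounded-difference property would only give information about $f$ coordinate-by-coordinate, not about averages.

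Third, I would apply the exponential moment method to the bounded martingale. By Hoeffding's lemma, any zero-mean random variable supported in an interval of length $c$ satisfies $\Ex[e^{\lambda X}] \leq \exp(\lambda^2 c^2 / 8)$. Using the tower property conditionally,
\[
  \Ex\!\left[e^{\lambda (M_n - M_0)}\right] = \Ex\!\left[e^{\lambda (M_{n-1} - M_0)} \cdot \Ex[e^{\lambda(M_n - M_{n-1})} \mid \bx_1, \ldots, \bx_{n-1}]\right] \leq e^{\lambda^2 c^2 / 8} \cdot \Ex\!\left[e^{\lambda(M_{n-1} - M_0)}\right],
\]
and iterating gives $\Ex[e^{\lambda(f(\bS) - \mu)}] \leq \exp(\lambda^2 c^2 n / 8)$. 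Markov's inequality then yields $\Pr[f(\bS) - \mu \geq \tau] \leq \exp(\lambda^2 c^2 n / 8 - \lambda \tau)$, and optimizing over $\lambda$ at $\lambda = 4\tau/(c^2 n)$ produces the claimed bound $\exp(-2\tau^2/(c^2 n))$. The only nontrivial piece is getting the constant $2$ in the exponent rather than $1/2$; this is exactly what Hoeffding's lemma (as opposed to a cruder sub-Gaussian estimate via symmetrization) delivers.
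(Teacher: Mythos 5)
Your proof is correct: the Doob martingale decomposition, the conditional range bound of $c$ for each increment (which indeed relies on the product structure of $\mcD^n$, as you note), Hoeffding's lemma, and the optimization $\lambda = 4\tau/(c^2 n)$ together give exactly the stated bound $\exp(-2\tau^2/(c^2 n))$. The paper itself states this as a standard fact (McDiarmid's inequality) without proof, and your argument is precisely the canonical Azuma--Hoeffding/martingale proof one would cite for it, with the sharp constant.
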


% \Cref{lem:leaf-result} will follow\anote{fix order of lemmas} directly from \Cref{lem:many-sqs-to-one} and \Cref{lem:one-query}, which we prove in this section.  \violet{\Cref{lem:many-sqs-to-one} reduces an arbitrary SQ algorithm to a single SQ, and~\Cref{lem:one-query} proves~\Cref{thm:sq-formal} in the case of a SQ.}    

%The proof of \Cref{lem:leaf-result} will follow directly from \Cref{lem:one-query} and \Cref{lem:many-sqs-to-one}, which we prove in this section. 

% Our first step is to prove \Cref{lem:one-query} which is similar to \Cref{thm:sq-formal} in the case where $\mcA$ makes a single statistical query.\footnote{To prove \Cref{thm:sq-formal} in the case of a single statistical query, we could apply \Cref{lem:one-query} twice: Once to bound how large the adversary can make $\Psi(\hat{\bS})$ and once to bound how small it can make $\Psi(\hat{\bS})$. Instead, we will directly apply \Cref{lem:one-query} to prove the multi-query case of \Cref{thm:sq-formal}.}% \Cref{lem:many-sqs-to-one} then reduces any arbitrary SQ algorithm to a single SQ algorithm

We prove the following two Lemmas, restated for convenience. 

\onequery*
% \begin{lemma}[\Cref{thm:sq-formal} in the case of a single SQ]
%     \label{lem:one-query}
% Let $\Psi:\mcX \to [-1,1]$ be a statistical query, $T \in [-1,1]$, and suppose: 
%     \[
%         \Psi(\wh{\mcD}) \leq T \quad\quad \text{for all $\wh{\mcD}$ that are $\eta$-close to $\mcD$}.
%   \]
%     Then, for any $\tau > 0$ and sample size $n \in \N$, the probability over $\bS \sim \mcD^n$ that there is some $\wh{\bS}$ that is $\eta$-close to $\bS$ satisfying
%     \[
%         \Psi(\wh{\bS}) \geq T + \frac{\tau}{2}
%     \]
%     is at most $\exp\left(-\frac{\tau^2n}{8\ell^2} \right)$. 
% \end{lemma}

\begin{proof}%\lnote{Moved McDiarmid out of this proof} 
    For any sample $S \in \mcX^n$, define
    \begin{align*}
        f(S) \coloneqq \sup_{\violet{\wh{\mcU(S)}} \text{ is $\eta$-close to }\mcU(S)} \left\{ \Psi(\violet{\wh{\mcU(S)}})\right\}.
    \end{align*}
    We will show that $f$ satisfies the $c \coloneqq \frac{2\ell}{n}$-bounded difference property. Consider any samples $S, S' \in \mcX^n$ that differ in only a single point. We will show that for every point in the set \[\{\Psi(\violet{\wh{\mcU(S)}}) \mid \violet{\wh{\mcU(S)}} \text{ is $\eta$-close to }\mcU(S) \},\]  there is some point in the set \[ \{\Psi(\violet{\wh{\mcU(S')}}) \mid \violet{\wh{\mcU(S')}} \text{ is $\eta$-close to }\mcU(S') \}\]  that differs from it by at most $\pm \frac{2\ell}{n}$, \violet{and vice versa}. This implies that $f$ satisfies the $c \coloneqq \frac{2\ell}{n}$-bounded difference property.
    
    As $S$ and $S'$ only differ in a single piont,
    \begin{align*}
        \TV(\mcU(S), \mcU(S')) \leq \frac{1}{n}.
    \end{align*}
    Furthermore by \Cref{def:local}, for any $\violet{\wh{\mcU(S)}}$ that is $\eta$-close to $\mcU(S)$, there is some $\violet{\wh{\mcU(S')}}$ that is $\eta$-close to $\mcU(S')$ satisfying
    \begin{align*}
        \TV(\violet{\wh{\mcU(S)}}, \violet{\wh{\mcU(S')}}) \leq \frac{\ell}{n}.
    \end{align*}
    Using the \Cref{def:tv-distance} and the fact that the range of $\Psi$ is a length-$2$ interval, the above implies that
    \begin{align*}
        \left|\Psi(\violet{\wh{\mcU(S)}}) - \Psi(\violet{\wh{\mcU(S')}})\right| \leq \frac{2\ell}{n}
    \end{align*}
    proving that $f$ satisfies the $\frac{2\ell}{n}$-bounded difference property. By McDiarmid's inequality, $f(\bS)$ concentrates around its mean. Lastly, we show that
    \begin{align}
        \label{eq:oblivious-expectation}
         \Ex_{\bS \sim \mcD^n}[f(\bS)] \leq T.
    \end{align}
    Suppose for the sake of contradiction that $\Ex[f(\bS)] > T+\eps$ for some $\eps > 0$. For each $S \in \mathcal{X}^n$, let $\violet{\wh{\mcU(S)}}$ be $\eta$-close to $\mcU(S)$ and satisfy $\Psi(\violet{\wh{\mcU(S)}}) \geq f(S) - \eps$ (which exists by the definition of $f$). We can define $\wh{\mcD}$ as the distribution where, to sample $\bx \sim \wh{\mcD}$, we
    \begin{enumerate}
        \item Draw an i.i.d.~sample $\bS \sim \mcD^n$.
        \item Draw $\bx \sim \violet{\wh{\mcU(\bS)}}$ uniformly.
    \end{enumerate}
    By \Cref{fact:sample-to-dist}, $\wh{\mcD}$ is $\eta$-close to $\mcD$. Then,
    % \begin{align*}
    %     T &\geq \Psi(\wh{\mcD}) \\
    %     &= \Ex_{\bS \sim \mcD^n}[\Psi(\violet{\wh{\mcU(\bS)}})] \\
    %     & \geq \Ex_{\bS \sim \mcD^n}[\Psi(f(\bS)) - \eps] \\
    %     &= \Ex[f(\bS)]- \eps > T
    % \end{align*}
    \[ 
        T \geq \Psi(\wh{\mcD}) 
        = \Ex_{\bS \sim \mcD^n}[\Psi(\violet{\wh{\mcU(\bS)}})] 
         \geq \Ex_{\bS \sim \mcD^n}[f(\bS) - \eps]
        = \Ex[f(\bS)]- \eps > T. 
\]     This is a contradiction, so \Cref{eq:oblivious-expectation} holds. \Cref{lem:one-query} follows from McDiarmid's inequality applied to $f$.
\end{proof}

% \begin{lemma}[\violet{$k$-query SQ algorithm to a single SQ}]
%     \label{lem:many-sqs-to-one}
%     Fix any $k$ statistical queries $\phi^{(1)}, \ldots, \phi^{(k)}:\mcX \to [-1,1]$ and $k$ values $v_1, \ldots, v_k$. Suppose that there is no $\wh{\mcD}$ that is $\eta$-close to $\mcD$ satisfying
%     \begin{align*}
%         \phi^{(i)}(\wh{\mcD}) \in v_i \pm \tau \quad \quad \text{for every }i \in [k].
%     \end{align*}
%     Then there exists a single statistical query $\Psi: \mcX \to [-1,1]$ and threshold $T$ with the following properties.
%     \begin{enumerate}
%         \item \label{criteria:oblivious-small}  $\Psi(\wh{\mcD}) \leq T$ for every $\wh{\mathcal{D}}$ that is $\eta$-close to $\mathcal{D}$. 
%         \item \label{criteria:adaptive-large} For any distribution $\mathcal{E}$ satisfying $\phi_i(\mathcal{E}) \in v_i \pm \frac{\tau}{2}$ for each $i \in [k]$, it is also true that $\Psi(\mathcal{E}) \geq T + \frac{\tau}{2}$.
%     \end{enumerate}
% \end{lemma}

\duality*
\begin{proof} 
%  \lnote{Moved separating hyperplane theorem out of this proof to avoid a nested proof. } 
    We'll actually prove a slightly more general result. We'll show that for any any \emph{distribution} $\mcE$ satisfying $\phi^{(i)}(\mcE) \in v_i \pm \frac{\tau}{2}$ for each $i \in [k]$, it is also true that $\Psi(\mcE) \geq T + \frac{\tau}{2}$. \Cref{lem:many-sqs-to-one} follows by setting $\mcE = \mcU(\wh{S})$.

    We define $A \in \R^d$ to be
    \begin{align*}
        A \coloneqq \left\{\left(\phi^{(1)}(\wh{\mcD}), \ldots, \phi^{(k)}(\wh{\mcD})\right) \mid \wh{\mcD} \text{ is $\eta$-close to }\mcD\right\},
    \end{align*}
    which is convex since the cost function is closed under mixtures (\Cref{def:closed-under-mixtures}). 
    % \gray{\lnote{Todo: Let's just inline this rather than having it be a stand alone proposition, again just to avoid nesting.} The convexity of $A$ is a direct consequence of the following proposition.
    % \begin{proposition}
    %     For any distribution $\mcD$ over domain $\mcX$ and parameter $\theta \in (0,1)$, if $\wh{\mcD}_1$ and $\wh{\mcD}_2$ are both $\eta$-close to $\mcD$, then the mixture $\theta \cdot \wh{\mcD}_1 + (1 - \theta) \cdot  \wh{\mcD}_2$ is also $\eta$-close to $\mcD$.
    % \end{proposition}
    % \begin{proof}
    %     Immediate from \Cref{def:closed-under-mixtures}, setting $\mcD_1 = \mcD_2 = \mcD$.
    % \end{proof}} 
    We define $B$ to be
    \begin{align*}
        B \coloneqq \left\{b \in \R^k \mid b_i \in v_i \pm \tau \text{ for all }i \in [k]\right\},
    \end{align*}
    which is convex since it is the intersection of halfspaces. By the assumptions of \Cref{lem:many-sqs-to-one}, $A$ and $B$ are disjoint. Let $w \in \R^k$ and $T \in R$ be the vector and threshold respectively guaranteed to exist by the separating hyperplane theorem, normalized so that $\|w\|_1 = 1$. We define
    \begin{align*}
        \Psi(x) \coloneqq \sum_{i \in [k]}w_i \cdot \phi^{(i)}(x).
    \end{align*}
    As $\|w\|_1 = 1$ and the range of each $\phi^{(i)}(x) \in [-1,1]$ for each $x \in \mcX$, it is also true that $\Psi(x) \in [-1,1]$. We show that $\Psi, T$ meet the two criteria of \Cref{lem:many-sqs-to-one}. The \hyperref[criteria:oblivious-small]{first criteria} holds by the separating hyperplane theorem. The \hyperref[criteria:adaptive-large]{second criteria} is equivalent to showing that any $b \in B_{\mathrm{inner}}$ satisfies $b \cdot w \geq T +\lfrac{\tau}{2}$ where
    \begin{align*}
        B_{\mathrm{inner}} \coloneqq \left\{b \in \R^k \mid b_i \in v_i \pm \lfrac{\tau}{2} \text{ for all }i \in [k]\right\}.
    \end{align*}
    There must be a minimal point for $b \cdot w$ at a ``corner" of $B$. Let $b^\star$ be such a minimal point (i.e. $(b^\star)_i = v_i + c_i \cdot \tau$ for $c_i \in \bits$). For any $b \in B_{\mathrm{inner}}$
    \begin{align*}
        b\cdot w &= b^\star \cdot w + \sum_{i \in [k]} w_i (b_i - b^\star_i) \\
        &= b^\star \cdot w + \sum_{i \in [k]} |w_i| \cdot |b_i - b^\star_i| \tag{$b^\star$ minimal for $b\cdot w$ over $b \in B$}\\
        &\geq b^\star \cdot w + \sum_{i \in [k]} |w_i| \cdot \lfrac{\tau}{2} \tag{$|b_i - b^\star_i| \geq \tau/2$ for $b \in B_{\mathrm{inner}}$}\\
        &\geq T + \lfrac{\tau}{2}. \tag{Separating hyperplane theorem, $\|w\|_1 = 1$}
    \end{align*}
    This implies the \hyperref[criteria:adaptive-large]{second criteria} of \Cref{lem:many-sqs-to-one}.
\end{proof}

\violet{
\begin{remark}
    \label{remark:SQ-strong}
    \Cref{thm:sq-formal} also holds with strong adaptive adversaries (\Cref{def:strong-adaptive} in \Cref{sec:technical remarks}) rather than just adaptive adversaries, with the appropriate changes in constants. The proof only differs in \Cref{lem:one-query}. We then wish to bound the probability that the adversary can make $\Psi(\hat{\bS}') \geq T + \tau/2$, where $\hat{\bS}$ and $\hat{\bS}'$ are as in \Cref{def:strong-adaptive}. For constant $\ell$,
    \begin{align*}
        \Prx[\Psi(\hat{\bS}') \geq T + \tau/2] &\leq  \Prx_{\hat{\bS}, \hat{\bS}'}\left[|\Psi(\hat{\bS}) - \Psi(\hat{\bS}')| \geq \tau/4\right] + \Prx_{\hat{\bS}}[\Psi(\wh{\bS}) \geq T + \tau/4] \\
        &\leq  \Prx_{\hat{\bS}, \hat{\bS}'}\left[\TV(\mcU(\hat{\bS}), \mcU(\hat{\bS}')) \geq \tau/8\right] + \exp(-O(\tau^2n)) \tag{\Cref{lem:one-query}} \\
        & \leq \exp(-O(\tau^2n)). \tag{\Cref{def:strong-adaptive}}
         % \Prx_{\hat{\bS}, \hat{\bS}'}\left[|\Psi(\hat{\bS}) - \Psi(\hat{\bS}')| \geq t\right] \leq \Prx_{\hat{\bS}, \hat{\bS}'}\left[\TV(\mcU(\hat{\bS}), \mcU(\hat{\bS}')) \geq t\right] \leq \exp(-O(nt^2))
    \end{align*}
    Once \Cref{lem:one-query} is modified to handle strong adaptive adversaries, the remainder of the proof of \Cref{thm:sq-formal} applies unchanged.
\end{remark}

}

\section{Proof of~\Cref{thm:if-can-be-done}: If adaptivity can be neutralized, subsampling does it}

\violet{
Towards tackling \Cref{question} for all algorithms, we define the subsampling filter, a natural ``wrapper algorithm" that operates only on samples and can be applied to any existing algorithm:

\begin{definition}[Subsampling Filter]
 Define the subsampling filter $\Phi_{m\to n} : \mcX^m \to \mcX^n$ that, given a set $S \in\mathcal{X}^m$, subsamples $n$ elements $\bS'\sim \mathcal{U}(S)^n$ and returns them. We will also write $\Phi_{*\to n}$ when the size of $S$ is variable. 
\end{definition}

Intuitively, by requesting a large number of points $m$ and randomly subsampling points for the original algorithm, the filter should able to neutralise some of the power of the adaptive adversary, since the adversary cannot know which subsample the algorithm will receive.  In this section we will prove \Cref{thm:if-can-be-done} which, informally speaking, states that {\sl if} the noise model is such that adaptivity can be neutralised, then subsampling does it. In the next section, we carry out this proof strategy for the specific case of additive noise and establish~\Cref{thm:add}. % which shows that subsampling neutralises adaptivity for additive adversaries.

}

\subsection{Definitions and the formal statement of~\Cref{thm:if-can-be-done}}

\newcommand{\Adaptivemax}{\mathrm{Adaptive\text{-}Max}}

\newcommand{\Adaptivemin}{\mathrm{Adaptive\text{-}Min}}

\newcommand{\Strongmax}{\mathrm{Strong\text{-}Adaptive\text{-}Max}}

\newcommand{\Strongmin}{\mathrm{Strong\text{-}Adaptive\text{-}Min}}

\newcommand{\Obliviousmax}{\mathrm{Oblivious\text{-}Max}}

\newcommand{\Obliviousmin}{\mathrm{Oblivious\text{-}Min}}

\newcommand{\Estmax}{\mathrm{Est\text{-}Max}}

\newcommand{\Estmin}{\mathrm{Est\text{-}Min}}

We begin by formalizing what it means for the behavior of an algorithm $\mcA$ in the presence of an oblivious adversary to be equivalent to that of an algorithm $\mcA'$ in the presence of an adaptive adversary. Roughly speaking, that corresponds to the range of acceptance probabilities $\mcA$ can have with all possible oblivious adversaries being close to the range of acceptance probabilities that $\mcA'$ can have with all possible adaptive adversaries.

 \begin{definition} %
    \label{def:Amax}
Fix a cost function $\rho$, budget $\eta \ge 0$, and distribution $\mathcal{D}$ over $\mathcal{X}$.  For an algorithm $\mcA:\mcX^* \to \zo$, we define: 
    \begin{align*}
        \Obliviousmax_{\rho,\eta}(\mathcal{A},\mathcal{D},n) \coloneqq \sup_{\rho(\mcD, \wh{\mcD}) \leq \eta}\left\{ \Ex_{\bS \sim \wh{\mcD}^n}[\mcA(\bS)]\right\}, \\
        \Obliviousmin_{\rho,\eta}(\mathcal{A},\mathcal{D},n) \coloneqq \inf_{\rho(\mcD, \wh{\mcD}) \leq \eta}\left\{\Ex_{\bS \sim \wh{\mcD}^n}[\mcA(\bS)]\right\},
    \end{align*}
the maximum and minimum acceptance probabilities of $\mcA$ given an obliviously corrupted $\mcD$.  We similarly define the adaptive versions:      \begin{align*}
        \Adaptivemax_{\rho,\eta}(\mathcal{A},\mathcal{D},m) \coloneqq \Ex_{\bS \sim \mcD^m} \Bigg[\sup_{\rho(\mcU(\bS), \mcU(\wh{\bS})) \leq \eta} \left\{ \mcA(\wh{\bS}) \right\} \Bigg], \\
        \Adaptivemin_{\rho,\eta}(\mathcal{A},\mathcal{D},m) \coloneqq \Ex_{\bS \sim \mcD^m} \Bigg[\inf_{\rho(\mcU(\bS), \mcU(\wh{\bS})) \leq \eta} \left\{ \mcA(\wh{\bS}) \right\} \Bigg].
    \end{align*}
\end{definition}

\begin{definition}[$\eps$-equivalent]
    \label{def:equivalent}
    Fix a cost function $\rho$ and a budget $\eta \ge 0$.  Let  $\mathcal{A},\mathcal{A}' :\mathcal{X}^* \to \zo$  be two algorithms.  We say that $\mathcal{A}$ in the presence of $(\rho,\eta)$-oblivious adversaries is {\sl $(n,m,\eps)$-equivalent} to $\mathcal{A}'$ in the presence of $(\rho,\eta)$-adaptive adversaries if the following holds \violet{for all distributions $\mathcal{D}$ over $\mathcal{X}$}: 
\[ \Adaptivemax_{\rho,\eta}(\mathcal{A}',\mathcal{D},m) = \Obliviousmax_{\rho,\eta}(\mathcal{A},\mathcal{D},n) \pm \eps, \]
and likewise for $\mathrm{Min}$ instead of $\mathrm{Max}$. If the algorithms $\mcA$ or $\mcA'$ are randomized, then these expectations are also over the randomness of the algorithms.
\end{definition}

We now state the formal version of \Cref{thm:if-can-be-done}: 
\begin{theorem}[Formal version of \Cref{thm:if-can-be-done}: If it is possible, subsampling does it] 
    \label{thm:if-can-be-done-formal} 
    Fix a cost function $\rho$ and budget $\eta \ge 0$.  Suppose that $\mathcal{A}$ and $\mathcal{A}'$ are algorithms where $\mathcal{A}$ in the presence of $(\rho,\eta)$-oblivious adversaries is $(n,m,\eps)$-equivalent to $\mathcal{A}'$ in the presence of $(\rho,\eta)$-adaptive adversaries.

    Consider the subsampling algorithm $\mathcal{A}_{\mathrm{sub}}\coloneqq \mathcal{A}  \circ \Phi_{*\to n}$ which, given a sample $S \in\mathcal{X}^*$, subsamples $n$ elements $\bS'\sim \mathcal{U}(S)^n$ and returns $\mathcal{A}(\bS')$.  For 
    \[ M \coloneqq O\left(\frac{m^2\log(1/\eps)^2}{\eps^5}\right),\]
   we have that $\mathcal{A}$ in the presence of $(\rho,\eta)$-oblivious adversaries is $(n,M,9\eps)$-equivalent to $\mathcal{A}_{\mathrm{sub}}$ in the presence of $(\rho,\eta)$-adaptive adversaries. 
\end{theorem}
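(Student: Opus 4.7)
The plan is to prove the Max case via a chain of approximate equalities linking $\Adaptivemax_{\rho,\eta}(\mathcal{A}_{\mathrm{sub}}, \mathcal{D}, M)$ to $\Obliviousmax_{\rho,\eta}(\mathcal{A}, \mathcal{D}, n)$; the Min case will follow by replacing each $\sup$ with $\inf$ throughout. The central observation is that the subsampling filter converts adaptivity into obliviousness relative to the empirical distribution: once the adversary has fixed $\wh{\bS}$, the $n$-point subsample drawn by $\Phi_{*\to n}$ consists of $n$ i.i.d.\ draws from $\mcU(\wh{\bS})$, so from $\mathcal{A}$'s viewpoint the corruption is oblivious with respect to the data-dependent distribution $\mcU(\bS)$.

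First, I would establish
\[
\Adaptivemax(\mathcal{A}_{\mathrm{sub}}, \mathcal{D}, M) \;\approx\; \Ex_{\bS \sim \mathcal{D}^M}\bigl[\Obliviousmax(\mathcal{A}, \mcU(\bS), n)\bigr].
\]
Since $\mathcal{A}_{\mathrm{sub}}(\wh{\bS}) = \Ex_{\bS' \sim \mcU(\wh{\bS})^n}[\mathcal{A}(\bS')]$ depends only on the distribution $\mcU(\wh{\bS})$, the adversary's choice of $\wh{\bS}$ is effectively a choice of rational distribution $\eta$-close to $\mcU(\bS)$; by taking $|\wh{\bS}|$ sufficiently large the adversary can realize any target distribution in that $\rho$-ball up to arbitrarily small TV error, and the inner expectation is $O(n)$-Lipschitz in TV so this discretization contributes only negligible error. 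Applying the $(n,m,\eps)$-equivalence hypothesis at the (random) distribution $\mcU(\bS)$ then converts $\Obliviousmax(\mathcal{A}, \mcU(\bS), n)$ into $\Adaptivemax(\mathcal{A}', \mcU(\bS), m) \pm \eps$.

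The heart of the argument is showing
\[
\Ex_{\bS \sim \mathcal{D}^M}\bigl[\Adaptivemax(\mathcal{A}', \mcU(\bS), m)\bigr] \;\approx\; \Adaptivemax(\mathcal{A}', \mathcal{D}, m),
\]
after which one final application of the equivalence hypothesis at $\mathcal{D}$ yields $\Obliviousmax(\mathcal{A}, \mathcal{D}, n) \pm \eps$. Writing $h(\bT) \coloneqq \sup_{\wh{\bT}} \mathcal{A}'(\wh{\bT})$, the left-hand side equals $\Ex_{\bS,\bT}[h(\bT)]$ with $\bT \sim \mcU(\bS)^m$, and the right-hand side equals $\Ex_{\bT \sim \mathcal{D}^m}[h(\bT)]$. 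I would couple these two distributions of $\bT$ by viewing $\bT$ as obtained from $m$ indices drawn uniformly with replacement from $[M]$: by a birthday-paradox bound the indices are all distinct with probability $1 - O(m^2/M)$, and on that event the exchangeability of the i.i.d.\ sample $\bS$ gives $\bT \sim \mathcal{D}^m$ exactly; since $h \in [0,1]$, the two expectations differ by at most $O(m^2/M)$.

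The main obstacle is this third step, in which $M$ must be chosen polynomially larger than $m$. Achieving the stated $M = O(m^2 \log^2(1/\eps)/\eps^5)$ bound requires a somewhat careful concentration argument---likely conditioning on a high-probability event that $\mcU(\bS)$ is ``well-behaved'' so that all four approximations hold simultaneously with small loss, and absorbing the rare-event contributions via a Markov-style bound; the extra polynomial and logarithmic factors in $1/\eps$ arise from balancing these concentration events against the quantization error of the first step. Combining everything, the total error is bounded by two applications of the equivalence hypothesis ($2\eps$) plus the quantization loss plus the sampling loss, which for the chosen $M$ is at most $9\eps$, as desired.
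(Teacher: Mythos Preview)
Your chain of approximations is correct and in fact cleaner than the paper's argument. Steps 1, 2, and 4 are exactly right (step 1 is the paper's \Cref{cor:amax-sufficient}, and under the paper's convention that all distributions have rational weights it is an exact identity, so there is no quantization loss at all). Your step 3 is also correct: with $h(\bT)\coloneqq \sup_{\wh{\bT}}\mathcal{A}'(\wh{\bT})\in[0,1]$, the left side is $\Ex_{\bT\sim \Phi_{M\to m}\circ\mathcal{D}^M}[h(\bT)]$ and the right side is $\Ex_{\bT\sim\mathcal{D}^m}[h(\bT)]$, so they differ by at most $\TV(\mathcal{D}^m,\Phi_{M\to m}\circ\mathcal{D}^M)\le \binom{m}{2}/M$ via the birthday coupling you describe (this is exactly the paper's \Cref{lem:sample-small-tv}). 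The total error is therefore $2\eps + O(m^2/M)$, and taking $M=O(m^2/\eps)$ already gives $3\eps$-equivalence---strictly stronger than the stated bound. Your final paragraph's worry about needing concentration to reach $M=O(m^2\log^2(1/\eps)/\eps^5)$ is misplaced: you have already proved more.

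The paper takes a different, more indirect route. Rather than chaining through $\Adaptivemax(\mathcal{A}',\cdot,m)$ directly, it first uses $\mathcal{A}'$ to build a sample-based estimator $\Estmax$ of $\Obliviousmax(\mathcal{A},\cdot,n)$ that is $\pm 2\eps$-accurate with probability $1-\eps$ using $m'=O(m\log(1/\eps)/\eps^2)$ samples (\Cref{lem:estimators-from-equivalent}). It then argues by contradiction: if $\Ex_{\bS}[\Obliviousmax(\mathcal{A},\mcU(\bS),n)]$ were far from $\Obliviousmax(\mathcal{A},\mathcal{D},n)$, one could use $O(1/\eps)$ copies of $\Estmax$ to build a test distinguishing $\mathcal{D}^{m'}$ from $\Phi_{M\to m'}\circ\mathcal{D}^M$, contradicting the birthday TV bound (\Cref{lem:estimator-to-subsample}). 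This detour through estimators is what produces the extra $\log^2(1/\eps)/\eps^4$ factors. The estimator framework does get reused later in the paper (\Cref{lem:est-to-equivalent-weaker,lem:special-estimator-to-equiv}), which may explain why the authors set things up this way, but for the theorem at hand your direct argument is both shorter and quantitatively sharper.
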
 

\begin{remark}[Search to decision reduction]
    In both \Cref{thm:add,thm:if-can-be-done}, we focus on decision algorithms that output a single bit $\zo$, rather than the more general setting of search algorithms that output an answer from some set $\mcY$. This is without loss of generality. Given a search algorithm $\mcA: \mcX^* \to \mcY$, and any set of ``good outputs" $Y \subseteq \mcY$, we could define an algorithm $\mcB \coloneqq \Ind_Y \circ \mcA$ where $\mcB(S) = 1$ iff $\mcA(S) \in Y$. Then, we can directly apply \Cref{thm:add,thm:if-can-be-done} to $\mcB$. Hence, \Cref{thm:add,thm:if-can-be-done} hold for search algorithms with the appropriate definition of ``equivalence" for search algorithms: We say $\mcA:\mcX^\star \to \mcY$ and $\mcA':\mcX^\star \to \mcY$ are $\eps$-equivalent if for every $Y \subseteq \mcY$, $\Ind_Y \circ \mcA$ and $\Ind_Y \circ \mcA'$ are $\eps$-equivalent (according to \Cref{def:equivalent}).
\end{remark}

% \gray{
% \begin{theorem} 
%     For any corruption model $(\rho, \eta)$\lnote{Dropped the rational stuff}, parameters $n, m, M \in \N$, $\eps \in (0,1)$ and algorithm $\mcA$ mapping size-$n$ samples to $\zo$, suppose that there some algorithm $\mcA'$ where $\mcA$ in the presence of $(\rho, \eta)$-oblivious adversaries is $(m,M,\eps)$-equivalent\lnote{Where is the dependence on $M$ in this theorem statement?  Do we just need it to be finite?} to $\mcA'$ in the presence of $(\rho, \eta)$-adaptive adversaries. Then, consider the subsampling algorithm \violet{$\mcA_{\mathrm{sub}} \coloneqq \mathcal{A}\circ \Phi_{*\to n}$} which, given a sample $S \in \mcX^*$, samples $n$ elements of $S$ uniformly (with replacement), $\bS_n \sim \mcU(S)^n$ and then returns $\mcA(\bS_n)$. For
%     \begin{align*}
%         N &\coloneqq \frac{3m^2\log(2/\eps)^2}{\eps^5},
%     \end{align*}
%     $\mcA$ is the presence of $(\rho, \eta)$-oblivious adversaries is $(N,\infty,9\eps)$\lnote{Minor, but maybe we can have a big-Oh in the definition of $N$ and drop all these constant factors}-equivalent to $\mcA_{\mathrm{sub}}$ in the presence of $(\rho, \eta)$-adaptive adversaries. 
% \end{theorem} 
% }
%As mentioned in the introduction, in the context of~\Cref{thm:if-can-be-done-formal} $\mathcal{A}'$ does not have to be computationally efficient or even explicit. 

\subsection{Proof of~\Cref{thm:if-can-be-done-formal}}

Our proof of \Cref{thm:if-can-be-done-formal} relies on the following simple lemma. Roughly speaking, it states that sampling with replacement and sampling without replacement are nearly indistinguishable when the population is a quadratic factor larger than the number of samples.
\begin{lemma}
    \label{lem:sample-small-tv}
    For any distribution $\mcD$ and integers $m,M \in \N$, let $\Phi_{M\to m}\circ \mathcal{D}^M$ be the distribution with the following generative process: first draw a size-$M$ sample $\bS \sim \mathcal{D}^M$, and then subsample, with replacement, $m$ points from $\bS$.  Then 
    \[ \TV(\mathcal{D}^m,\Phi_{M\to m}\circ \mathcal{D}^M)\le \frac{\binom{m}{2}}{M}.\] 
    \end{lemma}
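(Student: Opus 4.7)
The plan is to use the coupling characterization of total variation distance (second item of \Cref{def:tv-distance}) and construct an explicit coupling. Start by generating a sample from $\Phi_{M\to m}\circ \mathcal{D}^M$ in the canonical way: draw $\bS = (\bS_1,\ldots,\bS_M) \sim \mathcal{D}^M$ and indices $\bi_1,\ldots,\bi_m$ i.i.d.\ uniform in $[M]$, then set $\bT = (\bS_{\bi_1},\ldots,\bS_{\bi_m})$. Let $E$ be the event that $\bi_1,\ldots,\bi_m$ are all distinct.

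The key observation is that conditional on $E$, the sample $\bT$ is distributed exactly as $\mathcal{D}^m$: once we condition on any fixed tuple of $m$ distinct indices in $[M]$, the corresponding coordinates of $\bS$ are i.i.d.\ from $\mathcal{D}$ by the independence of the coordinates of $\bS$. Using this, I would define $\bT'$ (the $\mathcal{D}^m$-sample in the coupling) as follows: on the event $E$, set $\bT' \coloneqq \bT$; on the event $\neg E$, draw $\bT'$ as an independent fresh sample from $\mathcal{D}^m$. Since the conditional law of $\bT$ given $E$ equals $\mathcal{D}^m$, the overall marginal of $\bT'$ remains $\mathcal{D}^m$, so this is a valid coupling of $\mathcal{D}^m$ and $\Phi_{M\to m}\circ \mathcal{D}^M$.

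It remains to bound the miscoupling probability. By construction, $\bT \neq \bT'$ can occur only on $\neg E$, so
\[
    \TV(\mathcal{D}^m, \Phi_{M\to m}\circ\mathcal{D}^M) \leq \Pr[\bT \neq \bT'] \leq \Pr[\neg E].
\]
A union bound over the $\binom{m}{2}$ pairs $\{j,k\} \subseteq [m]$ and the observation that $\Pr[\bi_j = \bi_k] = 1/M$ for each such pair yields $\Pr[\neg E] \leq \binom{m}{2}/M$, which finishes the proof.

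The main obstacle is conceptual rather than technical: one needs to notice that sampling $m$ distinct coordinates from an i.i.d.\ block of size $M$ produces an i.i.d.\ sample of size $m$, which is what lets the ``good'' event $E$ be coupled perfectly. Once this observation is in hand, the remaining calculation is just the standard birthday union bound, so I don't expect any real difficulty.
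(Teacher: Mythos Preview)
Your proposal is correct and takes essentially the same approach as the paper: both construct a coupling that agrees whenever the $m$ subsampling indices are distinct, and bound the collision probability by the birthday union bound $\binom{m}{2}/M$. The only cosmetic difference is that the paper builds the coupling lazily (drawing a fresh $\mathcal{D}$-sample each time a new index is hit), whereas you draw everything up front and condition on the distinctness event; the substance is identical.
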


% \gray{ 
% \begin{lemma}
%     %\label{lem:sample-small-tv}
%     For any distribution $\mcD$, integers $n, N \geq 1$, let $\mcD_{N}^n$ be the distribution with the following generative process. To sample $\bS \sim \mcD_{N}^n$, first draw a size-$N$ sample $\bS_N \sim \mcD^N$, and then subsample $n$ points from it (with repeats), $\bS \sim \mcU(\bS_N)^n$. Then,
%     \begin{align*}
%         \TV(\mcD^n, \mcD_{N}^n) \leq \frac{\binom{n}{2}}{N} < \frac{n^2}{N}.
%     \end{align*}
% \end{lemma}
% }

\begin{proof} 
We describe a coupling of $\bS \sim \mcD^m$ and $\bS' \sim \Phi_{M\to m}\circ \mathcal{D}^M$  such that $\Pr[\bS \neq \bS'] \leq \binom{m}{2}/M$. \begin{enumerate}
        \item Initialize $S$ and $S'$ to be empty sets, and $y_1, \ldots, x_M$ to be unset variables. 
        \item Repeat $m$ times:
        \begin{enumerate}
            \item Draw $\bi \sim [M]$ uniformly.
            \item If $y_{\bi}$ is unset, draw $\bx \sim \mcD$ and set $y_{\bi} \leftarrow \bx$. Then, add $\bx$ to both $\bS$ and $\bS'$.
            \item Otherwise, add $y_{\bi}$ to $\bS'$ and sample $\bx \sim \mcD$ to add to $\bS$.
        \end{enumerate}
    \end{enumerate}
    It is straightforward to verify that the above generative process leads to the distribution of $\bS$ and $\bS'$ being that of $\mcD^m$ and $\Phi_{M\to m} \circ \mcD^M$ respectively. Furthermore, if $\bS \neq \bS'$, that means there is some index $i \in [M]$ that was sampled at least twice. If we fix $j_1 \neq j_2 \in [m]$ and some $i \in [M]$, the probability $i$ is the index chosen at steps $j_1$ and $j_2$ is $1/M^{2}$. Union bounding over the $M$ choices for $i$ and $\binom{m}{2}$ for $j_1, j_2$ gives that
\[         \Pr[\bS \neq \bS'] \leq \frac{\binom{m}{2}}{M}.
   \] 
    The desired result follows from the definition of total variation cost, \Cref{def:tv-distance}.
\end{proof} 

% \gray{ 
% \begin{proof}
%       We describe a coupling of $\bS \sim \mcD^n$ and $\bS_N \sim \mcD_N^n$ such that $\Pr[\bS \neq \bS_N] \leq \binom{n}{2}/N$. \begin{enumerate}
%         \item Initialize $S$ and $S_N$ to be empty, and $x_1, \ldots, x_N \leftarrow \emptyset$.
%         \item Repeat $n$ times:
%         \begin{enumerate}
%             \item Draw $\bi \sim [N]$ uniformly.
%             \item If $x_{\bi} = \emptyset$, draw $\bx \sim \mcD$ and set $x_{\bi} \leftarrow \bx$. Then, add $\bx$ to both $\bS$ and $\bS_N$.
%             \item Otherwise, add $x_{\bi}$ to $\bS_N$ and sample $\bx \sim \mcD$ to add to $\bS$.
%         \end{enumerate}
%     \end{enumerate}
%     It is straightforward to verify that the above generative process leads to the distribution of $\bS$ and $\bS_N$ being that of $\mcD^n$ and $\mcD_N^n$ respectively. Furthermore, if $\bS \neq \bS_N$, that means there is some index $i \in [N]$ that was sampled at least twice. If we fix $j_1 \neq j_2 \in [n]$ and some $i \in [N]$, the probability $i$ is the index chosen at steps $j_1$ and $j_2$ is $\frac{1}{N^2}$. Union bounding over the $N$ choices for $i$ and $\binom{n}{2}$ for $j_1, j_2$ gives that
% \[         \Pr[\bS \neq \bS_N] \leq \frac{\binom{n}{2}}{N}.
%   \] 
%     The desired result follows from the definition of total variation distance, \Cref{def:tv-distance}.
% \end{proof}}

\Cref{lem:sample-small-tv} will be used in conjunction with the following fact: 

\begin{fact}
    \label{fact:tv-test}
    Suppose there exists a test which, given $c$ samples from a distribution $\mathcal{E}$ that is either $\mcD_0$ or $\mcD_1$, returns $0$ if $\mathcal{E} = \mcD_0$ with probability at least $\frac{3}{4}$, and returns $1$ if $\mathcal{E} = \mcD_1$ with probability at least $\frac{3}{4}$. Then,
    \[ 
        \TV(\mcD_1, \mcD_2) \geq \frac{1}{2c}.
    \] 
\end{fact}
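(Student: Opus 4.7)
The plan is to prove this by combining two standard ingredients: the variational (best-test) characterization of total variation distance given in \Cref{def:tv-distance}, and the subadditivity of total variation distance under product measures. The statement is the classical ``sample complexity lower bound from TV distance'' in its contrapositive form, so there is no substantial obstacle; the only minor care required is going from a test on $c$ samples to a bound on $\TV(\mcD_0, \mcD_1)$ rather than on $\TV(\mcD_0^{\,c}, \mcD_1^{\,c})$.

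First I would view the test itself as a (possibly randomized) function $T : \mcX^c \to [0,1]$ giving the probability of outputting $1$. The hypothesis says
\[
\Ex_{\bS \sim \mcD_1^{\,c}}[T(\bS)] - \Ex_{\bS \sim \mcD_0^{\,c}}[T(\bS)] \;\geq\; \tfrac{3}{4} - \tfrac{1}{4} \;=\; \tfrac{1}{2}.
\]
Applying the first characterization in \Cref{def:tv-distance} to the product distributions on $\mcX^c$, this immediately yields
\[
\TV(\mcD_0^{\,c}, \mcD_1^{\,c}) \;\geq\; \tfrac{1}{2}.
\]

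Next I would invoke the standard subadditivity inequality $\TV(\mcD_0^{\,c}, \mcD_1^{\,c}) \leq c \cdot \TV(\mcD_0, \mcD_1)$. The cleanest way to justify this in the style already used in the paper is via the second (coupling) characterization in \Cref{def:tv-distance}: take an optimal coupling $(\bx, \bx')$ of $\mcD_0$ and $\mcD_1$ with $\Pr[\bx \neq \bx'] = \TV(\mcD_0, \mcD_1)$, form the $c$-fold product coupling of $\mcD_0^{\,c}$ and $\mcD_1^{\,c}$, and apply a union bound over the $c$ coordinates to conclude that the probability the product samples differ is at most $c \cdot \TV(\mcD_0, \mcD_1)$. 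Combining with the displayed lower bound gives
\[
c \cdot \TV(\mcD_0, \mcD_1) \;\geq\; \tfrac{1}{2},
\]
and dividing by $c$ completes the proof. (I would also note that the $\mcD_2$ in the statement is a typographical slip for $\mcD_1$.)
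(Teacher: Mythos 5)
Your proof is correct: the paper states \Cref{fact:tv-test} without proof (it is a standard distinguishing-to-TV lower bound), and your argument---applying the variational characterization of \Cref{def:tv-distance} to the product distributions $\mcD_0^{\,c},\mcD_1^{\,c}$ and then the coupling/union-bound subadditivity $\TV(\mcD_0^{\,c},\mcD_1^{\,c})\le c\cdot\TV(\mcD_0,\mcD_1)$---is exactly the canonical route the authors implicitly rely on. You are also right that the $\mcD_2$ in the statement is a typo for $\mcD_0$ (equivalently $\mcD_1$ vs.\ $\mcD_0$).
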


Together, \Cref{lem:sample-small-tv} and~\Cref{fact:tv-test} imply that for appropriately chosen $m$ and $M$, there is no sample-efficient test distinguishing $\mcD^m$ from $\Phi_{M\to m}\circ \mcD_M$ with high probability. We will use this to prove \Cref{thm:if-can-be-done-formal} by contradiction. 
Using the assumption that $\mathcal{A}'$ is equivalent to $\mathcal{A}$, we will design a sample-efficient test that approximates $\Obliviousmax_{\rho,\eta}$ and $\Obliviousmin_{\rho,\eta}$ for $\mathcal{A}$ with respect to both $\mathcal{D}^m$ and $\Phi_{M\to m}\circ \mathcal{D}^M$.  We then show that if $\mathcal{A}_{\mathrm{sub}} \coloneqq \Phi_{*\to n}\circ \mathcal{A}$ is {\sl not} equivalent to $\mathcal{A}$, then these values will distinguish the two distributions. 

The following lemma carries out the first part of this plan: 

\begin{lemma}
    \label{lem:estimators-from-equivalent}
    Let $\mathcal{A}$ and $\mathcal{A}'$ be as in~\Cref{thm:if-can-be-done-formal}. There is an estimator $\Estmax_{\rho,\eta}$ that uses 
\[         m' \coloneqq \frac{m \log(2/\eps)}{2\eps^2}
   \] 
     samples from a distribution $\mathcal{D}$ and returns an estimate of $\Obliviousmax_{\rho,\eta}(\mathcal{A},\mathcal{D},n)$ that is accurate to $\pm 2\eps$ with probability at least $1 - \eps$, and likewise an estimator $\Estmin_{\rho,\eta}$ for $\Obliviousmin_{\rho,\eta}$.   Formally, for all distributions $\mcD$ over  $\mcX$,
     \[ \Prx_{\bS\sim \mathcal{D}^{m'}}\Big[\Estmax_{\rho,\eta}(\bS) = \Obliviousmax_{\rho,\eta}(\mathcal{A},\mathcal{D},n) \pm 2\eps\Big] \ge 1-\eps,  \]
     and likewise for $\Estmin_{\rho,\eta}$ and $\Obliviousmin_{\rho,\eta}$.
\end{lemma}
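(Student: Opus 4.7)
\medskip
\noindent\textbf{Proof plan for~\Cref{lem:estimators-from-equivalent}.} The idea is to use the given $m'$ samples to form an empirical estimate of $\Adaptivemax_{\rho,\eta}(\mathcal{A}',\mathcal{D},m)$, then invoke the $(n,m,\eps)$-equivalence to translate this into an estimate of $\Obliviousmax_{\rho,\eta}(\mathcal{A},\mathcal{D},n)$. Concretely, I would define the estimator $\Estmax_{\rho,\eta}$ as follows. Partition the input sample $\bS \in \mathcal{X}^{m'}$ into $k \coloneqq m'/m = \log(2/\eps)/(2\eps^2)$ disjoint batches $\bS_1,\ldots,\bS_k$, each of size $m$. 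For each batch $\bS_i$, compute
\[
\by_i \coloneqq \sup_{\wh{S} \,:\, \rho(\mathcal{U}(\bS_i),\mathcal{U}(\wh{S})) \le \eta} \Ex[\mathcal{A}'(\wh{S})],
\]
where the expectation is over the internal randomness of $\mathcal{A}'$ if it is randomized, and return $\Estmax_{\rho,\eta}(\bS) \coloneqq \frac{1}{k}\sum_{i=1}^k \by_i$.

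Next, I would analyze this estimator in two steps. First, since each $\by_i \in [0,1]$ and the batches are independent, each $\by_i$ is an i.i.d.~draw with
\[
\Ex[\by_i] = \Ex_{\bS_i \sim \mathcal{D}^m}\Bigg[\sup_{\rho(\mathcal{U}(\bS_i),\mathcal{U}(\wh{S})) \le \eta}\Ex[\mathcal{A}'(\wh{S})]\Bigg] = \Adaptivemax_{\rho,\eta}(\mathcal{A}',\mathcal{D},m).
\]
Hoeffding's inequality then gives
\[
\Pr\Big[\,\big|\Estmax_{\rho,\eta}(\bS) - \Adaptivemax_{\rho,\eta}(\mathcal{A}',\mathcal{D},m)\big| \ge \eps\,\Big] \le 2\exp(-2k\eps^2) = 2\exp(-\log(2/\eps)) = \eps.
\]
Second, by the $(n,m,\eps)$-equivalence hypothesis of~\Cref{thm:if-can-be-done-formal},
\[
\big|\Adaptivemax_{\rho,\eta}(\mathcal{A}',\mathcal{D},m) - \Obliviousmax_{\rho,\eta}(\mathcal{A},\mathcal{D},n)\big| \le \eps.
\]
Combining these two bounds via the triangle inequality yields the $\pm 2\eps$ accuracy with probability at least $1-\eps$, as required. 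The estimator $\Estmin_{\rho,\eta}$ is defined and analyzed identically, but with $\inf$ replacing $\sup$ in the definition of $\by_i$, and with $\Adaptivemin$ and $\Obliviousmin$ in place of their Max counterparts.

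The main (mild) obstacle is simply noting that computing $\by_i$ requires optimizing over all admissible corruptions of $\bS_i$, which need not be computationally efficient. However, \Cref{thm:if-can-be-done-formal} imposes no computational efficiency requirement on $\mathcal{A}'$, and correspondingly none on the estimator: we only need sample efficiency, which is provided by the batching argument above.
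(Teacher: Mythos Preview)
Your proposal is correct and essentially identical to the paper's own proof: partition the $m'$ samples into $\log(2/\eps)/(2\eps^2)$ batches of size $m$, for each batch compute the supremum of $\mathcal{A}'$ over admissible corruptions, average the results, apply Hoeffding (the paper says ``Chernoff'') to get within $\pm\eps$ of $\Adaptivemax_{\rho,\eta}(\mathcal{A}',\mathcal{D},m)$ with probability $1-\eps$, and then invoke the $(n,m,\eps)$-equivalence for the remaining $\eps$. Your remark on computational efficiency is also exactly the observation the paper makes immediately after the proof.
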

\begin{proof}
We will prove the lemma for $\Estmax_{\rho,\eta}$ and $\Obliviousmax_{\rho,\eta}$; the proof for Min instead of Max is identical.  $\Estmax_{\rho,\eta}$ computes an estimate satisfying: 
\begin{equation}  \Prx_{\bS\sim \mathcal{D}^{m'}}\Big[ \Estmax_{\rho,\eta}(\bS) = \Adaptivemax_{\rho,\eta}(\mathcal{A'},\mathcal{D},m)\pm \eps \Big] \le \eps. \label{eq:avg-estimate}
\end{equation} 
This is sufficient to guarantee that $\Estmax_{\rho,\eta}$'s estimate is within $\pm 2\eps$ of $\Obliviousmax_{\rho,\eta}(\mathcal{A},\mathcal{D},n)$ by our assumption that $\mathcal{A}$ is $(n,m,\eps)$-equivalent to $\mathcal{A}'$.  To provide such an estimate, $\Estmax_{\rho,\eta}(\bS)$ draws $\log(2/\eps)/(2\eps^2)$ many size-$m$ samples $\bS' \sim \mcD^m$. For each, it computes: 
\begin{equation}
    \label{eq:compute-sup}
    \sup_{\rho(\wh{\bS'},\bS')\le \eta} \big\{ \mathcal{A}'(\wh{\bS'})\big\}
\end{equation} 
and returns the average of these supremums.  By the Chernoff bound, this average satisfies \Cref{eq:avg-estimate}. 
\end{proof}

\begin{remark} 
We are only concerned with the sample efficiency of these estimators, not their time efficiency or even whether they are computable. Indeed, as stated, an algorithm computing the estimators would need to loop or infinitely many $\wh{\bS'} \in \mcX^*$ to compute \Cref{eq:compute-sup}. For us they are just an analytical tool used to prove~\Cref{thm:if-can-be-done-formal}, the conclusion of which gives an algorithm $\mcA_{\mathrm{sub}} \coloneqq \mathcal{A}\circ \Phi_{*\to n}$ that inherits the time efficiency of $\mcA$.
\end{remark} 

The next lemma notes that the  $\Adaptivemax_{\rho,\eta}$ of $\mathcal{A}_{\mathrm{sub}} \coloneqq \mathcal{A}\circ \Phi_{*\to n}$ can be expressed in terms of the $\Obliviousmax_{\rho,\eta}$ of $\mathcal{A}$.  Formally:  

\begin{lemma}
    \label{cor:amax-sufficient}
$\ds \Adaptivemax_{\rho,\eta}(\mathcal{A}_{\mathrm{sub}},\mathcal{D},M) = \Ex_{\bS\sim \mathcal{D}^M}\big[ \Obliviousmax_{\rho,\eta}(\mathcal{A},\mathcal{U}(\bS),n)\big],$
    and likewise for $\mathrm{Min}$ instead of $\mathrm{Max}$.
\end{lemma}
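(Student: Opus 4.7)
The plan is to expand both sides by their definitions and show they agree pointwise inside the outer expectation over $\bS\sim \mathcal{D}^M$. First I would unfold $\Adaptivemax_{\rho,\eta}(\mathcal{A}_{\mathrm{sub}}, \mathcal{D}, M)$ using \Cref{def:Amax}, together with the fact that $\mathcal{A}_{\mathrm{sub}} = \mathcal{A}\circ\Phi_{*\to n}$ draws $n$ i.i.d.~points from $\mathcal{U}(\wh{\bS})$ (its only source of randomness) before invoking $\mathcal{A}$, obtaining
\begin{align*}
\Adaptivemax_{\rho,\eta}(\mathcal{A}_{\mathrm{sub}}, \mathcal{D}, M) &= \Ex_{\bS\sim \mathcal{D}^M}\!\left[\sup_{\wh{\bS}\,:\,\rho(\mathcal{U}(\bS), \mathcal{U}(\wh{\bS})) \le \eta} \Ex_{\bS'\sim \mathcal{U}(\wh{\bS})^n}\!\left[\mathcal{A}(\bS')\right]\right].
\end{align*}
Comparing this to the right-hand side, which by definition of $\Obliviousmax$ equals
\begin{align*}
\Ex_{\bS\sim \mathcal{D}^M}\!\left[\Obliviousmax_{\rho,\eta}(\mathcal{A}, \mathcal{U}(\bS), n)\right] &= \Ex_{\bS\sim \mathcal{D}^M}\!\left[\sup_{\wh{\mcD}\,:\,\rho(\mathcal{U}(\bS), \wh{\mcD}) \le \eta} \Ex_{\bS'\sim \wh{\mcD}^n}\!\left[\mathcal{A}(\bS')\right]\right],
\end{align*}
it would suffice to prove, for each fixed outcome of $\bS \in \mathcal{X}^M$, that the inner adaptive supremum (over multisets $\wh{\bS}$) equals the inner oblivious supremum (over distributions $\wh{\mcD}$).

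Next I would handle the easy direction ($\le$): every multiset $\wh{\bS}$ feasible for the adaptive sup yields a distribution $\wh{\mcD} := \mathcal{U}(\wh{\bS})$ feasible for the oblivious sup (tautologically satisfying $\rho(\mathcal{U}(\bS), \wh{\mcD}) \le \eta$) with identical objective $\Ex_{\bS'\sim\wh{\mcD}^n}[\mathcal{A}(\bS')] = \Ex_{\bS'\sim \mathcal{U}(\wh{\bS})^n}[\mathcal{A}(\bS')]$, so each adaptive candidate is an oblivious candidate of the same value.

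The main (and only nontrivial) step is the reverse direction ($\ge$), which I expect to be the main obstacle, though it is a definitional rather than a technical one. For this I would invoke the Preliminaries convention that all distributions have rational probabilities: any oblivious candidate $\wh{\mcD}$ can be realized exactly as $\mathcal{U}(\wh{\bS})$ for a finite multiset $\wh{\bS}$ built by choosing a common denominator $N$ for the probabilities of $\wh{\mcD}$ and including each $x \in \mathcal{X}$ with multiplicity $N\cdot\wh{\mcD}(x)$. Since \Cref{def:adaptive} imposes no a priori size bound on $\wh{\bS}$ beyond the cost constraint, this (possibly very large) $\wh{\bS}$ is a legal adaptive corruption and achieves the same inner value, so each oblivious candidate is matched by an adaptive one. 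Combining the two directions gives the claimed equality; the Min version follows by replacing $\sup$ with $\inf$ throughout.
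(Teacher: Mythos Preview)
Your proposal is correct and is essentially the paper's own proof: the paper also unfolds the definitions of $\Adaptivemax$, $\mathcal{A}_{\mathrm{sub}}$, and $\Phi_{*\to n}$, and identifies the supremum over multisets $\wh{\bS}$ with the supremum over distributions $\wh{\mcD}$ by invoking the rational-probabilities convention from the Preliminaries. You have simply made the two directions of that identification explicit, whereas the paper presents it as a single chain of identities with a one-line remark.
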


\begin{proof}
The lemma follows from this series of identities: 
\begin{align*} 
\Adaptivemax_{\rho,\eta}(\mathcal{A}_\mathrm{sub},\mathcal{D},M) &= \Ex_{\bS \sim \mcD^M} \Bigg[\sup_{\rho(\mcU(\bS), \mcU(\wh{\bS})) \leq \eta} \left\{ \mcA_{\mathrm{sub}}(\wh{\bS}) \right\} \Bigg] \tag{Definition of $\Adaptivemax_{\rho,\eta}$}\\
&= \Ex_{\bS \sim \mcD^M} \Bigg[\sup_{\rho(\mcU(\bS), \mcU(\wh{\bS})) \leq \eta} \left\{ (\mathcal{A}\circ \Phi_{*\to n})(\wh{\bS}) \right\} \Bigg] \tag{Definition of $\mathcal{A}_{\mathrm{sub}}$} \\
&= \Ex_{\bS \sim \mcD^M} \Bigg[\sup_{\rho(\mcU(\bS), \mathcal{E}) \leq \eta} \left\{  \Ex_{\bS'\sim \mathcal{E}^n}[\mathcal{A}(\bS')] \right\}  \Bigg] \tag{Definition of the subsampling filter $\Phi_{*\to n}$}\\
&= \Ex_{\bS\sim \mathcal{D}^M}\big[ \Obliviousmax_{\rho,\eta}(\mathcal{A},\mathcal{U}(\bS),n)\big], 
\tag{Definition of $\Obliviousmax_{\rho,\eta}$}
\end{align*} 
where the penultimate identity also uses our convention that distributions have rational weights, and therefore can expressed as the uniform distribution over a sufficiently large multiset of elements.
\end{proof}

The following lemma completes the proof of \Cref{thm:if-can-be-done-formal}.

\begin{lemma}
\label{lem:estimator-to-subsample}
Let $m'$ be as in~\Cref{lem:estimators-from-equivalent} and define $M\coloneqq 14(m')^2/\eps$.  Then 
\[ \Ex_{\bS\sim \mathcal{D}^M}\big[ \Obliviousmax_{\rho,\eta}(\mathcal{A},\mathcal{U}(\bS),n)\big] = \Obliviousmax_{\rho,\eta}(\mathcal{A},\mathcal{D},n)\pm 9\eps, \] 
and likewise for $\mathrm{Min}$ instead of $\mathrm{Max}$.
\end{lemma}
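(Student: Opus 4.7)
The plan is to use the estimator $\Estmax_{\rho,\eta}$ (call it $T$) from \Cref{lem:estimators-from-equivalent} as an analytical bridge between $V \coloneqq \Obliviousmax_{\rho,\eta}(\mathcal{A}, \mathcal{D}, n)$ and $V(\bT) \coloneqq \Obliviousmax_{\rho,\eta}(\mathcal{A}, \mcU(\bT), n)$ for $\bT \sim \mcD^M$. The crucial observation is that a single $m'$-subsample of $\bT$ plays a dual role. On one hand, conditioned on $\bT$, it is i.i.d.~from $\mcU(\bT)$, so $T$ will estimate $V(\bT)$. On the other hand, its unconditional marginal is close in total variation to $\mcD^{m'}$, so $T$ will also estimate $V$. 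Forcing both estimates to agree to within the tolerance of \Cref{lem:estimators-from-equivalent} will pin $V(\bT)$ near $V$.

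To execute, I would first invoke \Cref{lem:sample-small-tv} to observe that with $M = 14(m')^2/\eps$, the distributions $\mcD^{m'}$ and $\Phi_{M \to m'}\circ \mcD^M$ are within $\binom{m'}{2}/M \leq \eps/28$ in total variation. Next, I would fix $\bS' \sim \Phi_{M \to m'}\circ \mcD^M$ (equivalently, draw $\bT \sim \mcD^M$ and then $\bS' \sim \mcU(\bT)^{m'}$) and define two events:
\[
   E_1 \coloneqq \bigl\{T(\bS') \in V \pm 2\eps\bigr\}, \qquad E_2 \coloneqq \bigl\{T(\bS') \in V(\bT) \pm 2\eps\bigr\}.
\]
Applying \Cref{lem:estimators-from-equivalent} with base distribution $\mcD$ and transporting across the TV bound yields $\Pr[E_1] \geq 1 - \eps - \eps/28$. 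Applying \Cref{lem:estimators-from-equivalent} with base distribution $\mcU(\bT)$ and averaging over $\bT$ yields $\Pr[E_2] \geq 1 - \eps$. A union bound then gives $\Pr[E_1 \cap E_2] \geq 1 - 3\eps$.

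When both events hold, the triangle inequality forces $|V(\bT) - V| \leq 4\eps$; and since this condition depends only on $\bT$ and $V, V(\bT) \in [0,1]$, taking expectation yields $|\E_{\bT}[V(\bT)] - V| \leq 4\eps + 3\eps \leq 9\eps$. The Min case is analogous with $\inf$ replacing $\sup$ throughout. The main subtlety, rather than a computational obstacle, lies in the joint-distribution bookkeeping: the same random variable $T(\bS')$ must be simultaneously interpreted as an estimator of two \emph{different} target quantities, and this only works because the TV bound lets us transport the accuracy guarantee of $T$ between the two marginals of $\bS'$—first pretending $\bS'$ is i.i.d.~from $\mcD$, then treating it honestly as a subsample from $\bT$.
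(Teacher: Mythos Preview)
Your argument is correct and in fact slightly tighter than the paper's (you obtain $7\eps$ rather than $9\eps$), but it takes a genuinely different route. The paper argues by contradiction: assuming the gap exceeds $9\eps$, it shows that $\delta \coloneqq \Pr_{\bT}[|V(\bT)-V|>4\eps] > 5\eps$, then builds a \emph{multi-sample} distinguishing test---draw $\lceil 6/\eps\rceil$ independent copies of $\bS'$, run $\Estmax$ on each, and threshold on the fraction of estimates that deviate from $V$---to separate $\mcD^{m'}$ from $\Phi_{M\to m'}\circ \mcD^M$ with probability $\tfrac34$; \Cref{fact:tv-test} then forces $\TV \ge \eps/14$, contradicting \Cref{lem:sample-small-tv}. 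You instead work directly with a \emph{single} draw of $\bS'$: the same TV bound transports the accuracy guarantee of $\Estmax$ from $\mcD^{m'}$ to the subsampling distribution, and a union bound on the two roles of $T(\bS')$ immediately pins $V(\bT)$ near $V$ with high probability. Your approach is more elementary (no detour through \Cref{fact:tv-test} or Chernoff-based test analysis) and yields a cleaner constant; the paper's detour through an explicit distinguishing test is perhaps more vivid conceptually but is not needed for the bound.
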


\begin{proof}
Our proof proceed by contradiction: assuming that $\Ex_{\bS\sim \mathcal{D}^M}\big[ \Obliviousmax_{\rho,\eta}(\mathcal{A},\mathcal{U}(\bS),n)\big]$ is more than $9\eps$ far from $ \Obliviousmax_{\rho,\eta}(\mathcal{A},\mathcal{D},n)$, we will prove that
    \begin{align}
        \label{eq:tv-large}
         \TV(\mcD^{m'}, \Phi_{M\to m'}\circ \mathcal{D}^M) \geq \frac{\eps}{14}.
    \end{align}
    From \Cref{lem:sample-small-tv}, we know that
\[ 
         \TV(\mcD^{m'}, \Phi_{M\to m'}\circ \mathcal{D}^M) < \frac{(m')^2}{M} = \frac{\eps}{14},
    \] 
which yields the desired contradiction.  To establish~\Cref{eq:tv-large}, we design an algorithm that given $\ceil{\frac{6}{\eps}} \leq \frac{7}{\eps}$ samples from either $\mcD^{m'}$ from $\Phi_{M\to m'}\circ \mathcal{D}^M$ is able to distinguish them with probability $\frac{3}{4}$. Once we do, the desired result follows from \Cref{fact:tv-test}.
    
    Let $\mu \coloneqq \Obliviousmax_{\rho,\eta}(\mathcal{A},\mathcal{D},n)$. First, we define
    \begin{align*}
        \delta \coloneqq \Prx_{\bS \sim \mcD^{M}}\big[|\Obliviousmax_{\rho,\eta}(\mathcal{A},\mathcal{U}(\bS),n) - \mu| >  4\eps\big]
    \end{align*}
    and we bound 
    \begin{align*}
        \Ex_{\bS \sim \mcD^M}\big[\Obliviousmax_{\rho,\eta}(\mathcal{A},\mathcal{U}(\bS),n)\big] \leq (1-\delta) \cdot (\mu + 4\eps) + \delta \cdot 1 \leq \mu + 4\eps + \delta.
    \end{align*}
    Similarly, 
    \begin{align*}
        \Ex_{\bS \sim \mcD^M}\big[\Obliviousmax_{\rho,\eta}(\mathcal{A},\mathcal{U}(\bS),n)\big] &\geq (1-\delta) \cdot (\mu - 4\eps) + \delta \cdot 0 \\
        &\geq \mu - 4\eps - \delta. \tag{$\mu - 4\eps \leq 1$}
    \end{align*}
    Hence by our assumption that
    \begin{align*}
        \left|\Ex_{\bS \sim \mcD^M}\big[\Obliviousmax_{\rho,\eta}(\mathcal{A},\mathcal{U}(\bS),n)\big] - \Obliviousmax_{\rho,\eta}(\mathcal{A},\mathcal{D},n) \right| > 9\eps,
    \end{align*}
    we can conclude that $\delta > 5\eps$.
    
    Now let $\mathcal{E}$ be either $\mcD^{m'}$ or $\Phi_{M\to m'}\circ \mathcal{D}^M$. Our test to determine which $\mathcal{E}$ is will do the following: draw $\ceil{\frac{6}{\eps}}$ samples from $\mcE$, $\bS \sim \mathcal{E}$, and run $\Estmax_{\rho,\eta}(\bS)$ on each. If less than a $2\eps$ fraction of the estimates returned by $\Estmax_{\rho,\eta}(\bS)$ differ from $\mu$ in more than $\pm 2\eps$, return that $\mathcal{E} = \mcD^{m'}$. Otherwise, return that $\mathcal{E} = \Phi_{M\to m'}\circ \mathcal{D}^M$. We will prove this test succeeds with probability at least $1 - e^{-2} \geq \frac{3}{4}$.   We consider the two possible cases: \begin{enumerate} 
    \item  Case 1: $\mathcal{E}= \mcD^{m'}$. In this case, given a sample from $\mathcal{E}$, $\Estmax_{\rho,\eta}(\bS)$ returns an estimate that is within $\pm 2\eps$ of $\mu$ with probability at least $1 - \eps$. By the Chernoff bound, given $\frac{6}{\eps}$ such samples, the probability more than $2\eps$ fraction deviate from $\mu$ by more than $\pm \eps$ is at most $\exp(-\frac{1}{3} \cdot \frac{6}{\eps} \cdot \eps) = e^{-2}$. Therefore, the test succeeds with probability at least $1 - e^{-2}$.
    \item Case 2: $\mathcal{E} = \Phi_{M\to m'}\circ \mathcal{D}^M$. We showed above that with probability at least $5\eps$ over a sample $\bS \sim \mathcal{D}^M$ we have $\left|\Obliviousmax_{\rho,\eta}(\mathcal{A},\mathcal{U}(\bS),n) - \mu\right| > 4 \eps$. When that's the case, $\Estmax_{\rho,\eta}(\bS)$ returns an estimate that is further than $\pm 2\eps$ from $\mu$ with probability at least $1 - \eps$. Therefore, on a single sample, the probability that the estimate of $\mcA$ deviates from $\mu$ by more than $\pm 2\eps$ is at least $5\eps(1-\eps) \geq 4\eps$. By the Chernoff bound, given $\frac{6}{\eps}$ samples, the probability that at most $2\eps$ fraction deviate $\mu$ by at most $\pm 2\eps$ is at most $\exp(-\frac{1}{8} \cdot \frac{6}{\eps} \cdot 4\eps) = e^{-3}$. Therefore, the test succeeds with probability at least $1 - e^{-2}$.
 
    \end{enumerate} 
    Hence, given $\ceil{\frac{6}{\eps}}$ samples, it is possible to distinguish $\mcD^{m'}$ from $\Phi_{M\to m'}\circ \mathcal{D}^M$ with a success probability of at least $\frac{3}{4}$. \Cref{eq:tv-large} follows from \Cref{fact:tv-test}, completing the proof by contradiction.
\end{proof}
\violet{
\begin{remark}[Strong adaptive adversaries]
    \label{remark:strong-adaptive-subsampling}
    If $\mcA:\mcX^n \to \zo$ in the presence of oblivious adversaries is $(n,M,\eps)$-equivalent to $\mcA_{\mathrm{sub}} \coloneqq \mcA \circ \Phi_{* \to n}$ in the presence of adaptive adversaries, then it is also $(n, M, 2\eps)$-equivalent to $\mcA_{\mathrm{sub}}$ in the presence of \emph{strong adaptive adversaries} (\Cref{def:strong-adaptive} in \Cref{sec:technical remarks}) as long as $M = \Omega(n^2/\eps^2)$. This applies to both \Cref{thm:if-can-be-done-formal} in this section and \Cref{thm:add-formal} in the next.
    
    Let $\hat{\bS}$ and $\hat{\bS}'$ be defined as in \Cref{def:strong-adaptive}. Then, for any strong adaptive adversary supplying the sample $\hat{\bS}'$, there is some adaptive adversary supplying the sample $\hat{\bS}$, so we wish to compare $\Ex[\mcA_{\mathrm{sub}}(\hat{\bS}')]$ to $\Ex[\mcA_{\mathrm{sub}}(\hat{\bS})]$. Recall that $\mcA_{\mathrm{sub}}$ first subsamples to $n$ points, so
    \begin{align*}
        \left|\Ex[\mcA_{\mathrm{sub}}(\hat{\bS}')] - \Ex[\mcA_{\mathrm{sub}}(\hat{\bS})]\right| &\leq n \Ex_{\hat{\bS}, \hat{\bS}'}[\TV(\mcU(\hat{\bS}), \mcU(\hat{\bS}'))] \\
        &\leq O(n/\sqrt{M}) = \eps. \tag{by \Cref{eq:strong-adaptive-sqrt}}
    \end{align*}
\end{remark}
}

\section{\violet{Proof of~\Cref{thm:add}: The subsampling filter neutralizes adaptive additive noise}}
In this section, we prove the following theorem: 
\begin{theorem}[Formal version of \Cref{thm:add}]
    \label{thm:add-formal} 
    
     Fix a budget $\eta \ge 0$, distribution $\mathcal{D}$ over $\mathcal{X}$, and algorithm $\mathcal{A}: \mcX^n \to \zo$. Consider the subsampling algorithm $\mathcal{A}_{\mathrm{sub}}\coloneqq \mathcal{A}\circ\Phi_{*\to n}$ which, given a sample $S \in\mathcal{X}^*$, subsamples $n$ elements $\bS'\sim \mathcal{U}(S)^n$ and returns $\mathcal{A}(\bS')$.  For 
    \[ M \coloneqq O\left(\frac{n^4\log(|\mcX|)}{\eps^2}\right),\]
    we have that $\mathcal{A}$ in the presence of $(\dadd,\eta)$-oblivious adversaries is $(n,M,\eps)$-equivalent to $\mathcal{A}_{\mathrm{sub}}$ in the presence of $(\dadd,\eta)$-adaptive adversaries. 
\end{theorem}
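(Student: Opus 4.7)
The plan is to prove \Cref{thm:add-formal} by controlling both directions of the equivalence $\Adaptivemax_{\dadd,\eta}(\mcA_{\mathrm{sub}}, \mcD, M) = \Obliviousmax_{\dadd,\eta}(\mcA, \mcD, n) \pm \eps$ (and similarly for $\mathrm{Min}$). Applying \Cref{cor:amax-sufficient} reduces the task to showing that $\Ex_{\bS \sim \mcD^M}[\Obliviousmax_{\dadd,\eta}(\mcA, \mcU(\bS), n)] \approx \Obliviousmax_{\dadd,\eta}(\mcA, \mcD, n)$. For additive noise, the former equals $\Ex_{\bS}[\sup_{\mcE} F(\bS, \mcE)]$ and the latter equals $\sup_{\mcE} F^\star(\mcE)$, where
\[
F(\bS, \mcE) \coloneqq \Ex_{\bS' \sim [(1-\eta)\mcU(\bS) + \eta\mcE]^n}[\mcA(\bS')], \quad F^\star(\mcE) \coloneqq \Ex_{\bS' \sim [(1-\eta)\mcD + \eta\mcE]^n}[\mcA(\bS')].
\]

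For the easier direction $\Obliviousmax \le \Adaptivemax + \eps$, given any oblivious $\mcE$, the adaptive adversary simulates it by drawing its $\eta M/(1-\eta)$ additions iid from $\mcE$ and appending them to the clean sample. After the subsampling filter is applied, the input to $\mcA$ can be coupled with $n$ iid draws from $(1-\eta)\mcD + \eta\mcE$ via two applications of \Cref{lem:sample-small-tv}---one for the clean portion (subsamples from $\mcU(\bS)$ versus iid draws from $\mcD$) and one for the adversarial portion---yielding total TV error $O(n^2/M)$, so $M = O(n^2/\eps)$ alone suffices for this direction. The same argument handles $\Obliviousmin \ge \Adaptivemin - \eps$ symmetrically.

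For the harder direction $\Adaptivemax \le \Obliviousmax + \eps$, I would first establish pointwise concentration via McDiarmid's inequality (\Cref{fact:mcdiarmid}). For each fixed $\mcE$, the map $\bS \mapsto F(\bS, \mcE)$ satisfies the $O(n/M)$-bounded-difference property, since changing one coordinate of $\bS$ alters $\mcU(\bS)$ by $1/M$ in TV and this propagates through the $n$-sample mixture with factor at most $n$. This gives $|F(\bS, \mcE) - \Ex_\bS[F(\bS, \mcE)]| \le \eps$ with probability $\ge 1 - 2\exp(-\Omega(\eps^2 M / n^2))$ over $\bS \sim \mcD^M$, and a coupling argument paralleling the easy direction further shows $|\Ex_\bS[F(\bS, \mcE)] - F^\star(\mcE)| \le O(n^2/M)$. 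To upgrade these pointwise bounds into a uniform statement $\sup_\mcE F(\bS, \mcE) \le \sup_\mcE F^\star(\mcE) + \eps$ holding with high probability over $\bS$, I would construct a net $\mcN$ of additive-corruption distributions over $\mcX$ such that $F(\bS, \cdot)$ is sufficiently Lipschitz on $\mcN$ and $|\mcN|$ is not too large; the key observation enabling a manageable cover is that $F(\bS, \mcE)$ depends on $\mcE$ only through the $n$-tuple distribution $\mcE^n$, since $\mcA$ is an $n$-sample algorithm. A union bound over $\mcN$ combined with the McDiarmid tail then determines the required sample size $M$, and with an appropriate choice of cover of size $|\mcX|^{O(n)}$ one recovers the claimed bound $M = O(n^4 \log|\mcX|/\eps^2)$ after absorbing the auxiliary slack terms.

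The main obstacle is the covering step: a naive $\eps/n$-TV net of $\Delta(\mcX)$ has size exponential in $|\mcX|$, which would be fatal. Exploiting the $n$-sample structure of $\mcA$ to produce a cover whose logarithm is $O(n \log|\mcX|)$---rather than $O(|\mcX| \log(n/\eps))$---is precisely what drives the $\log|\mcX|$ factor in the final bound, consistent with the matching lower bound of \Cref{thm:greg}, and constitutes the quantitatively delicate step in the argument.
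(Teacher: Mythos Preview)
Your overall architecture matches the paper's: reduce via \Cref{cor:amax-sufficient}, use McDiarmid on $\bS \mapsto F(\bS,\mcE)$ with bounded difference $O(n/M)$, then union bound over a finite net of corruption distributions. That is exactly the content of \Cref{lem:special-estimator-to-equiv}. The gap is in the size of the net.

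You claim a cover $\mcN$ with $\log|\mcN| = O(n\log|\mcX|)$, justified by the observation that ``$F(\bS,\mcE)$ depends on $\mcE$ only through $\mcE^n$.'' That observation is true but does not yield a small net: the map $\mcE \mapsto \mcE^n$ is injective, so the space of $n$-tuple distributions you must cover is exactly as rich as $\Delta(\mcX)$ itself. The paper's actual net (\Cref{lem:design-add-estimator}) consists of the distributions $\mcU(T)$ for $T \in \mcX^{n^2/\eps}$, i.e.\ $\log|\mcF| = (n^2/\eps)\log|\mcX|$. The $n^2/\eps$ is forced by \Cref{lem:sample-small-tv}: to get $\TV(\mcE^n,\mcU(T)^n)\le \eps$ you need $|T| \ge \binom{n}{2}/\eps$, and with $|T| = O(n)$ the birthday paradox makes this TV distance $\Omega(1)$. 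So the ``$n$-sample structure of $\mcA$'' does not buy you a $|\mcX|^{O(n)}$ net via empirical distributions, and you have not indicated any alternative construction.

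This also explains your arithmetic mismatch: with your claimed net size, McDiarmid plus union bound would give $M = O(n^3\log|\mcX|/\eps^2)$, not $n^4$. The extra factor of $n/\eps$ in the theorem comes precisely from the $n^2/\eps$ in the net size, which you have suppressed. Once you substitute the correct net $\{\mcU(T): T \in \mcX^{n^2/\eps}\}$ and verify (as in \Cref{lem:design-add-estimator}) that it $\eps$-approximates $\Obliviousmax_{\dadd,\eta}(\mcA,\cdot,n)$ uniformly over the base distribution, your outline becomes the paper's proof.
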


Implicit in the proof of \Cref{thm:if-can-be-done-formal}, we proved the following.
\begin{lemma}
    \label{lem:est-to-equivalent-weaker}
    Fix a cost function $\rho$, budget $\eta \geq 0$, and $\eps > 0$. Suppose that for an algorithm $\mcA: \mcX^n \to \zo$, there are estimators $\Estmax_{\rho,\eta},\Estmin_{\rho,\eta}:\mcX^{m} \to \zo$ that use $m$ samples from a distribution $\mathcal{D}$ and returns estimates satisfying
     \begin{align*}
         \Prx_{\bS\sim \mathcal{D}^{m'}}\Big[\Estmax_{\rho,\eta}(\bS) = \Obliviousmax_{\rho,\eta}(\mathcal{A},\mathcal{D},n) \pm 2\eps\Big] \ge 1-\eps,
     \end{align*}
     and likewise for $\mathrm{Min}$ instead of $\mathrm{Max}$. Then, for
     \begin{align*}
         M = \frac{14m^2}{\eps},
     \end{align*}
     we have that $\mathcal{A}$ in the presence of $(\rho,\eta)$-oblivious adversaries is $(n,M,9\eps)$-equivalent to  $\mathcal{A}_{\mathrm{sub}}\coloneqq \mathcal{A}\circ\Phi_{*\to n}$ in the presence of $(\rho,\eta)$-adaptive adversaries. 
\end{lemma}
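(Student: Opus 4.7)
The plan is to essentially replay the contradiction argument used to establish \Cref{lem:estimator-to-subsample}, but now starting directly from the hypothesized estimators $\Estmax_{\rho,\eta},\Estmin_{\rho,\eta}$ rather than constructing them from a putative equivalent algorithm $\mathcal{A}'$. The observation driving the whole proof is that \Cref{cor:amax-sufficient} applies verbatim (its statement and proof only use the structure of the subsampling filter, not any property of $\mathcal{A}$), so the claim
\[
    \Adaptivemax_{\rho,\eta}(\mathcal{A}_{\mathrm{sub}},\mathcal{D},M) = \Obliviousmax_{\rho,\eta}(\mathcal{A},\mathcal{D},n)\pm 9\eps
\]
(and its Min analogue) reduces to showing
\[
    \Ex_{\bS\sim\mathcal{D}^M}\big[\Obliviousmax_{\rho,\eta}(\mathcal{A},\mcU(\bS),n)\big] = \Obliviousmax_{\rho,\eta}(\mathcal{A},\mathcal{D},n)\pm 9\eps.
\]

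First, I would set up the contradiction exactly as in the proof of \Cref{lem:estimator-to-subsample}: write $\mu \coloneqq \Obliviousmax_{\rho,\eta}(\mathcal{A},\mathcal{D},n)$, define
\[
    \delta \coloneqq \Prx_{\bS\sim\mcD^M}\big[|\Obliviousmax_{\rho,\eta}(\mathcal{A},\mcU(\bS),n) - \mu| > 4\eps\big],
\]
and use the two-sided bracketing (upper-bounding the inner quantity by $1$ on the $\delta$-event and by $\mu+4\eps$ otherwise, and symmetrically for the lower bound) to conclude that if the expectation is more than $9\eps$-far from $\mu$ then $\delta > 5\eps$.

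Next, I would construct a sample-efficient distinguisher between $\mcD^m$ and $\Phi_{M\to m}\circ\mcD^m$ that uses only $\lceil 6/\eps\rceil$ batches of size $m$: for each batch $\bS$, run the hypothesized $\Estmax_{\rho,\eta}(\bS)$ (this is where we replace the ad hoc estimator of \Cref{lem:estimators-from-equivalent} with the one we are assuming) and tally what fraction of the estimates land outside $\mu\pm 2\eps$. Under $\mcE=\mcD^m$, each estimate is within $\pm 2\eps$ of $\mu$ with probability $\geq 1-\eps$ by hypothesis, so a Chernoff bound over the $6/\eps$ batches shows the fraction outside the tolerance exceeds $2\eps$ only with probability $\leq e^{-2}$. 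Under $\mcE = \Phi_{M\to m}\circ\mcD^M$, the batch is distributed as $m$ i.i.d.\ draws from $\mcU(\bS')$ for $\bS'\sim\mcD^M$, so the estimator's guarantee applied to the distribution $\mcU(\bS')$ yields, on a $(5\eps)(1-\eps)\geq 4\eps$ fraction of batches, an estimate that is $\pm 2\eps$-far from $\mu$, and a second Chernoff bound closes out the second case.

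Combining \Cref{fact:tv-test} with success probability $3/4$ using $c = \lceil 6/\eps\rceil$ samples gives $\TV(\mcD^m, \Phi_{M\to m}\circ\mcD^M) \geq \eps/14$, contradicting the bound $\TV(\mcD^m, \Phi_{M\to m}\circ\mcD^M) \leq m^2/M = \eps/14$ from \Cref{lem:sample-small-tv} with $M = 14m^2/\eps$ (strict inequality because $\binom{m}{2} < m^2$). The symmetric statement for $\Obliviousmin_{\rho,\eta}$ and $\Adaptivemin_{\rho,\eta}$ follows by identical reasoning using $\Estmin_{\rho,\eta}$. The only subtlety worth flagging is the same one that arose in \Cref{lem:estimator-to-subsample}: one must keep track of which distribution the estimator ``thinks'' it is sampling from in each case of the distinguisher ($\mcD$ versus $\mcU(\bS')$), but this is bookkeeping rather than a genuine obstacle, since the definition of $\Phi_{M\to m}\circ\mcD^M$ is precisely tailored to this reinterpretation.
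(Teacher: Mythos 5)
Your proposal is correct and follows essentially the same route as the paper: the paper states this lemma as being implicit in the proof of Theorem~\ref{thm:if-can-be-done-formal}, i.e.\ it is proved by exactly the combination you give of Lemma~\ref{cor:amax-sufficient} with the contradiction/distinguisher argument of Lemma~\ref{lem:estimator-to-subsample}, with the hypothesized estimators playing the role of those built in Lemma~\ref{lem:estimators-from-equivalent}. Your flagged subtlety (that the estimator guarantee must hold for the distribution $\mcU(\bS')$, i.e.\ uniformly over distributions, as in the paper's estimator construction) is the right reading of the hypothesis and matches how the paper uses it.
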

In order to prove \Cref{thm:add-formal}, we'll construct the estimator $\Estmax_{\dadd,\eta}$ (and likewise for Min). The goal is to estimate
\begin{align*}
     \Obliviousmax_{\dadd,\eta}(\mathcal{A},\mathcal{D},n) \coloneqq \sup_{ \wh{\mcD} = (1- \eta)\cdot \mcD + \eta \cdot \mcE}\left\{ \Ex_{\bS \sim \wh{\mcD}^n}[\mcA(\bS)]\right\}.
\end{align*}
The key insight is rather than trying all possible distributions $\mcE$, to compute a $\pm \eps$ approximation, it suffices to consider those $\mcE$ that are equal to $\mcU(T)$ for some $T \in \mcX^{n^2/\eps}$. Our final result will have a logarithmic dependence on the number of $\mcE$ we need to try, which results in just a logarithmic dependence on $|\mcX|$.

The following definition and accompany fact will be useful.

\begin{definition}[Stochastic function]
    For any sets $\mcX, \mathcal{Y}$, a stochastic function $\boldf:\mcX \to \mathcal{Y}$ is a collection of distributions $\{\mcD_x \mid x \in \mcX\}$ each supported on $\mathcal{Y}$ where the notation $\boldf(x)$ indicates an independent draw from $\mcD_x.$
\end{definition}

\begin{fact}
    \label{fact:stochastic-tv}
    For any distribution $\mcD_1, \mcD_2$ supported on a domain $\mcX$, and any stochastic function $\boldf:\mcX \to \mathcal{Y}$, let $\boldf \circ \mcD_i$ be the distribution where to sample $\by \sim \boldf \circ \mcD_i$, we first sample $\bx \sim \mcD_i$ and then sample $\by = \boldf(\bx)$. Then,
    \begin{align*}
        \TV(\boldf \circ \mcD_1, \boldf \circ \mcD_2) \leq \TV(\mcD_1, \mcD_2).
    \end{align*}
\end{fact}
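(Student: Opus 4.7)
The plan is to prove this standard data-processing inequality for total variation by invoking the coupling characterization from \Cref{def:tv-distance}. First, I would fix an optimal coupling $(\bx, \bx')$ of $\mcD_1$ and $\mcD_2$ achieving $\Pr[\bx \neq \bx'] = \TV(\mcD_1, \mcD_2)$, whose existence is guaranteed by the infimum clause of \Cref{def:tv-distance}. The goal is to lift this into a coupling of $\boldf \circ \mcD_1$ and $\boldf \circ \mcD_2$ whose disagreement probability is no larger.

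Next, I would construct the lifted coupling $(\by, \by')$ as follows: sample $(\bx, \bx')$ from the optimal coupling; on the event $\{\bx = \bx'\}$, draw a single $\by$ from the distribution $\mcD_{\bx}$ associated with the common value and set $\by' \coloneqq \by$; on the event $\{\bx \neq \bx'\}$, draw $\by$ from $\mcD_{\bx}$ and $\by'$ from $\mcD_{\bx'}$ independently. By construction, $\by$ has marginal $\boldf \circ \mcD_1$ and $\by'$ has marginal $\boldf \circ \mcD_2$, so this is a valid coupling. Crucially, the event $\{\by \neq \by'\}$ is contained in the event $\{\bx \neq \bx'\}$, since equality of the $\bx$-values forces equality of the $\by$-values by the tying-together construction.

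From this containment I would immediately conclude
\[
\Pr[\by \neq \by'] \;\le\; \Pr[\bx \neq \bx'] \;=\; \TV(\mcD_1, \mcD_2),
\]
and then the infimum characterization of TV distance applied to the lifted coupling yields $\TV(\boldf \circ \mcD_1, \boldf \circ \mcD_2) \le \Pr[\by \neq \by'] \le \TV(\mcD_1, \mcD_2)$, as desired.

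There is no substantive obstacle here: this is the textbook data-processing inequality for TV, and the coupling argument above is entirely routine. If preferred, one could instead appeal to the supremum-over-tests clause of \Cref{def:tv-distance}: given any test $T : \mcY \to [0,1]$, define the lifted test $T'(x) \coloneqq \Ex[T(\boldf(x))] \in [0,1]$, and observe that $\Ex_{\by \sim \boldf \circ \mcD_i}[T(\by)] = \Ex_{\bx \sim \mcD_i}[T'(\bx)]$ for $i \in \{1,2\}$, so the supremum on the $\mcY$-side is bounded by the supremum on the $\mcX$-side. Either route finishes the proof in a few lines.
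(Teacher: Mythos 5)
Your coupling argument is exactly the paper's proof: lift a coupling of $\mcD_1,\mcD_2$ to one of $\boldf\circ\mcD_1,\boldf\circ\mcD_2$ by tying the outputs together on the event $\{\bx=\bx'\}$ and sampling independently otherwise, then conclude via the coupling characterization in \Cref{def:tv-distance}. The proposal is correct and takes essentially the same approach (the alternative test-based route you sketch is also fine but is not what the paper does).
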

\begin{proof}
    Given a coupling of $\bx_1 \sim \mcD_1$ and $\bx_2 \sim \mcD_2$, consider the coupling of $\by_1 \sim \boldf \circ \mcD_1$ and $\by_2 \sim \boldf \circ \mcD_2$ where if $\bx_1 = \bx_2$ then $\by_1 = \by_2 = \boldf(\bx_1)$ and otherwise $\by_1 = \boldf(\bx_1)$ and $\by_2 = \boldf(\bx_2)$ independently. Then,
\[ 
        \Pr[\by_1 \neq \by_2] \leq \Pr[\bx_1 \neq \bx_2],
   \] 
    implying the desired result by \Cref{def:tv-distance}.
\end{proof}

To prove \Cref{thm:add-formal}, we will design the estimators $\Estmax_{\dadd,\eta}$ and $\Estmin_{\dadd,\eta}$. Plugging the sample complexity of those estimators into \Cref{lem:est-to-equivalent-weaker} would be sufficient for \Cref{thm:add-formal} with the slightly worse $M = n^6 \log(|\mcX|)^2/\eps^7$. To get the optimal $\log(|\mcX|)$ dependence on the domain size (and an improved dependence on $n$ and $\eps$), we need a more refined version of \Cref{lem:est-to-equivalent-weaker} that takes advantage of the structure of the particular estimators we derive. The following Lemma applies for any cost function $\rho$, but we will then design $\mcF$ satisfying \Cref{eq:max-close} specifically for $\rho = \dadd$ in \Cref{lem:design-add-estimator}.

\begin{lemma}
    \label{lem:special-estimator-to-equiv}
    Fix a cost function $\rho$, budget $\eta \geq 0$,  $\eps \in (0, 1]$, and algorithm $\mcA: \mcX^n \to \zo$. Suppose there is a set of stochastic functions $\mcF$ each $\mcX \to \mcX$, that satisfy, for any distribution $\mcD$ over $\mcX$,
     \begin{align}
        \label{eq:max-close}
        \max_{\boldf \in \mcF}\left\{ \Ex_{\bS \sim \mcD^n}[\mcA(\boldf(\bS))]\right\}  = \Obliviousmax_{\rho,\eta}(\mathcal{A},\mcD,n) \pm \eps,
    \end{align}
    and likewise for $\mathrm{Min}$ instead of $\mathrm{Max}$, where $\boldf(S)$ is shorthand for applying $\boldf$ element wise and independently to $S$. Then:
    \begin{enumerate}
        \item There are estimators $\Estmax_{\dadd,\eta}$ and $\Estmin_{\dadd,\eta}$ meeting the requirements of \Cref{lem:est-to-equivalent-weaker} for
        \[
            m' = O\left(\frac{n \log(|\mcF|/\eps)}{\eps^2}\right).
        \]
        In particular, this implies that for
        \[ 
            M = O\left(\frac{n^2 \log(|\mcF|/\eps)^2}{\eps^5}\right),
        \]
        we have that $\mathcal{A}$ in the presence of $(\rho,\eta)$-oblivious adversaries is $(n,M,9\eps)$-equivalent to $\mathcal{A}_{\mathrm{sub}}$ in the presence of $(\rho,\eta)$-adaptive adversaries. 
        \item More directly, for
        \begin{align}
            \label{eq:M-from-max-better}
            M = O\left(\frac{n^2\log(|\mcF|/\eps)}{\eps}\right)
        \end{align}
        we have that $\mathcal{A}$ in the presence of $(\rho,\eta)$-oblivious adversaries is $(n,M,5\eps)$-equivalent to $\mathcal{A}_{\mathrm{sub}}$ in the presence of $(\rho,\eta)$-adaptive adversaries. 
    \end{enumerate}
\end{lemma}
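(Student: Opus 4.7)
The plan is to address the two parts with different strategies: Part~1 by constructing the estimators directly from $\mcF$ and invoking \Cref{lem:est-to-equivalent-weaker}, and Part~2 by a direct argument via \Cref{cor:amax-sufficient} that sidesteps the $\eps^{-4}$ overhead incurred by the general lemma.

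For Part~1, I will build $\Estmax_{\rho,\eta}$ as follows. Given $m'$ samples from $\mcD$, partition them into $k := m'/n$ disjoint blocks $\bS^{(1)},\ldots,\bS^{(k)}$, each distributed as $\mcD^n$. For every $\boldf \in \mcF$ compute $\hat{\mu}_f := \frac{1}{k}\sum_{i=1}^k \mcA(\boldf(\bS^{(i)}))$ using independent stochastic evaluations of $\boldf$ across different blocks, and return $\max_{\boldf \in \mcF} \hat{\mu}_f$. A Chernoff bound yields $|\hat{\mu}_f - \mu_f| \le \eps$ with probability $\ge 1 - \eps/|\mcF|$ once $k = \Theta(\log(|\mcF|/\eps)/\eps^2)$, where $\mu_f := \Ex_{\bT \sim \mcD^n}[\mcA(\boldf(\bT))]$. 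Union-bounding over $\mcF$ yields $\max_{\boldf} \hat{\mu}_f = \max_{\boldf} \mu_f \pm \eps$, which by the lemma's hypothesis equals $\Obliviousmax_{\rho,\eta}(\mcA,\mcD,n) \pm 2\eps$. Constructing $\Estmin_{\rho,\eta}$ symmetrically and setting $m' = nk = O(n\log(|\mcF|/\eps)/\eps^2)$, an application of \Cref{lem:est-to-equivalent-weaker} produces $M = O((m')^2/\eps)$, matching the stated bound for Part~1.

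For Part~2, I will argue directly. By \Cref{cor:amax-sufficient},
\[
    \Adaptivemax_{\rho,\eta}(\mcA_{\mathrm{sub}}, \mcD, M) = \Ex_{\bS \sim \mcD^M}\bigl[\Obliviousmax_{\rho,\eta}(\mcA, \mcU(\bS), n)\bigr],
\]
and applying the hypothesis pointwise in $\bS$ rewrites the right-hand side as $\Ex_{\bS}[\max_{\boldf} g_f(\bS)] \pm \eps$, where $g_f(\bS) := \Ex_{\bT \sim \mcU(\bS)^n}[\mcA(\boldf(\bT))]$. For the lower bound, fix $\boldf^\star \in \argmax_{\boldf} \mu_f$ and note $\Ex_{\bS}[\max_{\boldf} g_f(\bS)] \ge \Ex_{\bS}[g_{f^\star}(\bS)]$; interchanging expectations rewrites this as $\Ex_{\bT \sim \Phi_{M\to n} \circ \mcD^M}[\mcA(\boldf^\star(\bT))]$, which by \Cref{lem:sample-small-tv} and \Cref{fact:stochastic-tv} lies within $O(n^2/M)$ of $\mu_{f^\star}$. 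For the upper bound, changing a single coordinate of $\bS$ shifts $\mcU(\bS)$ by $1/M$ in total variation, which by $n$-fold application of \Cref{fact:stochastic-tv} perturbs $g_f(\bS)$ by at most $n/M$; McDiarmid's inequality (\Cref{fact:mcdiarmid}) then gives tail bounds for each $\boldf$, and a union bound over $\mcF$ controls $\Ex_{\bS}[\max_{\boldf}(g_f(\bS) - \mu_f)]$.

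The main technical obstacle is extracting the sharper $\eps^{-1}$ dependence in Part~2 rather than the $\eps^{-2}$ that a naive tail-integration of McDiarmid together with a union bound would produce. I plan to exploit the $5\eps$ slack in the target equivalence by splitting $\Ex_{\bS}[\max_{\boldf}(g_f - \mu_f)]$ at a threshold $\tau = \Theta(\eps)$: the ``typical'' contribution is at most $\tau$, while the ``atypical'' contribution is bounded by the $[0,1]$ range of $g_f$ times the union-bounded tail probability $|\mcF|\exp(-2M\tau^2/n^2)$, and forcing this last factor to be at most $\eps$ gives the claimed $M$. If a direct McDiarmid argument does not suffice at this scale, an alternative is to use \Cref{fact:sample-to-dist} to pass from the adaptive strategy $\bS \mapsto \hat{\mcE}_{\bS}$ to the averaged oblivious distribution $\bar{\mcE} := \Ex_{\bS}[\hat{\mcE}_{\bS}]$ and bound the remaining ``coordination'' gap between $\Ex_{\bS}[\mathcal{G}_{\bS,\boldf}^n]$ and $\bar{\mcE}^n$ via the collision probability $\binom{n}{2}/M$ of sampling with replacement, combined with a smaller union bound only over the ``effective'' set of oblivious corruptions.
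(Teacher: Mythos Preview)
Your approach is essentially the same as the paper's in both parts. For Part~1 the paper builds the identical estimator: take $r=\Theta(\log(|\mcF|/\eps)/\eps^2)$ independent size-$n$ blocks, compute the empirical mean of $\mcA(\boldf(\cdot))$ for each $\boldf$ (reusing the same blocks across $\mcF$), return the maximum, and combine Hoeffding with a union bound over $\mcF$ before invoking \Cref{lem:est-to-equivalent-weaker}. For Part~2 the paper follows exactly the route you describe: use \Cref{cor:amax-sufficient} and the hypothesis \Cref{eq:max-close} (applied both to $\mcD$ and to each $\mcU(\bS)$) to reduce to comparing $\Ex_{\bS\sim\mcD^M}[\max_{\boldf} g_f(\bS)]$ with $\max_{\boldf}\Ex_{\bS\sim\mcD^n}[\mcA(\boldf(\bS))]$, observe that each $g_f$ has $(n/M)$-bounded differences, apply McDiarmid and union-bound over $\mcF$, and absorb the with/without-replacement bias via \Cref{lem:sample-small-tv}. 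Your lower-bound step (fix $\boldf^\star$ and use linearity) is also what the paper does implicitly.

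One arithmetic point you should revisit: the truncation you propose does not actually deliver the $\eps^{-1}$ rate. With $\tau=\Theta(\eps)$, the requirement $|\mcF|\exp(-2M\tau^2/n^2)\le\eps$ rearranges to $M\ge n^2\log(|\mcF|/\eps)/(2\eps^2)$, i.e.\ an $\eps^{-2}$ dependence; equivalently, the sub-Gaussian maximal inequality gives $\Ex[\max_{\boldf}(g_f-\mu_f)]=O\big((n/\sqrt{M})\sqrt{\log|\mcF|}\big)$, which needs $M=\Omega(n^2\log|\mcF|/\eps^2)$ to be $\le\eps$. The paper's proof applies McDiarmid at threshold $\eps$ and asserts the $\eps^{-1}$ bound in the same step, so you are faithfully reproducing its argument; but your instinct that a direct McDiarmid-plus-union-bound may only reach $\eps^{-2}$ is well founded, and your proposed fallback via \Cref{fact:sample-to-dist} does not obviously help, since the residual it must control is again $\Ex[\max_f g_f]-\max_f\Ex[g_f]$.
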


\begin{proof}
    We first give $\Estmax_{\dadd,\eta}$ satisfying the first item. For $r \coloneqq O(\frac{\log(|\mcF|/\eps)}{\eps^2})$, let
    \begin{equation*}
        \Estmax_{\dadd,\eta} = \max_{\boldf \in \mcF} \Bigg\{ \underbrace{\Ex_{\mathrm{trial} \in [r]} \left[\Ex_{\bS \sim \mcD^n}\left[\mcA(\boldf(\bS))\right] \right]}_{\coloneqq \mathrm{Est}^{(\boldf)}}\Bigg\}.
    \end{equation*}
    We note that as long as the samples are reused across different $\boldf \in \mcF$ that $m' = r n$ samples from $\mcD$ suffices to compute the above expression. By Hoeffding's inequality, for any fixed $\boldf \in \mcF$, with probability at least $1 - \frac{\eps}{T}$
    \begin{equation*}
         \mathrm{Est}^{(\boldf)} = \Ex_{\bS \sim \mcD^n}[\mcA(\boldf(\bS))] \pm \eps.
    \end{equation*}
    By union bound, with probability at least $1 - \eps$, the above holds for every $\boldf \in \mcF$. If so, $\Estmax_{\dadd,\eta}$ has the desired accuracy by \Cref{eq:max-close}.\\
    
    Next, we prove the $M$ from \Cref{eq:M-from-max-better} suffices. By \Cref{cor:amax-sufficient}, it suffices to prove that 
    \begin{equation*}
        \Ex_{\bS\sim \mathcal{D}^M}\big[ \Obliviousmax_{\rho,\eta}(\mathcal{A},\mathcal{U}(\bS),n)\big] = \Obliviousmax_{\rho,\eta}(\mathcal{A},\mathcal{D},n)\pm 5 \eps,
    \end{equation*}
    and likewise for $\mathrm{Min}$ instead of $\mathrm{Max}$. Applying \Cref{eq:max-close} to both sides of the above equation, it is sufficient to prove that
    \begin{equation}
        \label{eq:compare-max}
        \Ex_{\bS\sim \mathcal{D}^M}\bigg[\max_{\boldf \in \mcF}\bigg\{ \underbrace{\Ex_{\bS_n \sim \mathcal{U}(\bS)^n}[\mcA(\boldf(\bS_n))]}_{\coloneqq g^{(\boldf)}(\bS)}\bigg\} \bigg]  = \max_{\boldf \in \mcF}\left\{ \Ex_{\bS \sim \mcD^n}[\mcA(\boldf(\bS))]\right\} \pm 3\eps.
    \end{equation}
    Fix a single $\boldf \in \mcF$. We'll use McDiarmid's inequality (\Cref{fact:mcdiarmid}) to say that $g^{(\boldf)}(\bS)$ concentrates around its mean. Take any $S \in \mcX^M$ and suppose we change one point in it to create $S'$. Then $|g^{(\boldf)}(S) - g^{(\boldf)}(S')|$ is at most the probability that the changed point appears in $\bS_n$, which is at most $\frac{n}{M}$. Applying McDiarmid's inequality,
    \begin{align*}
        \Prx_{\bS \sim \mcD^m}\left[g^{(\boldf)}(\bS) = \mu^{(\boldf)} \pm \eps \right] &\geq 1 - 2\exp\left(- \frac{2\eps^2}{(n/M)^2M}\right) \tag*{where $\mu^{(\boldf)} \coloneqq \Ex_{\bS \sim \mcD^M}\left[g^{(\boldf)}(\bS)\right]$} \\
        &\geq 1- \frac{\eps}{|\mcF|}. \tag{using $M = O\left(\frac{n^2\log(|\mcF|/\eps)}{\eps}\right)$}
    \end{align*}
    By union bound, with probability at least $1 - \eps$, for every $\boldf \in \mcF$ we have that $g^{(\boldf)}(\bS) = \mu^{(\boldf)} \pm \eps$ allowing us to bound the left hand side of \Cref{eq:compare-max},
    \begin{align*}
        \Ex_{\bS\sim \mathcal{D}^M}\left[\max_{\boldf \in \mcF} \left\{ g^{(\boldf)}(\bS)\right\} \right] &= \max_{\boldf\in \mcF} \left\{ \mu^{(\boldf)}\right\}  \pm \eps \pm \Prx_{\bS \sim \mcD^m}\left[g^{(\boldf)}(\bS) \neq \mu^{(\boldf)} \pm \eps \text{ for some $\boldf \in \mcF$}\right] \\
        &= \max_{\boldf \in \mcF} \left\{ \mu^{(\boldf)}\right\}  \pm 2\eps.
    \end{align*}
    Lastly, we want to compare the above to the right hand side of \Cref{eq:compare-max} to $\max_{\boldf \in \mcF} \mu^{(\boldf)}$. Fix any $\boldf \in \mcF$. Using the notation of \Cref{lem:sample-small-tv},
    \[
        \mu^{(\boldf)} = \Ex_{\bS \sim \Phi_{M\to n}\circ \mathcal{D}^M} \left[\mcA(\boldf(\bS))\right].
    \]
    Therefore,
    \begin{align*}
        \left|\max_{\boldf \in \mcF} \left\{ \mu^{(\boldf)}\right\}  - \max_{\boldf \in \mcF}\left\{ \Ex_{\bS \sim \mcD^n}[\mcA(\boldf(\bS))]\right\} \right| &\leq  
        \max_{\boldf \in [\mcF]} \left| \mu^{(\boldf)} - \Ex_{\bS \sim \mcD^n}[\mcA(\boldf(\bS))]\right| \\
        &\leq \TV(\mcD^n, \Phi_{M\to n}\circ \mathcal{D}^M)\tag{\Cref{def:tv-distance}} \\
        & \leq \frac{\binom{n}{2}}{M} \leq \eps.\tag{\Cref{lem:sample-small-tv}}
    \end{align*}
    Therefore, the left hand term of \Cref{eq:compare-max} is within $\pm 2\eps$ of $\max_{\boldf \in \mcF} \mu^{(\boldf)}$ and the right hand term is within $\pm \eps$ of $\max_{\boldf \in \mcF} \mu^{(\boldf)}$. Hence, \Cref{eq:compare-max} holds, completing this proof.
\end{proof}

In order to use \Cref{lem:special-estimator-to-equiv}, we need to design $\mcF$ and prove that it satisfies \Cref{eq:max-close} when $\rho = \dadd$.  The below lemma completes the proof of \Cref{thm:add-formal}.

\begin{lemma}
    \label{lem:design-add-estimator}
    Fix a budget $\eta \geq 0$,  $\eps \in (0, 1]$, and algorithm $\mcA: \mcX^n \to \zo$. Let $\mcF$ be the set of $|\mcX|^{n^2/\eps}$ stochastic functions defined below.
    \begin{align*}
        \mcF \coloneqq \{\boldf^{(T)} \mid T \in \mcX^{n^2/\eps}\} \quad \quad \text{where}\quad \boldf^{(T)}(x) \coloneqq \begin{cases}
        x & \text{with probability $1 - \eta$} \\ \by
        \text{ where $\by \sim \mcU(T)$} & \text{with probability $\eta$.}
        \end{cases}
    \end{align*}
    Then for any $\mcD$ over $\mcX$,
     \begin{align*}
        \max_{\boldf \in \mcF}\left\{ \Ex_{\bS \sim \mcD^n}[\mcA(\boldf(\bS))]\right\}  &= \Obliviousmax_{\dadd,\eta}(\mathcal{A},\mcD,n) \pm \eps \\
        &= \sup_{\wh{\mcD} = (1 - \eta)\cdot\mcD + \eta \cdot \mcE}\left\{ \Ex_{\bS \sim \wh{\mcD}^n}[\mcA(\bS)]\right\} \pm \eps,
    \end{align*}
    and likewise for $\mathrm{Min}$ instead of $\mathrm{Max}$.
\end{lemma}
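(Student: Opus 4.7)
I would prove the two inequalities $\max_{\boldf\in\mcF}\{\cdot\} \leq \Obliviousmax_{\dadd,\eta}(\mcA,\mcD,n)$ and $\max_{\boldf\in\mcF}\{\cdot\} \geq \Obliviousmax_{\dadd,\eta}(\mcA,\mcD,n) - \eps$ separately, with the Min case following by an identical argument. The easy direction is immediate: for any $T \in \mcX^{n^2/\eps}$, applying $\boldf^{(T)}$ element-wise and independently to $\bS \sim \mcD^n$ produces an iid sample from the mixture $(1-\eta)\mcD + \eta\mcU(T)$, which is itself a valid $(\dadd,\eta)$-oblivious corruption of $\mcD$. Hence every term in the max is bounded above by $\Obliviousmax_{\dadd,\eta}(\mcA,\mcD,n)$.

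For the reverse direction, fix an arbitrary distribution $\mcE$ on $\mcX$ witnessing (up to $\eps$) the supremum defining $\Obliviousmax_{\dadd,\eta}(\mcA,\mcD,n)$, and set $m \coloneqq n^2/\eps$. Using the shorthand $\wh{\mcD}_{\mathcal{G}} \coloneqq (1-\eta)\mcD + \eta \mathcal{G}$, I want to exhibit a deterministic $T \in \mcX^m$ with $\Ex_{\bS \sim \wh{\mcD}_{\mcU(T)}^n}[\mcA(\bS)] \geq \Ex_{\bS \sim \wh{\mcD}_{\mcE}^n}[\mcA(\bS)] - \eps$. By a standard averaging argument, it is enough to prove the inequality in expectation over $\bT \sim \mcE^m$, which reduces the problem to bounding the TV distance between two procedures for generating a size-$n$ sample: (i) draw $n$ iid points from $\wh{\mcD}_{\mcE}$; and (ii) first draw $\bT \sim \mcE^m$ and then draw $n$ iid points from $\wh{\mcD}_{\mcU(\bT)}$. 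I would couple these two processes exactly as in \Cref{lem:sample-small-tv}: on each coordinate, use the same $\mcD$-vs-$\mcE$ coin flip, and on each ``$\mcE$'' outcome pick a uniform index $\bj \in [m]$ and emit $\bT_{\bj}$ in both processes. Provided the at most $n$ sampled indices are pairwise distinct, the two processes emit identical samples, and a union bound shows the collision probability is at most $\binom{n}{2}/m \leq \eps$, so the TV distance is at most $\eps$.

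The main subtlety is the direction of the coupling: process (ii) can only reproduce an iid sample from $\wh{\mcD}_{\mcE}$ when no index into $\bT$ is reused, since reusing an entry of $\bT$ would introduce correlations absent from a genuinely iid sample. The choice $m = n^2/\eps$ is calibrated precisely so that the expected number of collisions stays below $\eps$. Once the coupling is in place, the $\eps$-bound on TV distance transfers directly to the $\zo$-valued acceptance probability of $\mcA$, a standard averaging step extracts a good deterministic $T \in \mcX^m$, and taking suprema over $\mcE$ on the right-hand side and over $T$ on the left-hand side yields the claimed $\pm \eps$ approximation. The Min statement follows by applying the same argument to $1 - \mcA$.
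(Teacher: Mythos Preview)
Your proposal is correct and follows essentially the same route as the paper: the easy direction is identical, and for the hard direction both arguments fix $\mcE$, average over $\bT\sim\mcE^{n^2/\eps}$, and reduce to the TV bound $\TV(\Phi_{n^2/\eps\to n}\circ\mcE^{n^2/\eps},\mcE^n)\le\binom{n}{2}/(n^2/\eps)\le\eps$. The only cosmetic difference is that the paper packages the ``same $\mcD$-vs-$\mcE$ coin flip'' step as a stochastic post-processing map $\bh$ and invokes the data-processing inequality (\Cref{fact:stochastic-tv}) together with \Cref{lem:sample-small-tv}, whereas you describe the resulting coupling directly.
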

\begin{proof}
    Fix any distribution $\mcD$. First, we note that the distribution of $\boldf^{(T)}(\bx)$ where $\bx \sim \mcD$ is simply that of $(1 - \eta) \cdot \mcD + \eta \cdot \mcU(T)$. Therefore, one direction of the desired result is easy:
    \begin{align*}
        \max_{\boldf \in \mcF}\left\{ \Ex_{\bS \sim \mcD^n}[\mcA(\boldf(\bS))]\right\}  \leq \Obliviousmax_{\dadd,\eta}(\mathcal{A},\mcD,n) .
    \end{align*}
    The remainder of this proof is devoted to proving the left hand side of the above equation is at most $\eps$ smaller than the right hand side. Fix any distribution $\mcE$ and consider $\wh{\mcD} = (1 - \eta)\cdot\mcD + \eta \cdot \mcE$. We'll show that
    \begin{align}
        \label{eq:expected-good}
        \Ex_{\bT \sim \mcE^{n^2/\eps}}\left[\Ex_{\bS \sim \mcD^n}[\mcA(\boldf^{\bT}(\bS))]\right] \geq \Ex_{\bS \sim \wh{\mcD}^n}[\mcA(\bS)] - \eps.
    \end{align}
    In particular, the above implies there is a single choice for $T$ that is within $\eps$ of $\Ex_{\bS \sim \wh{\mcD}^n}[\mcA(\bS)]$. As this holds for all corruptions $\wh{\mcD}$, it implies the desired result.
    
    %Recall that the distribution of $\boldf^{\bT}(\bS)$ when $\bS \sim \mcD^n$ is equivalent to that of $\wh{\mcD}^n$ where $\wh{\mcD} = (1-\eta)\cdot \mcD + \eta \cdot \mcU(T)$.
    
    To sample from $\wh{\mcD}^n$ where $\wh{\mcD} =  (1 - \eta) \cdot \mcD + \eta \cdot \mcE$ we can first draw $\bS \sim \mcE^n$ and then return $\bh(\bS)$, applied element wise, where:
    \begin{align*}
        \bh(x) \coloneqq \begin{cases}
        \text{$\by$ where $\by \sim \mcD$} & \text{with probability $1 - \eta$} \\
        \text{$x$} & \text{with probability $\eta$.}
        \end{cases}
    \end{align*}
    Therefore, the distribution of $\boldf^{\bT}(\bS)$ on the left hand side of \Cref{eq:expected-good} is $\bh \circ \Phi_{n^2/\eps \to n} \circ \mcE^{n^2/\eps}$ (using the notation of \Cref{lem:sample-small-tv} and \Cref{fact:stochastic-tv}). Finally, we prove \Cref{eq:expected-good} (recalling that $\wh{\mcD} = (1 - \eta)\cdot\mcD + \eta \cdot \mcE$)
    \begin{align*}
        \left|\Ex_{\bT \sim \mcE^{n^2/\eps}}\left[\Ex_{\bS \sim \mcD^n}[\mcA(\boldf^{\bT}(\bS))]\right] - \Ex_{\bS \sim \wh{\mcD}^n}[\mcA(\bS)]\right| &\leq \TV(\bh \circ \Phi_{n^2/\eps \to n} \circ \mcE^{n^2/\eps}, \bh \circ \mcE^n) \tag{\Cref{def:tv-distance}} \\
        &\leq \TV( \Phi_{n^2/\eps \to n} \circ \mcE^{n^2/\eps}, \mcE^n) \tag{\Cref{fact:stochastic-tv}} \\
        & \leq \eps. \tag{\Cref{lem:sample-small-tv} }
    \end{align*}
\end{proof}

\section{\violet{Proof of~\Cref{thm:greg}: Lower bounds against the subsampling filter}}
\label{sec:greg}

\violet{
In this section, we show a lower bound on  $m$ needed for the subsampling filter to work.   Our lower bound holds in the setting of additive noise and therefore also shows that the dependence on $|\mathcal{X}|$ in~\Cref{thm:add-formal} is optimal. 
}

% \lnote{Delete the following if the above is ok.} 
% \gray{ We do this by constructing an example where subsampling fails to neutralise the adaptive adversary if $m$ is below this bound.  In our example, we focus on the additive corruption model: an oblivious $\eta$-additive adversary chooses an arbitrary distribution $\mcE$ over $\mcX$, so that the algorithm receives samples from the distribution $\wh{\mcD} = (1 - \eta)\mcD + \eta \mcE$. An $\eta$-additive adaptive adversary makes corruptions as follows: upon a request for $n$ samples from $\mcD$, the adversary returns $(1 - \eta) n$ points sampled from $\mcD$ and $\eta n$ other arbitrary points from $\mcX$.\footnote{\violet{Note that this is slightly different from \Cref{def:additive-cont} in which the adversary add $n \eta/(1-\eta)$ points to a sample of size $n$. They can be made equivalent by renaming $n' = (1- \eta)n$.}
% }}

\begin{theorem}[Formal version of~\Cref{thm:greg}]
    \label{thm:subsample}
    For any sample size $n$, domain $\mcX$ with $|\mcX| = 2^d$ for an integer $d$, adversary budget $\eta$, and $\eps > 0$, there exists an algorithm $\mcA:\mcX^n \to \zo$ and corresponding subsampled algorithm $\mcA_{\mathrm{sub}} \coloneqq \mcA \circ \Phi_{* \to n}$ for which $\mcA$ in the presence of $(\dadd,\eta)$-oblivious adversaries is \emph{not} $(n, m, 1-\eps)$-equivalent to $\mcA_{\mathrm{sub}}$ in the presence of $(\dadd,\eta)$-adaptive adversaries for any $m =O_\eta(n \log |\mcX| / \log^2 n)$.
    %that, in the presence of $(\dadd, \eta)$-oblivious adversaries, is not $(n, m, 1-\eps)$-equivalent to $\mcA_{\mathrm{sub}} \coloneqq \mcA \circ \Phi_{* \to n}$ in the presence of $(\dadd,\eta)$-adaptive adversaries for any $m =O_\eta(n \log |\mcX| / \log^2 n)$
\end{theorem}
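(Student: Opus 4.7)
The goal is to exhibit an algorithm $\mcA:\mcX^n \to \zo$ and a distribution $\mcD$ over $\mcX$ such that $\mcA$ under oblivious additive noise is \emph{not} $(n,m,1-\eps)$-equivalent to the subsampled $\mcA_{\mathrm{sub}} \coloneqq \mcA \circ \Phi_{*\to n}$ under adaptive additive noise, for any $m$ in the claimed range. The intuition is that a subsampling filter only neutralizes adaptivity once $m$ is large enough that the adversary's coordinated plants in the size-$m$ sample become sufficiently diluted to behave like i.i.d.~draws from a sample-oblivious distribution; the theorem pins down how large $m$ must be.

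My plan is to take $\mcD$ to be the uniform distribution on $\mcX = \{0,1\}^d$ and to design $\mcA(S)$ to detect a planted combinatorial signature that an adaptive adversary can create by coordinating poison with the clean sample, but that no oblivious $\mcE$ can synthesize. A natural template: fix a structured set $V \subseteq \mcX \setminus \{0\}$, and have $\mcA(S)$ accept iff $S$ contains at least $k$ pairs $(x,y)$ with $x \oplus y \in V$, where $|V|$ and $k$ are tuned to the desired separation. The analysis then has three parts.
\begin{enumerate}
\item \emph{Clean rejection.} Under $\bS \sim \mcD^n$ the expected number of special pairs is $\binom{n}{2}|V|/|\mcX| = o(k)$ for appropriate parameters, so $\mcA(\bS) = 0$ with high probability.
\item \emph{Oblivious rejection.} For any $\wh{\mcD} = (1-\eta)\mcD + \eta\mcE$, each clean--poison cross-pair satisfies $x \oplus y \in V$ with probability exactly $|V|/|\mcX|$ regardless of $\mcE$, by the uniformity of $\mcD$. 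The more delicate case is all-poison pairs, which must be controlled by choosing $V$ carefully (e.g., forbidding $0$ and ruling out short XOR-orbits on $\mathrm{supp}(\mcE)$); a first-moment bound then yields at most $o(k)$ special pairs in expectation for any $\mcE$.
\item \emph{Adaptive attack under subsampling.} Given $\bS \in \mcX^m$, the adversary plants $\eta m/(1-\eta)$ points of the form $x_i \oplus v_i$ with $x_i \in \bS$ and $v_i \in V$. After $\Phi_{m\to n}$ selects $n$ of the $m/(1-\eta)$ corrupted points, each planted pair survives with probability $\Theta((n/m)^2)$, giving $\Theta(\eta n^2/m)$ surviving special pairs in expectation, which exceeds $k$ throughout the target range of $m$; a second-moment computation gives concentration.
\end{enumerate}

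The main obstacle is tightening the construction so that the separation threshold matches the claimed $n\log|\mcX|/\log^2 n$ rather than the coarser $\Theta(n^2/\log n)$ obtained from the simple pair template above; in particular, the naive bound is insufficient in the regime $d \gg n\log n$. The refinement I would pursue is to replace the single pair-detector with an aggregate of $\Theta(\log n)$ approximately-independent subtests, each acting on a disjoint block of $\Theta(\log|\mcX|/\log n)$ coordinates, so that the adversary must plant a signature whose total information content scales with $\log|\mcX|$ while still requiring $\Theta(\log^2 n)$ coordinated plants to survive subsampling. The technically subtlest piece of the refined analysis is verifying, via a union bound over the enlarged space of possible planted structures, that no oblivious $\mcE$ (including cleverly chosen multi-point supports) replicates the signature; this is where I expect the bulk of the effort to lie, since the simple symmetrization used in step 2 above no longer suffices for multi-point attacks and must be supplemented by an entropy-style argument tailored to the chosen code.
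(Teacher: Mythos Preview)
Your proposal has a substantive gap already at the basic level. The pair-counting test cannot be made robust against oblivious adversaries: for any nonempty $V$, the oblivious adversary can take $\mcE$ uniform on $\{x, x\oplus v\}$ for some $v\in V$, producing $\Theta(\eta^2 n^2)$ poison--poison special pairs in expectation regardless of how $V$ is chosen. Your parenthetical about ``ruling out short XOR-orbits on $\mathrm{supp}(\mcE)$'' cannot help, since $\mcE$ is chosen by the adversary after $V$ is fixed. This forces the acceptance threshold $k \gtrsim \eta^2 n^2$, but then your adaptive attack---which yields only $\Theta(\eta n^2/m)$ surviving pairs after subsampling---requires $m = O(1/\eta)$, which is vacuous. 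So the template fails outright, not merely with a suboptimal bound.

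The paper's construction sidesteps both of your difficulties via two ideas absent from the proposal. First, the test is a \emph{universal} condition: working over $\{\pm 1\}^d$, $\mcA$ accepts iff \emph{every} sample point has some partner with inner product at least $t$. This trivializes the oblivious analysis: with probability $1-\eta^n$ there is at least one clean point, and a uniformly random clean point has inner product $\ge t$ with any fixed point with probability at most $\exp(-t^2/2d)$ by Hoeffding, so a union bound over its $n-1$ neighbors suffices. Poison--poison structure is irrelevant. Second---and this is where the $\log|\mcX|$ factor enters---the adaptive adversary plants coordinate-wise \emph{majorities} of groups of $k$ clean points. Each such cluster center has expected inner product $\Theta(d/\sqrt{k})$ with every point in its group, so one plant serves $k$ clean points simultaneously rather than one. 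After subsampling, each subsampled point finds one of its $\Theta_\eta(k)$ good partners with probability $1 - (1-\Theta_\eta(k/m))^n$; balancing the constraints $k \gtrsim (m\log n)/n$ (so partners survive subsampling) and $k \lesssim d/\log n$ (so the Hoeffding bound $n\exp(-\Theta(d/k))$ is small) yields $m = O_\eta(nd/\log^2 n)$. Your one-to-one planting scheme cannot produce any $\log|\mcX|$ dependence, and the block-subtest refinement you sketch does not address this: it tightens the test but does nothing to make a single plant useful for many clean points at once.
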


\paragraph{Proof overview}
Without loss of generality, we can consider the domain to be the Boolean hypercube, $\mcX = \{\pm 1\}^d$. Otherwise, we could map the domain to the hypercube. For an appropriate threshold $t$, we'll define
\begin{align*}
    \mcA(x_1, \ldots, x_n) = \begin{cases}
        1 & \text{if for every $x_i$, there is an $x_j$ with $j \neq i$ s.t. $\langle x_i, x_j \rangle \geq t$,}\\
        0 & \text{otherwise.}
    \end{cases}
\end{align*}
Let $\mcD$ be uniform over $\mcX$. For any $\eps > 0$, $n$, and $d$, we'll show that there is a choice of $t$ such that:
\begin{enumerate}%[label=\arabic*.]
    \item \Cref{lem:subsample_oblv}: $\Obliviousmax_{\dadd,\eta}(\mathcal{A},\mathcal{D},n) \leq \violet{\eps/2}$, meaning, for any $\wh{\mcD} = (1 - \eta)\mcD + \eta \mcE$, it is the case that $\E_{\bS \sim \wh{\mcD}^n} [\mcA(\bS)] \leq \eps/2$.
    \item \Cref{lem:subsample_adapt}: $\Adaptivemax_{\dadd,\eta}(\mathcal{A},\mathcal{D},m) \geq 1 - \frac{\eps}{2}$ whenever $m =O_\eta(n \log |\mcX| / \log^2 n)$, meaning that for $\bS \sim \mcD^m$, the adaptive adversary can choose $\floor{m \cdot \eta/(1-\eta)}$ points $\bT$ to add to the sample for which
    \begin{align*}
         \E[\mcA_{\mathrm{sub}}(\wh{\bS})] \geq 1 - \lfrac{\eps}{2}\quad \text{where }\wh{\bS} = \bS \cup \bT.
    \end{align*}
\end{enumerate}
Together, these prove that $\mcA$ in the presence of $(\dadd,\eta)$-oblivious adversaries is not $(n, m, 1-\eps)$-equivalent to $\mcA_{\mathrm{sub}}$ in the presence of $(\dadd,\eta)$-adaptive adversaries.

\begin{lemma}\label{lem:subsample_oblv}
For any distribution $\mcE$ and $\wh{\mcD} = (1 - \eta)\mcD + \eta \mcE$,
\begin{align*}
    \Ex_{\bS \sim \wh{\mcD}^n} [\mcA(\bS)] < \eps/2.
\end{align*}
%Let $\mcA$ and $\mcD$ be defined as in \Cref{thm:subsample}. For any $\eta$-additive oblivious corruption $\wh{\mcD} = (1 - \eta)\mcD + \eta \mcE$, we have $\E_{\bS \sim \wh{\mcD}^n} [\mcA(\bS)] < \eps$.
\end{lemma}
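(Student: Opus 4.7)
The plan is to choose a threshold $t = \sqrt{2d \log(4n/\eps)}$ and argue that, in $\bS \sim \wh{\mcD}^n$, at least one ``clean'' point (i.e.~one drawn from the uniform component $\mcD$) is, with high probability, far from every other point in $\bS$ in the sense of inner product. Since $\mcA$ demands that \emph{every} point in $\bS$ have some partner with inner product at least $t$, a single isolated clean point already suffices to make $\mcA(\bS) = 0$.

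First, I would decompose $\bS$ by its role assignment. Because $\wh{\mcD} = (1-\eta)\mcD + \eta\mcE$, each $\bx_i$ is, independently across $i$, drawn from $\mcD$ with probability $1-\eta$ and from $\mcE$ with probability $\eta$. Let $\bI \subseteq [n]$ be the random set of indices for which $\bx_i$ came from the $\mcD$ component. Conditional on $\bI$, the clean points $\{\bx_i\}_{i \in \bI}$ are iid uniform over $\{\pm 1\}^d$, the adversarial points $\{\bx_i\}_{i \notin \bI}$ are iid from $\mcE$, and the two groups are jointly independent. The event $\bI = \emptyset$ has probability $\eta^n$, which is at most $\eps/4$ provided $n$ is large enough in terms of $\eta$ and $\eps$; for smaller $n$ the statement is either trivially true (e.g.~$n = 1$ makes $\mcA$ identically $0$) or can be handled by adjusting the choice of $t$.

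Next, on the event $\bI \neq \emptyset$, fix the distinguished clean index $i^\star = \min \bI$. The event $\mcA(\bS) = 1$ requires some $j \neq i^\star$ with $\langle \bx_{i^\star}, \bx_j\rangle \geq t$. Conditional on $\bI$, the vector $\bx_{i^\star}$ is uniform on $\{\pm 1\}^d$ and independent of $\bx_j$ for every $j \neq i^\star$. For any such $j$, conditioning on $\bx_j = y \in \{\pm 1\}^d$ reduces $\langle \bx_{i^\star}, y\rangle$ to a sum of $d$ iid Rademacher variables; Hoeffding's inequality then gives
\[
  \Pr\!\big[\langle \bx_{i^\star}, y\rangle \geq t\big] \leq \exp\!\left(-\frac{t^2}{2d}\right) = \frac{\eps}{4n},
\]
and taking expectation over $\bx_j$ preserves the bound. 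A union bound over the at most $n-1$ choices of $j$ yields $\Pr[\mcA(\bS) = 1 \mid \bI,\ \bI \neq \emptyset] \leq \eps/4$.

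Combining the two contributions gives $\Pr[\mcA(\bS) = 1] \leq \eta^n + \eps/4 \leq \eps/2$, as required. I do not foresee a real obstacle: the only slightly delicate point is that the Hoeffding bound on $\langle \bx_{i^\star}, \bx_j\rangle$ must apply whether $\bx_j$ is clean or adversarial, but both cases reduce uniformly to fixing the realized value of $\bx_j$ and invoking the independent uniform distribution of $\bx_{i^\star}$. The value of $t$ chosen here will of course need to be compatible with the requirements of the companion \Cref{lem:subsample_adapt}, but that is a separate consideration.
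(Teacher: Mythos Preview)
Your proposal is correct and follows essentially the same approach as the paper: split off the $\eta^n$ probability that every point is adversarial, then use Hoeffding's inequality on the inner product between a fixed clean point and each of the other $n-1$ points, followed by a union bound, to arrive at the bound $\eta^n + n\exp(-t^2/(2d))$. The only cosmetic difference is that you fix $t = \sqrt{2d\log(4n/\eps)}$ up front whereas the paper leaves $t$ as a free parameter and sets it in \Cref{lem:subsample_adapt} (to $t = \Theta(d/\sqrt{k})$, which is generally larger than your choice and hence still makes the bound here vanish); you already flag this compatibility issue yourself.
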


\begin{proof}
First, we note that for any $x' \in \mcX$ and a clean sample $\bx \sim \mcD$, the probability that $\langle \bx, x' \rangle \geq t$ is small: by Hoeffding's inequality,  $\Pr_{\bx \sim \mcD}[\langle \bx, x' \rangle \geq t] \leq \exp(-t^2/2d)$. Combining this with a simple union bound, we can show that the probability of even a single clean point forming a correlated pair in the sample is small:
\begin{align*}
    \Ex_{\bS\sim \wh{\mcD}^{n}} [\mcA(\bS)] 
    &\leq \eta^n + \Ex_{\bS\sim \wh{\mcD}^{n}} [\mcA(\bS) \mid \text{at least one clean point in $\bS$}]  \\
    &= \eta^n + \Ex_{\substack{\bx \sim \mcD \\ \bS \sim \wh{\mcD}^{n-1}}} [\mcA(\bS \cup \{\bx\})]  \\
    &\leq \eta^n + \Prx_{\substack{\bx \sim \mcD \\ \bS\sim \wh{\mcD}^{n-1}}} [\text{$\exists \bx' \in \bS$ with $\langle \bx, \bx' \rangle \geq t $}] \tag{weaken to just one clean point}\\
    &\leq \eta^n + (n - 1) \Prx_{\substack{\bx \sim \mcD \\ \bx' \sim \wh{\mcD}}} [\langle \bx, \bx' \rangle \geq t ] \tag{union bound over $S$}\\
    &\leq \eta^n + n \exp \paren{-\frac{t^2}{2d}} \tag{Hoeffding's\violet{, over the randomness of $\bx$}}.
\end{align*}

This will be \violet{vanishingly small} for the particular choice of $t$ determined in \Cref{lem:subsample_adapt}.
\end{proof}

\begin{lemma}\label{lem:subsample_adapt}
For any $m = O_\eta(nd/\log^2 n)$, there exists an adversarial strategy that, given $\bS \in \mcD^m$, chooses $\floor{m \cdot \eta/(1-\eta)}$ points $\bT$ to add to the sample for which
    \begin{align*}
         \E[\mcA_{\mathrm{sub}}(\wh{\bS})] \geq 1 - \eps/2 \quad \text{where }\wh{\bS} = \bS \cup \bT
    \end{align*}
%For any subsampling filter $\Phi_{m \to n}$, with $m \le O_\eta(nd/\log^2 n)$, and any sample $\bS$ from $\mcD^{(1 - \eta)m}$, there is a way for an adaptive adversary to add $\eta m$ points $\bT$ to $\bS$ so that $\E[\mcA(\Phi_{m \to n}(\wh{\bS}))] \geq 1 - \eps$, where $\wh{\bS} = \bS \cup \bT$.
\end{lemma}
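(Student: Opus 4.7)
\textbf{Proof proposal for Lemma~\ref{lem:subsample_adapt}.} The plan is to exhibit an explicit ``covering'' adversary: partition the clean sample into small groups, form the coordinate-wise majority $\bz_i$ of each group $B_i$, and pad the corrupted sample with many copies of each $\bz_i$. Each $\bz_i$ will act as a correlated ``hub'' for its group members so that, in the subsample, every clean $\bx \in B_i$ can pair with a $\bz_i$-copy, while two $\bz_i$-copies always pair. Without loss of generality identify $\mcX$ with $\{\pm 1\}^d$ where $d = \log_2 |\mcX|$, and set the threshold $t = C\sqrt{d\log(n/\eps)}$ from the oblivious bound in Lemma~\ref{lem:subsample_oblv}, the group size $s = c_1 d/\log(n/\eps)$ for a small constant $c_1$ to be determined, and $r = \lfloor \eta s/(1-\eta)\rfloor$ copies per majority. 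Partition $\bS$ into $g = \lfloor m/s\rfloor$ groups of size $s$ and define $(\bz_i)_j = \mathrm{sgn}(\sum_{\ell \in B_i} (\bx_\ell)_j)$. The adversary appends $r$ copies of each $\bz_i$ to $\bS$ to form $\hat{\bS}$; the number added is $rg \leq k = \lfloor m\eta/(1-\eta)\rfloor$, within budget.

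The argument splits into two probabilistic events. \emph{Step 1 (majorities are in-group hubs).} I would show that with probability at least $1 - \eps/4$ over $\bS \sim \mcD^m$, $\langle \bx, \bz_i\rangle \geq t$ holds for every $i$ and every $\bx \in B_i$. Fixing $i$ and $\bx \in B_i$, the per-coordinate contributions $\bx_j (\bz_i)_j$ are independent $\pm 1$-valued random variables with common mean $2p-1 = \Theta(1/\sqrt{s})$, where $p$ is the probability that a single $\pm 1$ coin agrees with the sign of the sum of $s$ such coins. Hence $\Ex[\langle \bx, \bz_i\rangle] = \Theta(d/\sqrt{s})$, and Hoeffding's inequality (applied across the $d$ independent coordinates) gives a deviation of $O(\sqrt{d\log(m/\eps)})$ with failure probability at most $\eps/(4m)$. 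Choosing $c_1$ small enough makes the mean bias $\Theta(d/\sqrt{s})$ exceed $t + O(\sqrt{d\log(m/\eps)})$, and a union bound over the $m$ pairs $(\bx, B_i)$ with $\bx \in B_i$ completes the step.

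\emph{Step 2 (subsample orbit counts behave).} Conditioning on Step 1, consider $\bS' = (\by_1,\ldots,\by_n)$ subsampled i.i.d.~from $\mcU(\hat{\bS})$. For each orbit $i$, let $C_i$ count the $\by_j$ that are clean points of $B_i$ and $Z_i$ the $\by_j$ that are copies of $\bz_i$; a direct computation of $|\hat{\bS}|=m/(1-\eta)$ gives $\Ex[Z_i] = \eta \lambda$ and $\Ex[C_i] = (1-\eta)\lambda$ where $\lambda \coloneqq ns/m$. Inspecting which pairs are $\geq t$-correlated under Step 1 (only same-orbit pairs), the adversary fails on orbit $i$ exactly when either (A) $Z_i = 0$ and $C_i \geq 1$ (a clean point with no $\bz_i$-partner), or (B) $Z_i = 1$ and $C_i = 0$ (a lonely $\bz_i$-copy). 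The binomial tails give $\Pr[\text{(A)}] \leq e^{-\eta \lambda}$ and $\Pr[\text{(B)}] \leq \eta \lambda e^{-\lambda}$. Summing over $g = m/s$ orbits and using the identity $g\lambda = n$ bounds the total subsample failure probability by $g e^{-\eta\lambda} + \eta n e^{-\lambda}$, which is $\leq \eps/4$ once $\lambda \geq C_\eta \log(n/\eps)$. Unpacking $\lambda = ns/m$ with $s = c_1 d/\log(n/\eps)$ translates this to $m \leq c_\eta \cdot nd/\log^2(n/\eps) = O_\eta(nd/\log^2 n)$, matching the hypothesis. On the joint good event every $\by_j \in \bS'$ has a same-orbit partner in $\bS'$, so $\mcA(\bS') = 1$ and $\Ex[\mcA_{\mathrm{sub}}(\hat{\bS})] \geq 1 - \eps/2$.

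The main obstacle I anticipate is the calibration in Step 1: the Hoeffding deviation $O(\sqrt{d\log(m/\eps)})$ must remain dominated by the mean bias $\Theta(d/\sqrt{s})$ uniformly over all $m$ clean points, which is precisely what forces $s = \Theta(d/\log(n/\eps))$ and supplies one factor of $\log n$ in the final bound on $m$; the second factor comes from the requirement $\lambda \geq \Omega_\eta(\log(n/\eps))$ in Step 2. A secondary point to note but not to handle separately is that stray cross-orbit correlations such as $\langle \bx, \bz_j\rangle \geq t$ with $\bx \notin B_j$, or $\langle \bz_i, \bz_j\rangle \geq t$ with $i \neq j$, can only help the adversary and so never break the argument.
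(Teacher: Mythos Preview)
Your adversarial construction is a genuine variant of the paper's. You partition $\bS$ into $g=\lfloor m/s\rfloor$ \emph{disjoint} groups and insert $r=\Theta_\eta(s)$ \emph{copies} of each group's majority; the paper instead spends its entire budget on $C=\Theta_\eta(m)$ \emph{distinct} majorities of size-$k$ chunks that \emph{overlap} via wrap-around, so every clean point sits in $\Theta_\eta(k)$ chunks. In both designs each clean point has $\Theta_\eta(s)$ (respectively $\Theta_\eta(k)$) candidate partners among the added points, and your case~(B) of a ``lonely'' $\bz_i$-copy is particular to the repeated-center design.

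There is, however, a real calibration gap in Step~1. You union-bound over all $m$ clean points, so you need the per-point Hoeffding failure to be at most $\eps/(4m)$; as you yourself write, the Hoeffding deviation is $O(\sqrt{d\log(m/\eps)})$. Dominating this by the mean $\Theta(d/\sqrt{s})$ forces $s=O(d/\log(m/\eps))$, \emph{not} $s=\Theta(d/\log(n/\eps))$ as you state. These agree only when $\log m=O(\log n)$, i.e.\ when $d=\poly(n)$. The lemma is stated for arbitrary $d$, and once $d$ is superpolynomial in $n$ one has $\log m\asymp\log d\gg\log n$; feeding $s=\Theta(d/\log m)$ through Step~2 then yields only $m=O_\eta\big(nd/(\log n\cdot\log d)\big)$, strictly weaker than the claimed $O_\eta(nd/\log^2 n)$. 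The paper avoids this by never conditioning on all $m$ correlations at once: it folds the Hoeffding tail directly into the subsample failure bound and union-bounds only over the $n$ subsample positions, giving the term $n\exp(-\Theta(d/k))$ rather than $m\exp(-\Theta(d/s))$. Your construction admits the same repair: drop Step~1 as a separate event, and in Step~2 bound the chance that a single subsampled clean point $\bx\in B_i$ lacks a partner by $\Pr[Z_i'=0]+\Pr[\langle\bx,\bz_i\rangle<t]$; after a union over the $n$ positions the second piece contributes $n\cdot(n/\eps)^{-\Theta(1/c_1)}$, and your choice $s=c_1 d/\log(n/\eps)$ then works for all $d$.

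A smaller omission: your $g=\lfloor m/s\rfloor$ groups of size exactly $s$ can leave up to $s-1$ clean points uncovered, and the expected number of appearances of these leftovers in the subsample is $\Theta(\lambda)=\Theta_\eta(\log(n/\eps))$, not negligible. Absorbing them into the last group (at most doubling its size, which changes its center's bias only by a constant factor) fixes this.
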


\begin{proof}
Let $C = \violet{\floor{m \cdot \eta/(1-\eta)}}$ be the number of points the adversary can add and denote the sample $\bS =  \{ \bx^{(1)}, \ldots \bx^{(m)}\}$. The adversary constructs $\bT = \{\by^{(1)}, \ldots, \by^{(C)}\}$ 
by setting each $\by^{(j)}$ to be the elementwise majority of $k$ chosen points from $\bS$ (where $k$ will be determined later). The idea is that $\by^{(j)}$ is a cluster center that will form a correlated pair with every one of these $k$ points, with high probability.

More formally, for each $j = 1, \ldots, C$, define $\bS^{(j)} = \{x^{(1 + (j-1)k \pmod{|\bS|})}, \ldots, x^{(jk \pmod{|\bS|})}\}$ to be the $j$-th chunk of $k$ points from $\bS$, with the indices wrapping around to the start of $\bS$ as necessary. We take $\by^{(j)}$ to be the elementwise majority\footnote{We can assume $k$ is odd for simplicity.} of the points in $\bS^{(j)}$:

\begin{equation}
    % y_{j} \coloneqq \sign\paren{x^{(j)}_{1} +  \ldots +  x^{(j)}_{k}}
    \by^{(j)}_{\ell} \coloneqq \underset{\bx \in S^{(j)}}{\text{Maj}} \{\bx_\ell\}\qquad \text{for $\ell = 1, \ldots, d$.}
    %\paren{(\bx^{(j)}_{1})_{\ell}, \ldots, (\bx^{(j)}_{k})_{\ell}} \qquad \text{for $\ell = 1, \ldots, d$}.    
\end{equation}

First, we note that for a given $\by^{(j)}$, with high probability, any point in $\bx \in \bS^{(j)}$ will have a large dot product with $\by^{(j)}$: 
\begin{align*}
    \Ex_{\bS^{(j)} \sim \mcD^k}[ \langle \bx, \by^{(j)} \rangle] &=  
    \sum_{\ell = 1}^d \Ex_{\bS^{(j)} \sim \mcD^k}\left[   \bx_{\ell} \by^{(j)}_{\ell} \right] \\
    &= \violet{ \sum_{\ell = 1}^d \paren{ \Prx_{\bS^{(j)} \sim \mcD^k}\left[   \bx_{\ell} = \by^{(j)}_{\ell} \right] - \Prx_{\bS^{(j)} \sim \mcD^k}\left[   \bx_{\ell} \neq \by^{(j)}_{\ell} \right]} }\\ 
    &= \violet{  \sum_{\ell = 1}^d \paren{ \Prx_{\bu \sim \text{Bin}(k-1, \frac{1}{2})}\left[  \bu \geq \frac{(k-1)}{2} \right] - \Prx_{\bu \sim \text{Bin}(k-1, \frac{1}{2})}\left[  \bu < \frac{(k-1)}{2} \right]} }\\
    &=\violet{  d \Prx_{\bu \sim \text{Bin}(k-1, \frac{1}{2})}\left[  \bu = \frac{(k-1)}{2} \right] }\\
    &=\sqrt{\frac{2}{\pi}}{\frac{d}{\sqrt{k}}}(1 \pm o(1)).
\end{align*}
Define $\mu \coloneqq (\sqrt{2/\pi})(d/\sqrt{k})$. We will take $t =  \mu/2$ as the threshold for $\mcA$, both here and in \Cref{lem:subsample_oblv} as well. For $\bS^{(j)} \sim \mcD^k$, with $\by^{(j)}$ the elementwise majority of $\bS^{(j)}$, and any $\bx \in S^{(j)}$, this gives:

\begin{align}
     \nonumber. \Prx_{\bS^{(j)} \sim \mcD^k}\left[ \langle \bx, \by^{(j)} \rangle < t \right] &=
     \Prx_{\bS^{(j)}  \sim \mcD^k}\left[ \langle \bx, \by^{(j)} \rangle < \frac{\mu}{2} \right]  \\
     \nonumber &\leq \exp\left[-\Theta\paren{\frac{\mu^2}{d}}\right] \tag{Hoeffding's inequality} \\
     &= \exp\left[-\Theta\paren{\frac{d}{k}}\right]. \label{eq:cluster_dot}
    %  &= 2\exp\left[\Theta\paren{\frac{-d}{k}}\right].
\end{align}

The subsampling filter $\Phi_{* \to n}$ takes a random subsample of size $n$ from $\wh{\bS} = \bS \cup \bT$ (with replacement). We want to show, with high probability over size $n$ subsamples $\bS' \sim \mcU(\wh{\bS})^n$, that $\mcA(\bS') = 1$. \violet{For any point $\bx \in \bS$, we say $\by^{(j)} \in \bT$ is ``good" for $\bx$ if $\bx$ was in the cluster used to compute $\by^{(j)}$, meaning $\bx \in \bS^{(j)}$. Similarly, for any $\by^{(j)}  \in \bT$, we say that $\bx \in \bS$ is ``good" for $\by^{(j)}$ is $\bx \in \bS^{(j)}$.} By construction, a given $\bx \in \bS$ participates in the computation of at least $\floor{Ck/m} = \Theta_{\violet{\eta}}(k)$ many cluster centers $\by^{(j)}$'s, \violet{and so for each $\bx \in \bS$, there are $\Theta_{\violet{\eta}}(k)$ good $\by\text{'s} \in T$. Similarly, there are exactly $k$ good $\bx\text{'s} \in S$ for each $\by \in T$}. As $\wh{\bS}$ has size $\Theta_{\eta}(m)$, for any $\bx \in \bS'$, using  $\bc_{\bx}$ to denote the number of good points for $\bx$ that show up in $\bS'$, we have that $\bc_{\bx}$ is distributed as $\text{Bin}(n, \Omega_\eta(k/m))$. This gives:
\begin{align*}
    \Prx_{\bS' \sim \mcU(\wh{\bS})^n}[\mcA(\bS') = 0] &= \Prx_{\bS' \sim \mcU(\wh{\bS})^n}[\text{$\exists \bx \in \bS'$ s.t. $\forall \bx' \in \bS', \langle \bx, \bx' \rangle < t$} ] \\ 
    &\leq n \Prx_{\bx, \bS' \sim \mcU(\wh{\bS})^n}[\text{$\forall \bx' \in \bS', \langle \bx, \bx' \rangle < t$} ] \tag{union bound} \\
    &\leq n \left[\Pr[\bc_{\bx} = 0] +  \Prx_{\bx, \bS' \sim \mcU(\wh{\bS})^n}[\text{$\forall \bx' \in \bS', \langle \bx, \bx' \rangle < t$} \mid \bc_{\bx} \geq 1]\right]  \\
    &\leq n \left[\Pr[\bc_{\bx} = 0] + \Prx_{\bx, \bS' \sim \mcU(\wh{\bS})^n}[\text{$\langle \bx, \by_{\bx} \rangle < t$]} \right]  \tag{weaken to good point} \\
    &\leq n \paren{1 - \Theta_\eta\paren{\frac{k}{m}}}^n + n \exp\left[-\Theta_\eta\paren{\frac{d}{k}}\right].   \tag{from \Cref{eq:cluster_dot}}
\end{align*}

\end{proof}

\paragraph{Setting of parameters.} 

To complete the proof of \Cref{thm:subsample}, we need to choose $m$ and $k$ so that, all the terms in \Cref{lem:subsample_oblv} and \Cref{lem:subsample_adapt} are vanishingly small. In particular, we need $n \paren{1 - \Theta_\eta\paren{{k}/{m}}}^n \to 0$ and $ n \exp\paren{-\Theta_\eta\paren{{d}/{k}}} \to 0$ as $n \to \infty$. If $k = \Omega_\eta(m \log n / n)$ and $d = \Omega_\eta(k \log n) = \Omega_\eta(m \log^2 n /n )$ with sufficiently large constant factors, we get the desired result. In other words, as long as $m \le O_\eta(nd/\log^2 n)$,  the adaptive adversary is stronger than the oblivious adversary.

% To complete the proof\gnote{I think we can allow $n$, $d$, and $\eps$ to be arbitrary and then set $m$ and $t$ as a function of those.}\anote{good idea} of \Cref{thm:subsample}, we need to choose $k$ and $d$  to make all the terms in \Cref{lem:subsample_oblv} and \Cref{lem:subsample_adapt} vanishingly small. In particular, we need $n \paren{1 - \Theta\paren{{k}/{m}}}^n \to 0$ and $ n \exp\paren{-\Theta\paren{{d}/{k}}} \to 0$ as $n \to \infty$. Taking $k = \Omega(m \log n / n)$ and $d = \Omega(k \log n) = \Omega(m \log^2 n /n )$ with sufficiently large constant factors gives the desired result. In other words, as long as $m < \alpha(nd/\log^2 n)$, for some constant $\alpha$,  the adaptive adversary is stronger than the oblivious adversary.

\section{Conclusion} 

% Through this work we hope to initiate the systematic study of the power of adaptivity in statistical adversaries.  A concrete direction for future work is to answer~\Cref{question} for other broad classes of algorithms and natural noise models, either via the subsampling filter (\Cref{conj:bold}) or otherwise.  Here we highlight the specific case of subtractive noise: having resolved the case of additive noise in this work, doing so for subtractive noise as well would be a significant step towards resolving~\Cref{question} for all three generic noise models described in~\Cref{sec:generic}.

% More speculatively, we suggest that there may be interesting connections between our work and  adaptive data analysis~\cite{DFHPRR15,HU14,SU15,BNSSSU21}.  The focus of our work is on the adaptivity of the adversary, whereas the focus of this line of work is on the adaptivity of the algorithm.  Does our result for statistical queries (\Cref{thm:sq-formal}) also hold for the more general notions of low-sensitivity and optimization queries studied in this literature?  Can techniques from this literature be used to improve the dependence on $k$ in~\Cref{thm:sq-formal}? 

\section*{Acknowledgements}

We thank Adam Klivans and Greg Valiant for helpful conversations.  We are grateful to Greg for allowing us to include~\Cref{thm:greg} which was proved jointly with him.   

Guy and Li-Yang are supported by NSF CAREER Award 1942123. Jane is
supported by NSF Award CCF-2006664. Ali is supported by a graduate fellowship award from Knight-Hennessy Scholars at Stanford University.

\bibliographystyle{alpha}
\bibliography{main}

\appendix
\section{Other standard noise models}
\label{sec:other-noise-models}
Here, we list other standard noise models and show they fall under our framework.
\begin{definition}[Nasty classification noise~\cite{BEK02}]
    Let the domain be $\mcX = X \times Y$. Given a size-$n$ sample $S \in \mcX^n$ and a corruption budget $\eta$, the adaptive nasty classification noise adversary is allowed to choose $\floor{\eta n}$ points and for each one, change it from $(x,y)$ to $(x, \wh{y})$ for arbitrary $\wh{y} \in Y$.
\end{definition}

This model is captured by the cost function: 
\begin{align*}
    \mathrm{cost}_{\mathrm{agn}}(\mcD, \wh{\mcD}) = \begin{cases}
        \infty & \text{if $\mcD$ and $\mcD'$ do not have the same marginal distribution over $\mcX$} \\
        \TV(\mcD, \wh{\mcD}) & \text{otherwise}
    \end{cases}
\end{align*}

The $(\mathrm{cost}_{\mathrm{agn}}, \eta)$-oblivious adversary corresponds exactly to the well-studied agnostic learning model~\cite{Hau92,KSS94}. Hence, \Cref{thm:SQ} implies that nasty classification noise and the agnostic learning model are identical for SQ algorithms.

The final noise model that we discuss is defined with respect to an adversary that has intermediate adaptive power:%\gnote{One could also imagine a malicious adversary outside of learning.}
\begin{definition}[Malicious noise~\cite{Val85}]
    \label{def:mal-noise}
    In the malicious noise model where the adversary has corruption budget $\eta$, a sample is generated point-by-point. For each point, independently with probability $1 - \eta$, that point is $\bx \sim \mcD$. Otherwise, the adversary is allowed to make that point an arbitrary $x \in \mcX$ with knowledge of the previous points sampled but not the future points.
\end{definition}

\paragraph{On the relationship between malicious noise and additive noise.} The malicious noise adversary does not have full adaptivity, as when they decide what point to add, they only have knowledge of previous points sampled and not future ones. We now show how to encode the fully adaptive version of malicious noise, in which the adversary knows all points in the sample when deciding corruptions, in our  framework.

We first augment the domain to $\mcX' = \mcX \cup \{\varnothing\}$ where $\varnothing$ will be used to indicate the adversary can change this point arbitrarily. We then let $\mcD'$ be the distribution satisfying, for each $x \in \mcX$:
\begin{align*}
    \mcD' = (1 - \eta)\mcD + \eta \mcD_{\varnothing}
\end{align*}
where $\mcD_{\varnothing}$ is the distribution that always outputs $\varnothing$. We define the cost function to be
\begin{align*}
    \cost_{\mathrm{mal}}(\mcD', \wh{\mcD}') = \begin{cases}
    \infty & \text{if $\Prx_{\bx \sim \wh{\mcD}'}[\bx = \varnothing] > 0$} \\
    \infty & \text{if $\Prx_{\bx \sim \wh{\mcD}'}[\bx = x] < \Prx_{\bx \sim \wh{\mcD}}[\bx = x]$ for any $x \in \mcX$} \\
    0 &\text{otherwise.}
    \end{cases}
\end{align*}
Note that in order for $\wh{\mcD}'$ to be a valid corruption of $\mcD'$ (i.e. $\cost_{\mathrm{mal}}(\mcD', \wh{\mcD}') \neq \infty$), all of the probability mass of $\wh{\mcD}'$ must be over $\mcX$, with none on $\varnothing$. The above provides an encoding of an adaptive adversary that is at least as powerful as malicious noise. To further understand the corresponding oblivious adversary, we note that, after fixing $\eta$,
\begin{align*}
     \cost_{\mathrm{mal}}(\mcD', \wh{\mcD}') \neq \infty \quad\text{if and only if}\quad \wh{\mcD}' = (1-\eta)\mcD + \eta \mcE \quad \text{for some distribution $\mcE$}.
\end{align*}
Hence, the oblivious adversary corresponding to malicious noise is the same as the oblivious adversary for additive noise. \Cref{thm:add} implies that the adaptive and oblivious versions of additive noise, as well as malicious noise, are all equivalent.

\section{Technical remarks}
\label{sec:technical remarks} 

% \paragraph{Fixed budget vs. random budget.}  Consider $\rho = \TV$, which corresponds to the nasty noise model. Using our definition\lnote{reference}, given a size-$n$ clean sample $S$, the adaptive adversary can choose\anote{can also add points no?} any $\eta$-fraction of the points to change arbitrarily to create the corrupted sample $\wh{S}$. Often an alternative definition is used: the adversary is allowed to arbitrarily change $m$ points in $S$, as long as the marginal distribution of $m$ is $\mathrm{Bin}(n, \eta)$ when the sample is chosen $\bS \sim \mcD^n$. \cite{DKKLMS19,ZJS19} use that definition to show that the adaptive adversary can simulate any oblivious adversary. That is not strictly true for our definition of an adaptive adversary.\footnote{For example, suppose $\mcX = \zo$, $\mcD$ is the identically $0$ distribution, and $\eta = 0.1$. If a size-$10$ sample is taken, under our adaptive definition, $\wh{S}$ will never have more than a single $1$. However, the oblivious adversary can choose $\wh{\mcD}$ that returns $1$ with probability $0.1$ and there is a non-zero chance that two $1$'s appear in a size-$10$ sample.}

\paragraph{Fixed budget vs. \violet{variable} budget.}  Consider the nasty noise model (\Cref{def:strong-cont}), corresponding to $\rho = \TV$ in our framework. Given a size-$n$ clean sample $\bS$, the adaptive adversary can choose any $\eta$-fraction of the points to change arbitrarily to create the corrupted sample $\wh{\bS}$. Often an alternative definition is used where the adversary is allowed to arbitrarily change  $m$ points in $\bS$, where $m$ can vary based on the specific sample $\bS$, as long as the marginal distribution of $m$ over samples $\bS \sim \mcD^n$ is $\mathrm{Bin}(n, \eta)$. This definition is used by \cite{DKKLMS19,ZJS19} to show that the adaptive adversary can simulate any oblivious adversary. Technically, for our definition of an adaptive adversary with fixed budget $\eta$, this fact is not strictly true.\footnote{For example, suppose $\mcX = \zo$, $\mcD$ is the identically $0$ distribution, and $\eta = 0.1$. If a size-$10$ sample is taken, under our adaptive definition, $\wh{S}$ will never have more than a single $1$. However, the oblivious adversary can choose $\wh{\mcD}$ that returns $1$ with probability $0.1$ and as a result is a non-zero chance that two $1$'s appear in a size-$10$ sample.} However, all our results are readily extendable to these slightly stronger adaptive adversaries with random-budgets.

We define an adversary model which encompasses those adversaries with variable budgets. First, the adversary will generate a corrupted data set $\wh{\bS}$. Then, it is allowed to change some points in that set to create $\wh{\bS}'$ as long as, most of the time,  $\wh{\bS}'$ and $\wh{\bS}$ are close.

\begin{definition}[Strong $(\rho, \eta)$-adaptive adversary]
    \label{def:strong-adaptive}
    Given a cost function $\rho$ and budget $\eta$, an $n$-sample algorithm operating in the \emph{strong} $(\rho ,\eta)$-adaptive adversary model will receive as input the sample $\wh{\bS}'$ where
    \begin{enumerate}
        \item If the true data distribution is $\mcD$, first a clean sample $\bS \sim \mcD^n$ is generated.
        \item The adversary chooses a $\wh{\bS}$ satisfying $\rho(\mcU(\bS), \mcU(\wh{\bS})) \leq \eta$.
        \item The adversary chooses some $\wh{\bS}'$ where, over the randomness of the original sample and the adversaries decisions, the following holds.
        \begin{align}
            \label{eq:strong-adaptive}
        \Prx_{\wh{\bS}, \wh{\bS}'}\left[\TV(\mcU(\wh{\bS}), \mcU(\wh{\bS}')) \geq t\right] &\leq \exp(-O(nt^2)) \quad\quad\text{for all $t \in (0,1)$}.
    \end{align}
    \end{enumerate}
    % . If the true data distribution is $\mcD$, first a clean sample $\bS \sim \mcD^n$ is generated, then the adversary chooses a $\wh{\bS}$ satisfying $\rho(\mcU(\bS), \mcU(\wh{\bS}))$, and finally the adversary chooses some $\wh{\bS}'$ to pass to the algorithm where, over the randomness of the original sample and the adversary
    % \begin{align*}
    %     \Prx_{\wh{\bS}, \wh{\bS}'}\left[\TV(\wh{\bS}, \wh{\bS}') \geq t\right] \leq \exp(-O(nt^2)) \quad\quad\text{for all $t \in (0,1)$}.
    % \end{align*}
\end{definition}

All of our upper bounds on the strength of adaptive adversaries also apply to the strong adaptive adversary. See \Cref{remark:SQ-strong} for changes in the proof needed for \Cref{thm:SQ}. For \Cref{thm:add,thm:if-can-be-done}, we can use an even weaker restriction on the adversary and only require that
\begin{align}
    \label{eq:strong-adaptive-sqrt}
    \Ex_{\wh{\bS}, \wh{\bS}'}\left[\TV(\mcU(\wh{\bS}), \mcU(\wh{\bS}')) \right] &= O(1/\sqrt{n})
\end{align}
in place of \Cref{eq:strong-adaptive}. See \Cref{remark:strong-adaptive-subsampling} for how \Cref{eq:strong-adaptive-sqrt} can be used to make \Cref{thm:add,thm:if-can-be-done} work with strong adaptive adversaries.

\paragraph{Finite vs infinite domains.}
For our analyses, we assume that the domain, $\mcX$, is finite. Our goal is to understand when an algorithm that succeeds in the presence of an oblivious adversary implies an algorithm that succeeds in the presence of an adaptive adversary. Any algorithm that succeeds in the presence of an oblivious adversary can only read finitely many bits of each data point, effectively discretizing the domain.

That said, \Cref{thm:SQ} also applies to infinite domains. If the domain is infinite, a more general definition of closed under mixtures is required in place of the simpler \Cref{def:closed-under-mixtures}
\begin{definition}[Closed under mixtures, infinite domain]
    \label{def:closed-under-mixtures-infinite}
    We say that $\rho$ is \emph{closed under mixtures} if for any distributions $\mcD, \mcD'$, and coupling of $\bx \sim \mcD$, $\bx' \sim \mcD'$ and a latent variable $\bz$ (over any domain),
\[ 
    \rho(\mcD, \mcD') \leq \sup_{\bz} \left\{ \rho((\bx \mid \bz), (\bx' \mid \bz))\right\}.
   \] 
\end{definition}
When the domain is finite, \Cref{def:closed-under-mixtures,def:closed-under-mixtures-infinite} are equivalent. When it is infinite, \Cref{def:closed-under-mixtures-infinite} is needed to prove \Cref{fact:sample-to-dist}. The remainder of the proof of \Cref{thm:SQ} is identical.

\section{Proof of \Cref{fact:sample-to-dist}}

\label{appendix:sample-to-dist}
\begin{proof}
    By a standard inductive argument, \Cref{def:closed-under-mixtures} implies that for any $m \in \N$, weights $\theta_1, \ldots, \theta_m \geq 0$ summing to $1$ and distributions $\mcD_1, \ldots, \mcD_m, \wh{\mcD}_1, \ldots, \wh{\mcD}_m$, that
    \begin{align*}
        \rho \left(\sum_{i \in [m]}\theta_i \mcD_i,  \sum_{i \in [m]}\theta_i \wh{\mcD}_i\right) \leq \max_{i \in [m]}\left\{\rho\left(\mcD_i, \hat{\mcD}_i\right)\right\}.
    \end{align*}
    The distribution $\mcD$ and $\hat{\mcD}$ can be written as the mixtures
    \begin{align*}
        \mcD &= \sum_{S \in \mcX^n}\Prx_{\bS \sim \mcD^n}[\bS = S] \mcU(S),\\
        \wh{\mcD} &= \sum_{S \in \mcX^n}\Prx_{\bS \sim \mcD^n}[\bS = S] \mcU(\wh{S}).
    \end{align*}
    Since $\rho(\mcU(S), \mcU(\hat{S})) \leq \eta$ for every $S \in \mcX^n$, we can conclude $\rho(\mcD, \wh{\mcD}) \leq \eta$.
\end{proof}

\end{document}